\documentclass[12pt,reqno]{amsart}

\usepackage[T1]{fontenc}
\usepackage[utf8]{inputenc}

\usepackage{amsmath,amssymb,amsthm}
\usepackage{amsfonts,mathtools}
\usepackage{bm}
\usepackage{physics} 
\usepackage{bbm}
\usepackage{newtxtext,newtxmath} 
\usepackage[margin=1in]{geometry}
\usepackage{graphicx}
\usepackage{subcaption}
\usepackage{booktabs}
\usepackage{multirow}
\usepackage{algorithm,algorithmic}
\usepackage{blkarray}
\usepackage{enumerate}
\usepackage{tikz-cd}
\usepackage{comment}
\usepackage{siunitx}
\usepackage{makecell}

\usepackage[round,authoryear]{natbib}

\usepackage[colorlinks=true,allcolors=blue]{hyperref}

\usepackage[capitalize,nameinlink]{cleveref}

\newcommand{\ind}{\perp\!\!\!\perp}
\newcommand{\K}{\mathbb{K}}
\newcommand{\Z}{\mathbb{Z}}
\newcommand{\R}{\mathbb{R}}

\newtheorem{theorem}{Theorem}[section]
\newtheorem{lemma}[theorem]{Lemma}
\newtheorem{proposition}[theorem]{Proposition}
\newtheorem{corollary}[theorem]{Corollary}

\theoremstyle{definition}
\newtheorem{definition}[theorem]{Definition}
\newtheorem{example}[theorem]{Example}
\newtheorem{remark}[theorem]{Remark}

\numberwithin{equation}{section}

\DeclareMathOperator{\rowspan}{rowspan}

\DeclareMathOperator{\graver}{Gr}

\DeclareMathOperator{\im}{im}
\title{Algebraic Signatures for Structural Learning in Probability Tensors}

\author{Akihiro Maeda}
\address{Graduate School of Arts and Sciences, The University of Tokyo, Tokyo, Japan}
\email{akihiromaeda@e.gcc.u-tokyo.ac.jp}

\author{Shohei Hidaka}
\address{Japan Advanced Institute of Science and Technology, Nomi, Ishikawa, Japan}
\email{shhidaka@jaist.ac.jp}

\author{Satoshi Aoki}
\address{Faculty of Data Science, Shiga University, Hikone, Shiga, Japan}
\email{satoshi-aoki@biwako.shiga-u.ac.jp}

\subjclass[2020]{Primary 62R01; Secondary 62H22, 15A03}
\keywords{Algebraic statistics, computational linguistics, Kronecker-stack class,  minimum invariant constraints, Graver bases}

\begin{document}

\begin{abstract}
Algebraic statistics characterizes statistical models through polynomial constraints, but it has mainly been used for analytically specified model classes.
This paper studies the inverse problem: identifying probabilistic structure from vanishing binomials observed in empirical probability tensors.
We treat the vanishing binomials of a toric model as its algebraic signature, and turn the ideal-variety correspondence of algebraic statistics into an operational procedure for structural learning that identifies a model by signature matching without parameter estimation. 
By restricting attention to a computationally tractable class of configuration matrices, which we call {\it the Kronecker-stack class}, we make these signatures explicitly enumerable. 
Within this class we define minimum invariant constraint (MIC) as the atomic unit characterizing each signature and generalizing the notion of independence.
We tested this approach employing MICs on synthetic data as well as on corpus-scale real language data.
The results suggested the utility of the method, revealing that the identified rank-one structures correspond to interpretable sets of words.
These results open up a new avenue for applying algebraic statistics to computational linguistics.
\end{abstract}

\maketitle

\section{Introduction}
\label{sec:introduction}
Algebraic statistics studies statistical models through algebraic varieties and polynomial constraints~\citep{drton2009lectures, aoki2012markov, sullivant2023algebraic}, using tools from algebraic geometry~\citep{cox1997ideals}.
Foundational works such as \cite{geiger2006toric} and \cite{rapallo2007toric} characterized graphical models, independence models and log-linear models through toric ideals and varieties, while \cite{drton2007algebraic} provided a unifying definition of algebraic statistical models (see \cite{kahle2018algebraic} for a concise algebraic overview).
A common approach of algebraic statistics is to proceed from a model to its characterization such as derivation of vanishing ideals or analysis of decomposability, which we describe as a \emph{forward problem}, referring to computation from models to data.
Thus, for statistical application, the operational use of toric characterization has largely centered on Markov bases for fiber sampling and exact testing of contingency tables, though practical implementations are limited due to their computational complexity  ~\citep{hara2012running,alexandr2024decomposable}.

Large-scale data, such as those arising in machine learning fields, are increasingly understood to concentrate on low-dimensional structures embedded in a high-dimensional ambient space, known as \emph{manifold hypothesis} \citep{Belkin06a, meilua2024manifold, whiteley2026statistical}.
Those observed phenomena are inherently geometrical, though insights from algebraic geometry are not yet fully exploited in their studies~\citep{marchetti2025position}.
To elucidate an underlying structure for such data, one must solve an \emph{inverse problem} in which one needs to infer some algebraic structure behind an observed geometric distribution, i.e., identifying an algebraic model that generated the data with inverted direction from data to a model with vanishing ideals.
In graphical modeling, structure learning methods have been developed to identify a model from data, 
typically by selecting a model through maximization of likelihood-based optimization (score-based search) or  conditional independence testing (constraint-based method)~\citep{spirtes2000causation,drton2017structure}. 
However, both methods are forward approaches in our view  since they posit one known candidate structure and test whether the data are consistent with it, thus incapable of identifying an unknown structure.
Also, tools of algebraic geometry such as vanishing binomials are not fully employed in existing structural learning.

Our approach is to revisit the ideal-variety bijective correspondence of algebraic geometry as the basis for identifying statistical model structures via their algebraic constraints.
Specifically, we propose to exploit vanishing binomials  as an algebraic signature of a generating model.
Since a toric model is uniquely defined by the  vanishing pattern of its ideals, 
we reformulate the structural learning problem as constraint-based signature matching without parameter estimation.
To make this operational, 
we employ a linear-algebraic representation of a toric model, and classify candidate structures by row-space equivalence of configuration matrices, placing various model families such as independence, hierarchical and context-specific models in one common algebraic space.
The advantages of our linear-algebraic representation arise from two properties: 
parameter invariance, which permits direct structural identification by reading off vanishing binomials as algebraic signatures, thereby circumventing the need for parameter optimization; and 
expressivity,  which captures higher-order interactions beyond the pairwise relations imposed by graphical models.

We introduce a computationally tractable theoretical framework built on two key results: 
a bijective correspondence between a class of toric models and their algebraic signatures consisting of Graver basis in Theorem~\ref{thm:bijectivity}, and 
an explicit enumeration of these signatures within a defined model class, the Kronecker-stack class in Theorem~\ref{thm:factorwise_integer_kernel_generation}.
We propose Algebraic Signature Analysis (ASA), a structural learning procedure operating in two modes: 
ASA-M for confirmatory model selection via signature matching, and 
ASA-D for exploratory detection of atomic structural units.
We validate ASA through two experiments: 
a proof-of-concept on synthetic discrete data demonstrating model selection purely from empirical vanishing patterns, and 
an application to large-scale linguistic data uncovering interpretable semantic structures, opening new application avenues for algebraic statistics.

Vanishing ideals have already been utilized in the machine learning community for representation learning: 
\cite{livni2013vanishing} proposed \emph{Vanishing Component Analysis (VCA)} as an algorithm to extract approximate polynomial constraints from continuous data, followed by \cite{Kera2018ApproximateVIA, Wirth2022} and others who proposed enhanced algorithms, 
whereas \cite{pelleriti2025approximating} integrated vanishing ideal algorithms into neural networks to approximate latent manifolds.
Although these prior studies assume that data classes correspond to disjoint algebraic varieties, they fundamentally utilize vanishing polynomials merely as a non-linear feature extraction step to make the data linearly separable, rather than aiming to recover the true geometric structure of the generative models. 
In contrast, our vanishing binomials arise combinatorially from 
the geometric structure of the generative model, determined by the duality between the row space and kernel of the configuration matrix.

In algebraic statistics, vanishing ideals as polynomial constraints have been actively studied for the characterization of specific model classes, including:
staged trees~\citep{DUARTE2020127},
intervention models of directed acyclic graphs~\citep{duarte2025algebraic},
CStrees~\citep{alexandr2024decomposable,duarte2026representation}, and
Gaussian graphical models~\citep{misra2021gaussian}.
For deep learning, \cite{kubjas2024geometry} derived the vanishing ideals of neural network functional spaces via Zariski closure (\emph{neurovarieties}). 
Notably, \cite{boege2024realbirationalimplicitizationstatistical} generalized statistical models parameterized by birational maps, 
derived their vanishing ideals via birational implicitization, 
establishing a unified foundation for constraint-based structure learning.
However, when translating these algebraic characterizations into practical applications, they remain confined to the realm of existing constraint-based methods. 
These methods still fundamentally rely on a forward approach of hypothesis testing—verifying empirical data against pre-derived catalogs ``to choose from a finite collection of graphs the one whose model fits a given sample best''~\citep{boege2024realbirationalimplicitizationstatistical}. 

In contrast, our proposed approach shifts this paradigm. 
By representing the ambient spaces of models via equivalence classes quotiented by the row spaces of configuration matrices, our framework becomes uniquely capable of 
(i) delineating the ambient space of all possible geometric structures 
prior to hypothesis testing, and 
(ii) rigorously determining algebraic and geometric relations among models, such as hierarchical inclusions.

The remainder of this paper is organized as follows. 
Section 2 establishes the foundational algebraic approach for model selection, defining the probability distributions, statistical models, and their algebraic signatures. 
Section 3 introduces the Kronecker-stack class of configuration matrices and describes the factor-wise generation of their integer kernels along with minimum invariant constraints (MICs). 
Section 4 details the proof-of-concept (PoC) numerical experiments for structural model selection through algebraic signature matching and threshold calibration. Section 5 extends the framework to an empirical domain, demonstrating a practical linguistic application to a corpus-scale natural language dataset. 
Finally, Section 6 provides a discussion on the structural implications and limitations of the proposed approach and Section 7 concludes the paper.

\section{Algebraic approach for model selection}
\label{sec:primer}

\subsection{Probability distributions}
We denote a probability simplex with $N$ states by 
\begin{equation}
    \Delta_{N-1}=\Big\{\bm{p}=(p_1,\ldots, p_N)\in \R_{\geq 0}^N\mid \sum_{i=1}^Np_i=1\Big\}.
\end{equation}
We also define the interior of a probability simplex by $\Delta_{N-1,>0}:=\Delta_{N-1}\cap \R_{>0}^{N}$ where all entries of the probability vector are positive.
We consider a family of probability distributions $\bm{p}=(p_1,\ldots, p_N)^\top \in \Delta_{N-1}$ of the form:
\begin{equation}\label{eq:log-linear}
    \log{p_i(\bm{\theta})} = \sum_{j=1}^d\theta_ja_j(i)-\psi(\bm{\theta}),\qquad i = 1,\ldots, N
\end{equation}
where  $\bm{\theta}=(\theta_1,\ldots, \theta_d)$ is a parameter vector, $\psi(\bm{\theta})=\log{\sum_{i=1}^N\exp{\sum_{j=1}^d\theta_ja_j(i)}}$ is the normalizing constant.
This family is often called a \emph{log-linear model} that is a subset of the exponential family~\citep{sullivant2023algebraic}.
By setting the column vectors $\bm{a}(i)=(a_1(i),\ldots, a_d(i))^{\top}\in \Z^d$ for $i=1,\ldots, N$ and arranging them horizontally into a $d\times N$ integer matrix $A$ as
\begin{equation}
    A=\begin{bmatrix}
        \bm{a}(1),  \bm{a}(2),\ldots,  \bm{a}(N)
    \end{bmatrix},
\end{equation}
we can rewrite (\ref{eq:log-linear}) in the row vector form
\begin{equation}\label{eq:toric}
    \bm{p}(\bm{\theta})^{\top}
    =
    \frac{1}{Z}\exp(\bm{\theta}^{\top}A),
\end{equation}
where the exponential is applied elementwise and
$
Z=\sum_{i=1}^N \exp(\bm{\theta}^{\top}\bm{a}(i)).
$
We call this matrix $A$ a \emph{configuration matrix}.
We say that the family of distributions $\{p_i(\bm{\theta})\}$ defined by 
\eqref{eq:log-linear} is \emph{homogeneous} if the all-one vector 
$\bm{1}_N:=(1,\ldots,1)^\top\in\R^N$ lies in the row space of $A$; 
in this case, we call it a \emph{toric model}.
Later in this study, we restrict ourselves to the binary case $A\in\{0,1\}^{d\times N}$, viewing $A$ as an incidence matrix encoding whether each factor acts on each state.

\begin{definition}
Let $A\in \{0,1\}^{d\times N}$ be the configuration matrix.
The set of integer vectors in the kernel of $A$ is defined as
\begin{equation}
    \ker_{\Z}A:= \{z \mid Az=0,\; z\in \Z^N\}.
\end{equation}
We call $\ker_{\Z}A$ the \emph{integer kernel} of $A$.
\end{definition}

The integer kernel $\ker_{\Z}A\subseteq \Z^N$ is closed under integer multiplication and addition: for $z_1, z_2 \in \ker_{\Z}A$
\begin{equation}
    az_1 + bz_2 \in \ker_ZA,\quad\forall a,b \in \Z
\end{equation}
Due to this property, $\ker_{\Z}A$ is also called  an \emph{integer lattice}~\citep{aoki2012markov}. 

Also, we define a row space of the matrix $A$ as $\rowspan(A) :=\{u \in \R^N\mid u=\sum_{i=1}^d c_i A(i),\; c_i\in \R\}$ that is the vector space spanned by the rows $A(i),\;(i=1,\ldots,d)$ of the matrix $A$.
We remind the basic fact of linear algebra that we can identify the kernel of $A$ with its row space:
\begin{equation}
    \rowspan(A)^{\bot} = \ker A; \quad \rowspan(A) = (\ker A)^{\bot},
\end{equation}
where $\bot$ denotes the orthogonal complement~\citep{aoki2012markov}. 
By this duality, we equate the row space with the kernel space.

\subsection{Models and signatures}
We consider a set of probability distributions associated with a configuration matrix $A$ that shares its common row space as a model.

\begin{definition}[Model; Definition 1.1.9 in \cite{drton2009lectures} ]\label{def:model}
    Given a matrix $A\in \{0,1\}^{d\times N}$, 
    a \emph{model} of $A$ is defined by 
    \begin{equation}
        M(A):=\Big\{\bm{p}\in \Delta_{N-1,>0}\mid\log \bm{p} \in \rowspan(A)\Big\}.
    \end{equation}
    
\end{definition}
Note that a model depends only on the row space of $A$:
if $Q \in \mathbb{R}^{d \times d}$ is invertible, then $\rowspan(QA) = \rowspan(A)$,
so $A$ and $QA$ define the same model.
Geometrically, the row space is invariant under coordinate transformations in the parameter space $\mathbb{R}^d$.

\begin{definition}[Graver basis]
Let $\sqsubseteq$ be a partial order defined on $\Z^n$ as: For $\bm{u,v} \in \Z^n$
\begin{equation}
    \bm{u} \sqsubseteq \bm{v} \overset{\mathrm{def}}{\iff} u_iv_i\geq 0\; \mathrm{and}\; |u_i| \leq |v_i|\,\quad \forall i=1,\ldots, n.
\end{equation}
A \emph{conformal decomposition} of $\bm u$ is an expression $\bm{u}=\bm{v}+\bm{w}$, 
where $\bm{v,w} \in \Z^N\setminus {\bm{0}}$ and both of $\bm v \sqsubseteq \bm u$  and  $\bm w \sqsubseteq \bm u$ hold.
A \emph{Graver basis} of an integer lattice is the set of the minimal elements with respect to the conformity partial order $\sqsubseteq$.
No element of a Graver basis has a conformal decomposition.
\end{definition}       

Let $p_1, \ldots, p_N$ be indeterminates.
A  \emph{monomial} in these indeterminates is an expression of the form
\begin{equation}
    \bm{p}^{\bm{u}}:=p_1^{u_1}p_2^{u_2}\cdots p_N^{u_N}
\end{equation} where $\bm{u}=(u_1, \ldots, u_N)\in \mathbb{Z}_{\geq 0}^N$ is a vector of nonnegative integers.

\begin{definition}[Signature]\label{def:signature}
Given a matrix $A\in \Z^{d\times N}$ of a toric model, we define a signature for its model $M(A)$ by a set of binomials determined by $A$ as below:
\begin{equation}
    \Sigma(A):=\Big\{\bm{p}^{\bm{u}+}-\bm{p}^{\bm{u}-}\mid \bm{u}\in \graver(A);\; \bm{u} = \bm{u}^+-\bm{u}^-;\; \bm{u}^+,\bm{u}^-\in {\Z_{\geq 0}^N} \Big\},
\end{equation}
where $\graver(A)$ is a Graver basis of the integer lattice  $\ker_{\Z}A$ associated with $A$ and $\bm{u}^+, \bm{u}^-$ have disjoint supports.
We denote the signature of a configuration matrix $A$ as $\Sigma(A)$.
\end{definition}

\begin{theorem}[Bijectivity of models and signatures]
\label{thm:bijectivity}
For homogeneous configuration matrices $A$ and $A'$,
the following bijective correspondence among their models $M(A),\;M(A')$ and their signatures $\Sigma(A),\;\Sigma(A')$ holds.
\begin{equation}
    M(A) = M(A') \iff \Sigma(A)=\Sigma(A')
\end{equation}
\end{theorem}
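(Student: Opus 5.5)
The plan is to pass through the integer kernel, proving
\[
M(A)=M(A') \iff \ker_{\Z}A=\ker_{\Z}A' \iff \graver(A)=\graver(A') \iff \Sigma(A)=\Sigma(A').
\]

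\textbf{First equivalence.} I will show that $M(A)$ determines $\rowspan(A)$ and conversely. If $\rowspan(A)=\rowspan(A')$, then $M(A)=M(A')$ directly from Definition~\ref{def:model}. Conversely, I claim that
\[
\{\log\bm p \mid \bm p\in M(A)\}=\rowspan(A)\cap\{\bm v : \textstyle\sum_i e^{v_i}=1\}.
\]
The affine span of this set should be all of $\rowspan(A)$; homogeneity is the ingredient that makes this work. Since $\bm 1_N\in\rowspan(A)$, every $\bm u\in\rowspan(A)$ can be shifted by the normalizing constant $-\log\sum_i e^{u_i}$ to give a point of the set. So the set is the image of $\rowspan(A)$ under the map $\bm u\mapsto \bm u-\psi\,\bm 1_N$, and its span contains $\bm u-\psi(\bm u)\bm 1$ for all $\bm u$. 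Taking differences, and using that $\bm 1$ is itself in the span (take any point and compare with its rescalings $t\bm u$), recovers $\rowspan(A)$. Hence $M(A)=M(A')$ forces $\rowspan(A)=\rowspan(A')$. By the duality $\rowspan(A)=(\ker A)^{\perp}$ recalled above, this is equivalent to $\ker A=\ker A'$ over $\R$. Because $A$ is an integer matrix, $\ker A$ has a rational basis, so $\ker A$ equals the real span of $\ker_{\Z}A$. Therefore $\ker A=\ker A'$ if and only if $\ker_{\Z}A=\ker_{\Z}A'$.

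\textbf{Second equivalence.} If the lattices agree, their Graver bases agree by definition. For the converse, I will use the standard fact that every lattice element is a conformal sum of Graver elements. This follows by induction on $\|\bm u\|_1$: a non-minimal $\bm u$ has a conformal decomposition $\bm u=\bm v+\bm w$ with $\bm v,\bm w$ in the lattice. Hence each lattice is the $\Z$-span (indeed the sign-compatible $\mathbb Z_{\ge0}$-span) of its Graver basis, so equal Graver bases give equal lattices.

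\textbf{Third equivalence.} I will check that the map $\bm u\mapsto \bm p^{\bm u^+}-\bm p^{\bm u^-}$ is injective on vectors, up to the sign ambiguity $\bm u\leftrightarrow-\bm u$. Since $\bm u^+$ and $\bm u^-$ have disjoint supports, the binomial recovers the unordered pair $\{\bm u,-\bm u\}$, and Graver bases are symmetric under negation. So $\Sigma(A)=\Sigma(A')$ if and only if $\graver(A)=\graver(A')$, provided binomials are read up to sign. This convention should be stated explicitly.

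\textbf{Main obstacle.} The delicate step is the first one: showing that $M(A)$ recovers the full row space rather than only a hyperplane section of it. This is exactly where homogeneity ($\bm 1_N\in\rowspan(A)$ and $\bm 1_N\in\rowspan(A')$) is used. Without it, two matrices whose row spaces differ only by $\bm 1$ could give the same model while having different kernels. I would verify this carefully, for example by noting that the model is the image of the $\bm\theta$-parametrization and that it spans $\rowspan(A)$ modulo $\bm 1$.
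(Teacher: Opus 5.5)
Your proposal is correct and follows essentially the same route as the paper. The paper also passes through row-space equality via the homogeneity shift $\bm q\mapsto \bm q-\log Z(\bm q)\,\bm 1$, which it does pointwise in Lemma~\ref{lem:model-rowspace-equivalence} rather than through spans, and then uses that the integer kernel spans the real kernel, that $\graver(A)$ generates $\ker_{\Z}A$ by $\ell_1$-induction, and that each binomial recovers its exponent vector up to sign. One minor slip: only the \emph{linear} span of the log-image equals $\rowspan(A)$, not the affine span, since when $\rowspan(A)=\R\bm 1$ the log-image is the single point $-\log N\,\bm 1$.
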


\begin{proof}
    For $(\Rightarrow)$, assume $M(A) = M(A')$. 
    By Definition~\ref{def:model} and the homogeneity assumptions $\mathbf{1}\in\operatorname{rowspan}(A)$ and $\mathbf{1}\in\operatorname{rowspan}(A')$, the equality $M(A)=M(A')$ entails $\operatorname{rowspan}(A)=\operatorname{rowspan}(A')$ by the supplemental Lemma~\ref{lem:model-rowspace-equivalence} in Appendix~\ref{app:proof2.5}.
    Since the kernel of a matrix is the orthogonal complement of its row space, we have $\ker_{\R}A = \ker_{\R}A'$, which restricts to the integer lattices as $\ker_{\Z}A = \ker_{\Z}A'$. 
    Since their integer kernels are identical, their Graver bases must coincide, i.e., $\graver(A) = \graver(A')$. 
    Because the signature $\Sigma(A)$ is uniquely determined by the Graver basis, we conclude $\Sigma(A) = \Sigma(A')$.

    For $(\Leftarrow)$, assume $\Sigma(A) = \Sigma(A')$. 
    By Definition~\ref{def:signature}, each binomial in the signature is of the form $\bm{p}^{\bm{u}^{+}} - \bm{p}^{\bm{u}^{-}}$, where $\bm{u}\in \graver(A)$ and $\bm{u}=\bm{u}^{+}-\bm{u}^{-}$ with disjoint supports.
    Hence, we can uniquely recover their exponent difference vectors up to sign convention as $\bm{u} = \bm{u}^{+} - \bm{u}^{-}$  that collectively form the Graver basis.
    Since the Graver basis is a conformal generating set of the lattice kernel  $\ker_Z A$; that is, 
\begin{equation}
    \langle\graver(A)\rangle_{\Z}=\ker_{\Z}A.
\end{equation}
where $\langle\graver(A)\rangle_{\Z}$ denotes the set of all finite integer linear combinations of elements of $\graver(A)$.
See a supplemental Lemma~\ref{lem:graver-generates-lattice} in Appendix~\ref{app:proof2.5}.

Since $A$ and $A'$ have integer entries, their real kernels are spanned by rational kernel vectors, and clearing denominators turns such vectors into integer kernel vectors. 
Thus each real kernel is the real linear span of its corresponding integer kernel. See a supplemental Lemma~\ref{lem:integer-kernel-spans-real}. 
Since $\ker_{\Z}A=\ker_{\Z}A'$, it follows that $\ker_{\R}A=\ker_{\R}A'$. 
Taking the orthogonal complements, we obtain $\rowspan(A)  = \rowspan(A')$. 
Therefore, the conditions $\log \bm{p} \in \rowspan(A)$ and $\log \bm{p} \in \rowspan(A')$ are entirely equivalent, which, by Definition~\ref{def:model}, concludes $M(A) = M(A')$.
\end{proof}

This argument reverses the standard toric variety-ideal correspondence described by ~\citet[Chapter~4]{sturmfels1996grobner}: 
for an integer configuration matrix $A$, the toric ideal $I_A$ is described by binomials 
$\bm{p}^{\bm{u}^{+}}-\bm{p}^{\bm{u}^{-}}$ associated with lattice relations 
$\bm{u}\in\ker_{\mathbb{Z}}A$, and Graver elements correspond to primitive binomials.
Hence, rather than being merely a subset of $I_A$, the Graver-binomial signature $\Sigma(A)$ fully determines the homogeneous toric model up to row-space equivalence.

This bijectivity theorem established above bridges abstract algebraic geometry and statistical inference: it guarantees that a theoretical model can be uniquely identified purely by evaluating its corresponding signature as invariant constraints against empirical data. 
In the context of machine learning, a fundamental challenge is \emph{model selection} (or model identification)---the task of discovering the true underlying structure from a vast hypothesis space of candidate models. 
To mathematically formalize this statistical motivation, we aggregate these testable constraints into a single overarching object. 
By introducing the following definition of a ``Dictionary,'' we complete the foundational theoretical framework necessary for algorithmic model identification.

\begin{definition}[Dictionary]
    Let $\mathcal{C} = \{M(A_k)\}_{k=1}^K$ be a class of candidate toric models parameterized by configuration matrices $A_k \in \mathbb{Z}^{d \times N}$. 
    We define a \emph{dictionary} $\mathcal{D}$ for identifying a model from the class $\mathcal{C}$ as the union of the signatures of all models in $\mathcal{C}$:
    \begin{equation}
        \mathcal{D} := \bigcup_{M(A_k) \in \mathcal{C}} \Sigma(A_k).
    \end{equation}
\end{definition}

\begin{corollary}[Dictionary identifiability]
\label{cor:dictionaly-identifiability}
Let $\mathcal{D} = \bigcup_{k = 1}^K \Sigma(A_k)$ be the dictionary of a finite candidate class $\{A_1,...,A_K\}$. 
Then the binary membership vector induced by $\mathcal{D}$ uniquely identifies each model $M(A_k)$ in the candidate class.
\end{corollary}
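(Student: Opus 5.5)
The plan is to make the ``binary membership vector'' precise and then reduce the corollary to Theorem~\ref{thm:bijectivity}. Enumerate the finite dictionary $\mathcal{D}=\{b_1,\dots,b_L\}$. For each candidate $A_k$ define $\chi_k\in\{0,1\}^L$ by $(\chi_k)_\ell=1$ iff $b_\ell\in\Sigma(A_k)$. Finiteness of $\mathcal{D}$ follows because each Graver basis is finite, and the candidate class is finite. The claim to prove is then that the map $M(A_k)\mapsto \chi_k$ is well defined and injective on the class $\{M(A_1),\dots,M(A_K)\}$. In other words, $\chi_k=\chi_j$ holds if and only if $M(A_k)=M(A_j)$.

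The key observation is that $\chi_k$ is simply the indicator function of the subset $\Sigma(A_k)\subseteq\mathcal{D}$. Since every $\Sigma(A_k)$ is contained in $\mathcal{D}$ by construction, $\chi_k=\chi_j$ is equivalent to $\Sigma(A_k)=\Sigma(A_j)$; nothing of $\Sigma(A_k)$ lies outside $\mathcal{D}$, so no information is lost by restricting to $\mathcal{D}$. Applying Theorem~\ref{thm:bijectivity} to the pair $A_k,A_j$, we get $\Sigma(A_k)=\Sigma(A_j)$ if and only if $M(A_k)=M(A_j)$. The forward direction gives well-definedness: row-space-equivalent representatives yield the same vector. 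The backward direction gives injectivity. Hence distinct models in the class receive distinct membership vectors.

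The only genuine obstacle is hypothesis bookkeeping. Theorem~\ref{thm:bijectivity} is stated for homogeneous configuration matrices, so I would explicitly assume, as the dictionary definition implicitly does by calling the $M(A_k)$ toric models, that each $A_k$ has $\mathbf{1}\in\rowspan(A_k)$. I would also note that the binomials are taken up to sign, consistent with the recovery of $\bm{u}$ up to sign in the proof of Theorem~\ref{thm:bijectivity}, so that membership in $\mathcal{D}$ is unambiguous.

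Finally, I would remark that identification here is at the level of models (row-space classes), not matrices. If two $A_k$ define the same model, they share a vector, which is consistent with the statement.
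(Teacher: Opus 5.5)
Your proposal is correct and follows the paper's route. The paper proves the corollary by a one-line appeal to Theorem~\ref{thm:bijectivity}, and your observation that $\chi_k=\chi_j$ if and only if $\Sigma(A_k)=\Sigma(A_j)$ (since both are subsets of $\mathcal{D}$) is exactly what that appeal uses; your explicit note of the homogeneity hypothesis is a worthwhile addition.

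One aside: the dictionary matrix in Example~\ref{eg:dictionaly_222} records which dictionary binomials \emph{vanish} on $M(A_k)$, not membership in $\Sigma(A_k)$ (e.g.\ $m_{13}$ is marked $1$ for $\mathcal{M}_2$). For that vector you cannot conclude $\Sigma(A_k)=\Sigma(A_j)$ directly. Instead, equal vectors force $\graver(A_k)\subseteq\ker_{\mathbb{Z}}A_j$ and $\graver(A_j)\subseteq\ker_{\mathbb{Z}}A_k$, which gives $\ker_{\mathbb{Z}}A_k=\ker_{\mathbb{Z}}A_j$ and hence $M(A_k)=M(A_j)$.
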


\begin{proof}
By bijective correspondence between model and signature from Theorem~\ref{thm:bijectivity}.
\end{proof}

\section{The Kronecker-Stack Class}
\label{sec:main}
\subsection{Definition and scope}
\label{sec:stack-definition}
In this section, we restrict attention to a class of models whose configuration matrices admit a particularly tractable form: a vertically stacked block structure expressible as a Kronecker product of three types of factors. 
Within this class, the signatures can be enumerated explicitly, enabling systematic model selection.

\begin{definition}[Kronecker-stack class]
\label{def:kronekcer-class}
    Let 
    $E_{n_k}$ be a $n_k\times n_k$ identity matrix, $\bm{1}_{n_k}=(1,1,\ldots, 1)^{\top}$ be an all-one vector in the $n_k$ dimensional vector space and
    $e_{i(n_k)}$ be a standard basis vector with $i$-th element equals to $1$ and $0$ otherwise.
    We consider a class of matrices that can be expressed as a vertical stack of row block matrices $A^{(1)},A^{(2)},\ldots, A^{(R)}$ with the same column size $N$ and possibly different row size $m_r\;(r=1,\ldots, R)$:
    \begin{equation}
        A = \begin{bmatrix}
            A^{(1)}\\\vdots\\ A^{(R)}
        \end{bmatrix},
    \end{equation}
    where each matrix $A^{(r)}$ for $1\leq r \leq R$ is further decomposable as the Kronecker product $\otimes$ of finite factors $B_k^{(r)}\;(k=1,\ldots, m)$ for a fixed $m$ over $r=1,\ldots, R$ 
    \begin{equation}
        A^{(r)}= \bigotimes_{k=1}^m B_k^{(r)},
    \end{equation}
    and each factor $B_k^{(r)}$ is of the form:
    \begin{equation}\label{eq:choices-factors}
        B_k^{(r)}\in \big\{E_{n_k},\bm{1}_{n_k}^{\top}, e_{i(n_k)}^{\top}\big\}.
    \end{equation}
Note that  $B_k^{(r)}$ has the same column size $n_k$ across different $r$ for $r=1,\ldots, R$ since any choice in Eq.~\eqref{eq:choices-factors} has the same column size and $\prod_{k=1}^mn_k=N$. 
We call $A^{(r)}$ the $r$-th row block, and $B_k^{(r)}$ the factor in mode $k$ (i.e. the $k$-th factor position) of block $r$.

We only consider a class of models of probability distributions given by configuration matrices defined by Definition \ref{def:kronekcer-class}, named as \emph{Kronecker-stack class} denoted $\mathcal{C}$.
Though we limit the scope of consideration, this class is sufficiently large so that it contains most of probability distributions discussed in the literature such as algebraic statistics and graphical models as we show soon.
This Kronecker-stack class represents multivariate discrete distribution $X=(X_1,\ldots, X_m)$.
Each block matrix $A^{(r)}$ is decomposed as a Kronecker product of $m$ factors that correspond to $m$ random variables $X_k\;(k=1,\ldots, m)$.
Each random variable $X_k$ has $n_k$ states.
Each block $A^{(r)}$ encodes a specific interaction among variables in the probability distribution.

An identity $E_{n_k}$ gives rise to \emph{free} elements, since the $k$-th 
variable does not enforce any restriction as a factor. 
The all-one vector 
$\bm{1}_{n_k}^{\top}$ gives rise to \emph{copy} elements, since it forces 
the corresponding states to be subject to the same effect;
we accordingly 
refer to $\bm{1}_{n_k}$ as a \emph{copy vector} in this construction. 
Finally, $e_{i(n_k)}^{\top}$ gives rise to \emph{selector} elements, 
indicating a selection or filter for a specific state of an individual 
variable, used for localization.
We require that every occurrence of selector factors $e_{i(n_k)}^{\top}$ 
is \emph{selector-complete}: if $e_{i(n_k)}^{\top}$ appears as the $k$-th 
factor of some row block, then every $e_{j(n_k)}^{\top}$, $j=1,\ldots,n_k$, 
must appear as the $k$-th factor of \emph{some} row block (possibly 
different row blocks, and with the other factors free to vary).
\end{definition}
    
The advantage of limiting to this class is to trace the structure of the kernel by factor by factor.
This allows us to exhaustively enumerate a generating set of the kernel lattice for each configuration matrix and identify a signature for each model.
By preparing a dictionary $\mathcal{D}:=\bigcup_{M(A_k)\in \mathcal{C}}\Sigma(A_k)$, we can test all the possible signatures against the data.

\vspace{0.5cm}
The Kronecker-stack class covers a sufficiently broad and practically relevant subclass of finite discrete toric models, including hierarchical log-linear models and elementary context-specific constructions.

\begin{definition}[Hierarchical model~\citep{aoki2012markov}]
Let $[m]=\{1,\ldots,m\}$ index a collection of $m$ random variables, where the $k$-th variable takes values in $[n_k]:=\{1,\ldots,n_k\}$, and $D_1,\ldots, D_r$ be subsets of $[m]$ such that there is no inclusion relation between $D_i$ and $D_j$ for $i\neq j$ with $\bigcup_{i=1}^rD_i=[m]$.
We denote $\mathcal{D}=\{D_1,\ldots, D_R\}$. 
A \emph{hierarchical model} with the generating class $\mathcal{D}$ is defined by a probability vector $\bm{p}\in\Delta_{N-1,>0}$ (where $N=\prod_{k=1}^m n_k$)  that satisfies
\begin{equation}
    \log \bm{p}(\bm{i})=\sum_{D\in \mathcal{D}}\bm\mu_D(\bm{i}_D),
\end{equation}
where $\bm{i}\in \prod_{k=1}^m[n_k]$ is a multi-index for a cell of $m$-way joint probability tensor, $\bm{i}_D:=(i_k)_{k\in D}$ denotes the sub-vector of $\bm i$ obtained by 
restricting to the coordinates in $D\subset[m]$.
$\bm\mu_D$ is a function depending only on the marginal cell $\bm{i}_D$.
\end{definition}

Note that $\Gamma:=\{D\subseteq [m]\mid D\subseteq D_i\mathrm{\;for\,some\,} D_i\in \mathcal{D}\}$ is a \emph{simplicial complex} whose elements are called the \emph{faces}.
Since no $D_i$ is contained in another (by assumption), $\mathcal{D}$ coincides 
precisely with the set of inclusion-maximal faces of $\Gamma$, called \emph{facets}. 
Conversely, since $[m]$ is finite, every face of $\Gamma$ is contained in some facet, so $\Gamma$ is recovered as the downward closure of $\mathcal{D}$; 
in particular, a simplicial complex---and hence a hierarchical model---is fully determined by its facets~\citep{sullivant2023algebraic}.

\begin{proposition}[Expressivity of Kronecker-stack class]
\label{prop:expressivity}
Every hierarchical model of a simplicial complex with defining facets $\mathcal{D}$ belongs to the Kronecker-stack model class $\mathcal{C}$.

Specific models under the hierarchical model in case of 3 variables $X=(X_1,X_2,X_3)$ are listed in Table~\ref{tab:hierarchical_models} and all of them belong to the class $\mathcal{C}$.

\begin{table}[h]
    \centering
\begin{tabular}{c|c|c}
    \hline
        \textbf{Model} & \textbf{Independence} & \textbf{Defining Facets $\mathcal{D}$} \\
        \hline
         Saturated  &  (No independence) & $\{\{1, 2, 3\}\}$ \\
          No3Way  & (No three-way interaction) & $\{\{1, 2\}, \{1, 3\}, \{2, 3\}\}$ \\
         Conditional independence & $X_2 \ind X_3 \mid X_1$ & $\{\{1,2\}, \{1,3\}\}$ \\
        Conditional independence  & $X_1 \ind X_3 \mid X_2$ & $\{\{1, 2\}, \{2, 3\}\}$ \\
        Conditional independence & $X_1 \ind X_2 \mid X_3$ & $\{\{1, 3\}, \{2, 3\}\}$ \\
        Joint independence& $(X_1, X_2) \ind X_3$ & $\{\{1, 2\}, \{3\}\}$ \\
        Joint independence  & $(X_1, X_3) \ind X_2$ & $\{\{1, 3\}, \{2\}\}$ \\
        Joint independence & $(X_2, X_3) \ind X_1$ & $\{\{2, 3\}, \{1\}\}$ \\
         Complete independence  & $X_1 \ind X_2 \ind X_3$ & $\{\{1\}, \{2\}, \{3\}\}$ \\
        \hline
    \end{tabular}
    \caption{All hierarchical models for three-variable distributions}
\label{tab:hierarchical_models}
\end{table}
\end{proposition}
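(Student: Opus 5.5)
For each facet $D\in\mathcal{D}$ I will build one row block by Kronecker products and stack the blocks; the row space of the resulting matrix is exactly the space of log-probability vectors admitted by the hierarchical model. Fix the ordering of cells $\bm{i}=(i_1,\ldots,i_m)$ compatible with the Kronecker product, so that $N=\prod_k n_k$ coordinates are indexed lexicographically. For a facet $D$ define
\begin{equation}
A^{(D)}:=\bigotimes_{k=1}^m B_k^{(D)},\qquad B_k^{(D)}=\begin{cases}E_{n_k}, & k\in D,\\ \bm{1}_{n_k}^{\top}, & k\notin D,\end{cases}
\end{equation}
and let $A$ be the vertical stack of $A^{(D_1)},\ldots,A^{(D_R)}$. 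Only free and copy factors are used, so the selector-completeness requirement of Definition~\ref{def:kronekcer-class} holds vacuously, and $A$ is a binary matrix in the required form.

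The key step is to identify the rows of $A^{(D)}$. A row of a Kronecker product is indexed by a tuple of row indices, one per factor. Mode $k\notin D$ contributes only the single row $\bm{1}_{n_k}^{\top}$, and mode $k\in D$ contributes row $j_k$ of $E_{n_k}$, namely $e_{j_k(n_k)}^{\top}$. Hence the rows of $A^{(D)}$ are indexed by marginal cells $\bm{j}_D\in\prod_{k\in D}[n_k]$. The entry in column $\bm{i}$ equals $\prod_{k\in D}\delta_{i_k j_k}\cdot\prod_{k\notin D}1$, which is the indicator $\mathbbm{1}[\bm{i}_D=\bm{j}_D]$. The real span of these rows is therefore exactly the set of functions of $\bm{i}$ that depend only on $\bm{i}_D$, i.e.\ the admissible terms $\bm\mu_D(\bm{i}_D)$: any such function equals $\sum_{\bm{j}_D}\bm\mu_D(\bm{j}_D)\mathbbm{1}[\bm{i}_D=\bm{j}_D]$, and conversely each indicator depends only on $\bm{i}_D$. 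Summing over the stacked blocks gives
\begin{equation}
\rowspan(A)=\Big\{\textstyle\sum_{D\in\mathcal{D}}\bm\mu_D(\bm{i}_D)\Big\}.
\end{equation}
By Definition~\ref{def:model}, $M(A)$ then coincides with the hierarchical model.

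It remains to verify homogeneity, so that $M(A)$ is a toric model in the paper's sense. Take any single facet $D$ and sum all rows of $A^{(D)}$; since $\sum_{\bm{j}_D}\mathbbm{1}[\bm{i}_D=\bm{j}_D]=1$ for every $\bm{i}$, this gives $\bm{1}_N\in\rowspan(A)$. This also covers the degenerate case $D=\emptyset$, where the block is the single row $\bm{1}_N^{\top}$. The table entries then follow by instantiation. For example, No3Way corresponds to the three blocks $E\otimes E\otimes\bm{1}^{\top}$, $E\otimes\bm{1}^{\top}\otimes E$ and $\bm{1}^{\top}\otimes E\otimes E$, and complete independence corresponds to $E\otimes\bm{1}^{\top}\otimes\bm{1}^{\top}$ together with its two permutations.

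The main obstacle is bookkeeping rather than any deep step. One must check that the Kronecker indexing convention matches the multi-index ordering of the probability tensor, so that the row of a Kronecker product really is the product indicator. I would settle this first by the mixed-product rule $(a\otimes b)_{(i,i')}=a_ib_{i'}$, applied inductively over the $m$ modes.
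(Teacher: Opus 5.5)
Your proposal is correct and follows essentially the same route as the paper. You use the same block construction ($E_{n_k}$ on facet modes, $\bm{1}_{n_k}^{\top}$ elsewhere) and the same key observation that the entries of $A^{(D)}$ are the indicators $\mathbb{I}[\bm i_D=\bm j]$. You also go slightly further: you prove the full row-space equality and homogeneity, whereas the paper only exhibits $\bm\theta$ with $A^{\top}\bm\theta=\log\bm p$, i.e. the inclusion of the hierarchical model in $M(A)$, and leaves the converse and $\bm{1}_N\in\rowspan(A)$ implicit. One small terminological point: $(a\otimes b)_{(i,i')}=a_ib_{i'}$ is the entrywise definition of the Kronecker product, not the mixed-product rule.
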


\begin{proof}
Any hierarchical model is specified by its defining facets $\mathcal{D}=\{D_1,\ldots,D_R\}$.
For each facet $D_r$, $r=1,\ldots,R$, construct the block matrix
\begin{equation}
    A^{(r)}=\bigotimes_{k=1}^{m}B_k^{(r)}, \qquad
    B_k^{(r)}=\begin{cases} E_{n_k} & k\in D_r,\\ \bm{1}_{n_k}^\top & k\notin D_r,\end{cases}
\end{equation}
and let $A$ be the vertical stack of $A^{(1)},\ldots,A^{(R)}$. 
Since every factor is an identity matrix or an all-one row vector, $A$ belongs to the Kronecker-stack class $\mathcal{C}$.

Let $\bm p$ be any distribution in the hierarchical model with generating class $\mathcal{D}$, 
so that $\log \bm p(\bm i)=\sum_{D\in\mathcal{D}}\bm\mu_D(\bm i_D)$ for some functions $\bm\mu_D$. 
We show that $\bm p$ is realized by $A$.
Note that the columns of $A^{(r)}$ are indexed by the cell $\bm i\in\prod_{k=1}^m[n_k]$, and its rows by $\bm j\in\prod_{k\in D_r}[n_k]$, 
since the factors outside $D_r$ are the single-row vectors $\bm{1}_{n_k}^\top$. 
By the entrywise rule for Kronecker products, the entry of $A^{(r)}$ in row $\bm j$ and column $\bm i$ is
$A^{(r)}_{\bm j,\bm i} = \mathbb{I}[\bm i_{D_r}=\bm j]$,
where $\mathbb{I}$ is an indicator function. 
Equivalently, each row $\bm j$ of $A^{(r)}$ picks out the fiber over $\bm j$, that is, all cells $\bm i$ agreeing with $\bm j$ on the coordinates in $D_r$.

Define the parameter vector $\bm\theta^{(r)}:=\big(\bm\mu_{D_r}(\bm j)\big)_{\bm j}$ using the functions $\bm\mu_{D_r}$ given above, and 
let $\bm\theta=(\bm\theta^{(1)},\ldots,\bm\theta^{(R)})$ be the stacked parameter vector.
By the fiber structure, only the term $\bm j=\bm i_{D_r}$ survives in each block sum, so
\begin{equation}
\big(A^{\top}\bm\theta\big)_{\bm i}
=\sum_{r=1}^{R}\big((A^{(r)})^{\top}\bm\theta^{(r)}\big)_{\bm i}
=\sum_{r=1}^{R}\bm\mu_{D_r}(\bm i_{D_r})
=\log \bm p(\bm i).
\end{equation}
Thus $\bm p$ is realized by the configuration matrix $A\in\mathcal{C}$ with parameter
$\bm\theta$. 
Since $\bm p$ was an arbitrary distribution in the hierarchical model, every hierarchical model belongs to $\mathcal{C}$.
\end{proof}

In the simplest case of 2 binary variables where $m=2$ and $n_k=2\;(k=1,2) $, the possible facets are either $\mathcal{D}_1=\{\{1,2\}\}$ where $R=1$ or $\mathcal{D}_2=\{\{1\},\{2\}\}$ with $R=2$. 
Following the construction shown in the above proof yields a configuration matrix $A_1$ for the facets $\mathcal{D}_1$ in Eq.~\eqref{eq:config_saturated} and $A_2$ for  $\mathcal{D}_2$ in Eq.~\eqref{eq:config_independnt}:
\begin{align}
\label{eq:config_saturated}
    A_1 &= \begin{bmatrix}A^{(1)}\end{bmatrix}=\begin{bmatrix}
        E_{2}\otimes E_{2}
    \end{bmatrix} = \begin{bmatrix}
        1&0&0&0\\
        0&1&0&0\\
        0&0&1&0\\
        0&0&0&1\\
    \end{bmatrix}\\
    \label{eq:config_independnt}
    A_2 &= \begin{bmatrix}A^{(1)}\\A^{(2)}\end{bmatrix}=\begin{bmatrix}\begin{bmatrix}
        E_{2}\otimes \bm{1}_{2}
    \end{bmatrix}\\
    \begin{bmatrix}
        \bm{1}_{2}\otimes E_{2}
    \end{bmatrix}\end{bmatrix} = \begin{bmatrix}
        1&1&0&0\\
        0&0&1&1\\
        1&0&1&0\\
        0&1&0&1\\
    \end{bmatrix}.
\end{align}

\begin{remark}[Scope of the Kronecker-stack class]
\label{rem:expressivity_scope}
Hierarchical models subsume all decomposable graphical models, including independence models~\cite[Chapter~4.4]{lauritzen1996}.
Hierarchical models formalize any interactions at variable level that can be expressed by the configuration matrix in the Kronecker-stack class as we just show in Proposition~\ref{prop:expressivity}.
The Kronecker-stack class also covers elementary context-specific constructions in which a selector $e_i^\top$ localizes the model to a specific state or stratum beyond at the variable level.
Context-specific independence models~\citep{boutilier1996context} such as $X_2\ind X_3\mid X_1=1$ in the $2\times 2\times 2$ variable setting is expressed as Kronecker-stack matrix in Eq.~\eqref{eq:CSI}:
\begin{equation}
    \label{eq:CSI}A_{CSI}=\begin{bmatrix}A^{(X_1=1)}\\A^{(X_1=2)}\end{bmatrix}=
        \begin{bmatrix}\begin{bmatrix}
            e_{1(2)}^\top\otimes E_{2}\otimes \bm{1}_2^\top\\
            e_{1(2)}^\top\otimes \bm{1}_{2}^\top\otimes E_2\\\end{bmatrix}\\\begin{bmatrix}
            e_{2(2)}^\top\otimes E_{2}\otimes E_2\\\end{bmatrix}
        \end{bmatrix}=
        \begin{bmatrix}\begin{bmatrix}
            1&1&0&0& 0&0&0&0\\
            0&0&1&1& 0&0&0&0\\
            1&0&1&0& 0&0&0&0\\
            0&1&0&1& 0&0&0&0\\\end{bmatrix}\\\begin{bmatrix}
            0&0&0&0& 1&0&0&0\\
            0&0&0&0& 0&1&0&0\\
            0&0&0&0& 0&0&1&0\\
            0&0&0&0& 0&0&0&1\\\end{bmatrix}\end{bmatrix}.
\end{equation}

Some other context-specific variations also fall into the Kronecker-stack class.
In particular, split models~\citep{hojsgaard2003split} can be represented in this class because their context-specific conditions are specified stratum by stratum along a tree structure: each branching is selected by a standard basis vector, and the blocks are simply stacked without interaction.

On the other hand, more relaxed and complicated models such as the stratified graphical model~\citep{nyman2014stratified} appear to fall outside the Kronecker-stack class. 
For instance, for three binary variables with context-specific independences 
$X_2\ind X_3\mid X_1=0$ and $X_1\ind X_3\mid X_2=0$, 
one of the row blocks of the configuration matrix takes the form
\begin{equation}
    \bm e_1^\top\otimes\bm e_1^\top\otimes E_2
    \;+\;
    \bm e_1^\top\otimes\bm e_2^\top\otimes E_2
    \;+\;
    \bm e_2^\top\otimes\bm e_1^\top\otimes E_2,
\end{equation}
arising from parameter tying between the two context-specific conditions above. 
Unlike a split model, where each condition carves out its own branch along a common tree, here 
the two conditions carve out overlapping, cross-cutting combinations of states across $X_1$ and $X_2$: 
the block above merges three such combinations into a single tied row. 
Each summand is individually a Kronecker product, but their sum is not, since no single 
choice of $B_1,B_2,B_3$ reproduces this entangled combination as $B_1\otimes B_2\otimes 
B_3$. 
A precise characterization of such mixed structures, and the computation of their kernels, is left as future work.
\end{remark}

\subsection{Kernel of Kronecker-stack class}
The aim here is to explicitly derive a Graver basis of a given matrix under the Kronecker-stack class for the enumeration of signatures.
First, we list some useful lemmas to derive them.

\begin{lemma}[Kernel of stack matrices]
\label{lem:kernel-stack}
Let $A\in \K^{m_1\times n}, B \in \K^{m_2\times n}$ be matrices where $\K$ is a field. 
Then, the kernel of the stacked matrix of those two matrices is the intersection of the two kernels of each matrix:
\begin{equation}
    \ker\left(\begin{bmatrix}
        A\\B
    \end{bmatrix}\right) = \ker A \cap \ker B
\end{equation}
\end{lemma}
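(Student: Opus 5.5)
The plan is a direct double-inclusion argument based on how block matrices multiply vectors. Fix $z\in\K^n$. Partition the stacked matrix into its row blocks, so that
\begin{equation*}
    \begin{bmatrix} A\\ B\end{bmatrix} z = \begin{bmatrix} Az\\ Bz\end{bmatrix}\in \K^{m_1+m_2}.
\end{equation*}
This identity holds because the $i$-th entry of a matrix-vector product is the inner product of the $i$-th row with $z$. The first $m_1$ rows of the stack are exactly the rows of $A$, and the remaining $m_2$ rows are exactly the rows of $B$.

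The key step uses the fact that a vector in $\K^{m_1+m_2}$ is zero if and only if each of its coordinate blocks is zero. So the stacked product vanishes if and only if $Az=0\in\K^{m_1}$ and $Bz=0\in\K^{m_2}$ hold simultaneously. Reading this chain of equivalences from left to right gives $\ker\begin{bmatrix}A\\B\end{bmatrix}\subseteq \ker A\cap\ker B$. Reading it from right to left gives the reverse inclusion. Together these establish the claimed equality.

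There is no real obstacle here; the only point needing care is bookkeeping. One should note that both matrices have the same column count $n$, so the stack is well defined and all three kernels are subspaces of the same space $\K^n$.

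The argument uses no field-specific property beyond the linear structure. Consequently, the same reasoning applies verbatim over $\Z$, giving $\ker_{\Z}\begin{bmatrix}A\\B\end{bmatrix}=\ker_{\Z}A\cap\ker_{\Z}B$. This is the form needed later for the Kronecker-stack class.

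Finally, by induction on the number of blocks, the result extends to a stack of $R$ row blocks: the kernel of $A$ is $\bigcap_{r=1}^R\ker A^{(r)}$. This reduces the kernel computation for a Kronecker-stack matrix to computing the kernel of each block $A^{(r)}$ separately.
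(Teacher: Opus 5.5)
Your proposal is correct and takes the same approach as the paper: block multiplication reduces $\begin{bmatrix}A\\B\end{bmatrix}\bm{u}=\bm{0}$ to the simultaneous conditions $A\bm{u}=\bm{0}$ and $B\bm{u}=\bm{0}$. Your extensions to $\ker_{\Z}$ and to $R$ row blocks by induction are also valid, and they match how the lemma is used in the proof of Theorem~\ref{thm:factorwise_integer_kernel_generation}.
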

\begin{proof}
Let $\bm{u}\in \K^{n}$ be a vector and  $\bm{0}_k$ be the zero vector in the $k$-dimensional space.
\begin{equation}
     \begin{bmatrix}
            A\\B
        \end{bmatrix}\bm{u}=\bm{0}_{m_1+m_2}\iff A\bm{u}=\bm{0}_{m_1}\;\mathrm{and}\;B\bm{u}=\bm{0}_{m_2} \iff \bm{u} \in \ker A \cap \ker B.
\end{equation}
\end{proof}

\begin{definition}[Direct sum of subsets]
\label{def:direct-sum}
    Let $V$ be a vector space and $W_1,W_2$ be subspaces of $V$, then we denote : 
\begin{equation}
    W_1+W_2=\{w_1+w_2\mid w_1\in W_1,\; w_2\in W_2\}
\end{equation}
If, in addition, $W_1\cap W_2=\{\bm{0}\}$, then $W_1+W_2$ is called a \emph{direct sum}, written $W_1\oplus W_2$.
\end{definition}
$W_1+W_2$ is the smallest subspace of $V$ containing both of $W_1$ and $W_2$.

\begin{lemma}[Integer kernel of Kronecker products]
\label{lem:kronecker}
Let $A\in \Z^{m_1\times n_1},\; B \in \Z^{m_2\times n_2}$ be integer matrices.
Then, 
\begin{equation}
    \ker_{\Z}(A\otimes B)=\ker_{Z} {A}\otimes \Z^{n_2}+\Z^{n_1}\otimes \ker_{Z}{B}
\end{equation}
\end{lemma}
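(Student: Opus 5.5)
We will prove the two inclusions separately. The inclusion ``$\supseteq$'' is the easy one. Here $\ker_{\Z}A\otimes\Z^{n_2}$ denotes the sublattice of $\Z^{n_1n_2}$ generated by Kronecker products $\bm{x}\otimes\bm{y}$ with $\bm{x}\in\ker_{\Z}A$ and $\bm{y}\in\Z^{n_2}$, and similarly for the second summand. By the mixed-product rule, $(A\otimes B)(\bm{x}\otimes\bm{y})=A\bm{x}\otimes B\bm{y}$. This vanishes whenever $A\bm{x}=\bm{0}$ or $B\bm{y}=\bm{0}$. Since $\ker_{\Z}(A\otimes B)$ is closed under integer linear combinations, both generating sets, and hence their sum, lie in it.

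For ``$\subseteq$'' the plan is to split each ambient lattice compatibly with the kernels. This avoids working with the harder statement over $\Z$ directly.

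\emph{Step 1 (splitting).} The lattice $K_1:=\ker_{\Z}A$ is saturated in $\Z^{n_1}$: if $k\bm{z}\in K_1$ with $k\neq 0$, then $\bm{z}\in K_1$. Hence $\Z^{n_1}/K_1$ is a finitely generated torsion-free abelian group, so it is free. The short exact sequence $0\to K_1\to\Z^{n_1}\to\Z^{n_1}/K_1\to 0$ therefore splits, giving $\Z^{n_1}=K_1\oplus C_1$ for some sublattice $C_1$. On $C_1$ the map $A$ is injective, and in fact injective on $C_1\otimes\mathbb{Q}$, because $C_1\cap\ker_{\mathbb{Q}}A=0$ by saturation. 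In the same way we obtain $\Z^{n_2}=K_2\oplus C_2$ with $K_2:=\ker_{\Z}B$.

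\emph{Step 2 (four-piece decomposition).} Choosing $\Z$-bases adapted to these splittings shows that their Kronecker products form a $\Z$-basis of $\Z^{n_1n_2}$. Consequently
\begin{equation}
\Z^{n_1}\otimes\Z^{n_2}=(K_1\otimes K_2)\oplus(K_1\otimes C_2)\oplus(C_1\otimes K_2)\oplus(C_1\otimes C_2).
\end{equation}
The first three pieces together are exactly the right-hand side of the lemma, namely $K_1\otimes\Z^{n_2}+\Z^{n_1}\otimes K_2$.

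\emph{Step 3 (the remaining piece is killed).} Let $\bm{z}\in\ker_{\Z}(A\otimes B)$ and write $\bm{z}=\bm{z}'+\bm{w}$, where $\bm{z}'$ lies in the right-hand side and $\bm{w}\in C_1\otimes C_2$ has integer coordinates. By the first paragraph, $(A\otimes B)\bm{z}'=\bm{0}$, so $(A\otimes B)\bm{w}=\bm{0}$. On $C_1\otimes C_2$ the map $A\otimes B$ restricts to $(A|_{C_1})\otimes(B|_{C_2})$. Over $\mathbb{Q}$ this is a Kronecker product of injective linear maps; it is injective because rank is multiplicative under Kronecker products, or equivalently because injective maps have left inverses $L_1,L_2$ and then $L_1\otimes L_2$ is a left inverse. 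Hence $\bm{w}=\bm{0}$, and so $\bm{z}=\bm{z}'$ lies in the right-hand side.

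The main obstacle is Step 1. Over $\Z$ a sublattice need not have a complement, and the result fails for non-saturated sublattices. Saturation of integer kernels is exactly what makes the quotient torsion-free and hence free. If that argument feels too abstract, a fallback is the Smith normal form: write $A=UDV$ with $U$ and $V$ unimodular. Then $\ker_{\Z}A=V^{-1}\ker_{\Z}D$, and $V^{-1}\otimes W^{-1}$ is unimodular, which reduces the statement to diagonal $D$ and $D'$, where it can be checked coordinatewise.
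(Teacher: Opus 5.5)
Your proof is correct and follows essentially the same route as the paper. Saturation of $\ker_{\Z}A$ gives a torsion-free, hence free, quotient and therefore a complement; the ambient lattice then splits into four Kronecker pieces, and the $C_1\otimes C_2$ piece meets the kernel only in $\bm{0}$. Your Step~3 (injectivity of $(A|_{C_1})\otimes(B|_{C_2})$ over $\mathbb{Q}$ via left inverses, together with the explicit $\bm{z}=\bm{z}'+\bm{w}$ argument) supplies a justification that the paper only asserts.
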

\begin{proof}
The inclusion, 
$\ker A\otimes \Z^{n_2}+\Z^{n_1}\otimes \ker B\subseteq\ker(A\otimes B)$, 
is immediate. 
If we take $\bm{u}\in \ker_{\Z} A, \bm{v} \in \ker_{\Z} B, \bm x \in \Z^{n_1}, \bm y\in \Z^{n_2}$,
\begin{equation}
(A\otimes B)(\bm{u}\otimes \bm{y}+\bm{x}\otimes \bm{v})=A\bm{u}\otimes B\bm{y}+A\bm{x}\otimes B\bm{v}= \bm{0}\otimes B\bm{y} +A\bm{x} \otimes \bm{0}=\bm{0}.
\end{equation}

Conversely, 
since $\ker_{\Z}A=\ker A\cap \Z^{n_1}$ is saturated sublattice of $\Z^{n_1}$ and, $\Z^{n_1} / \ker_{\Z}A \cong \im_{\Z}(A)$, $\im_{\Z}(A)$ is a subgroup of the free abelian group $\Z^{m_1}$, hence torsion-free.
Therefore, by \citet[Exercise 1.3.5]{CoxLittleSchenck2011}, 
there exists $U\subseteq \Z^{n_1}$ such that $\Z^{n_1}=\ker_{\Z}A\oplus U$. 
Similarly, we take $V\subseteq \Z^{n_2}$ such that $\Z^{n_2}=\ker_{\Z}B\oplus V$.
Then,
\begin{equation}
\Z^{n_1}\otimes \Z^{n_2}
=
(\ker_{\Z} A\otimes \ker_{\Z} B)
\oplus
(\ker_{\Z} A\otimes V)
\oplus
(U\otimes \ker_{\Z} B)
\oplus
(U\otimes V).
\end{equation}
Since $U, V$ are complement to $\ker_{\Z} A, \ker_{\Z} B$, respectively, no nonzero vector in $U\otimes V$ belongs to $\ker_{\Z}(A\otimes B)$, whereas other summands are contained in $\ker_{\Z}(A\otimes B)$. 
Therefore, we can decompose 
\begin{align}\label{eq:direct-decomposition}
\ker_{\Z}(A\otimes B) &=
(\ker_{\Z} A\otimes \ker_{\Z} B)
\oplus
(\ker_{\Z} A\otimes V)
\oplus
(U\otimes \ker_{\Z} B)\\
&= (\ker_{\Z} A\otimes \ker_{\Z} B) \oplus (\ker_{\Z} A\otimes V) + 
   (\ker_{\Z} A\otimes \ker_{\Z} B) \oplus (U\otimes \ker_{\Z} B)\\
&= \ker_{\Z} A\otimes \Z^{n_2}+\Z^{n_1}\otimes \ker_{\Z} B.
\end{align}
\end{proof}

The preceding lemmas isolate the two structural operations that govern the kernel of a Kronecker-stack configuration matrix. 
First, stacking row blocks imposes the intersection of these block-wise kernel conditions (Lemma~\ref{lem:kernel-stack}). 
Second, the kernel of each Kronecker-product block can be described factor-wise: a pure tensor is annihilated whenever at least one of its local factors is annihilated by the corresponding local matrix (Lemma~\ref{lem:kronecker}).

\begin{lemma}[Integer kernels of pre-defined factors]
\label{lem:kernel-factors}
Kernels of each factor used for a block matrix in Kronecker-product class can be explicitly derived.
\begin{enumerate}
    \item The kernel of an $n$-dimensional identity matrix is only the zero vector space. $\ker_{\Z} E_{n} = \{\bm{0}_{n}:=0\bm{1}_n\}$, which is a trivial vector.
    \item The kernel of the $n$-dimensional copy vector $\bm{1}_n^{\top}$ is spanned by a basis consisting of $n-1$ vectors $\delta_{k}\;(k=1,\ldots, n-1)\in\Z^n$ defined as:
    \begin{equation}
        \{\delta_{1},\delta_{2},\ldots, \delta_{n-1}\}:=\{
        e_{1(n)}-e_{n(n)},\;
        e_{2(n)}-e_{n(n)},\ldots, 
        e_{n-1(n)}-e_{n(n)}\}
    \end{equation} 
    where $e_{i(n)}$ denotes the $i$-th standard basis of $n$-dimensional vector space. We call $\delta_k$ an \emph{anti-copy} vector. 
    \item The kernel of a standard basis vector $e_{i(n)}^{\top}$ is the hyperplane orthogonal to it.
\end{enumerate}
In other words, for a factor $B$ with the column size of $n$, 
\begin{equation}
    \ker_{\Z}B = \begin{cases}
        \{\bm{0}\}&\mathrm{if}\;B=E_{n} \\
        \big\langle \{\delta_{i}\}_{i=1}^{n-1} \big\rangle_{\Z}&\mathrm{if}\;B=\bm{1}_{n}^\top \\
         \big\langle \{e_{j(n}\}_{j\neq i} \big\rangle_{\Z}&\mathrm{if}\;B=e_{i({n})}^\top \\
    \end{cases}.
\end{equation}
Here $\langle S \rangle_{\Z}$ denotes the $\Z$-span of a set $S$, that is, the set of all finite integer linear combinations of elements of $S$.
\end{lemma}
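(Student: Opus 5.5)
The three cases are independent, and in each one I will prove the claimed $\Z$-span in two steps. First I check that the listed vectors lie in $\ker_{\Z}B$ by direct multiplication. Then I show that every integer kernel vector is an integer combination of them, by writing down the coefficients explicitly. No earlier lemma is needed beyond the definition of $\ker_{\Z}$; the whole argument is elementary linear algebra over $\Z$.

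\emph{Case $B=E_n$.} If $E_n\bm z=\bm 0$, then $\bm z=E_n\bm z=\bm 0$. Hence $\ker_{\Z}E_n=\{\bm 0_n\}$.

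\emph{Case $B=\bm 1_n^\top$.} Each $\delta_k=e_{k(n)}-e_{n(n)}$ satisfies $\bm 1_n^\top\delta_k=1-1=0$. Now let $\bm z=(z_1,\ldots,z_n)^\top\in\Z^n$ with $\sum_i z_i=0$, and set $\bm w:=\sum_{k=1}^{n-1}z_k\delta_k$. For $k<n$ the $k$-th coordinate of $\bm w$ is $z_k$. The $n$-th coordinate is $-\sum_{k<n}z_k$, which equals $z_n$ because the coordinates of $\bm z$ sum to zero. So $\bm z=\bm w$, with integer coefficients $z_k$, and therefore $\ker_{\Z}\bm 1_n^\top=\langle\{\delta_k\}_{k=1}^{n-1}\rangle_{\Z}$. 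For the word ``basis'' in the statement, I note that the $\delta_k$ are linearly independent: their first $n-1$ coordinates form the identity matrix $E_{n-1}$.

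\emph{Case $B=e_{i(n)}^\top$.} Here $e_{i(n)}^\top\bm z=z_i$. So $\bm z\in\ker_{\Z}B$ exactly when $z_i=0$, which is the same as $\bm z=\sum_{j\neq i}z_je_{j(n)}$ with $z_j\in\Z$. This gives both inclusions at once. The result is the integer points of the hyperplane $e_{i(n)}^\perp$, with $\Z$-basis $\{e_{j(n)}\}_{j\neq i}$.

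None of these steps is a real obstacle. The only point needing care is the second case: the expansion of $\bm z$ has to use integer coefficients, so that the set $\{\delta_k\}$ generates the full lattice rather than only a finite-index sublattice. The explicit coefficients $z_k$ settle this directly, with no appeal to saturation arguments.
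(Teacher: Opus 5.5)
Your proof is correct. For the copy vector $\bm 1_n^\top$ it takes a different route from the paper, and that is the only case the paper argues at all.

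\textbf{The paper's route.} The paper makes a dimension count: $\ker \bm 1_n^\top$ has dimension $n-1$, and the $\delta_k$ are linearly independent, so they form a basis. Items 1 and 3 are omitted as standard. This is shorter, but it only shows that the $\delta_k$ form a basis of the real (or rational) kernel. In lattice terms, it shows that $\langle\{\delta_k\}\rangle_{\Z}$ is a full-rank sublattice of $\ker_{\Z}\bm 1_n^\top$. It does not exclude a proper finite-index sublattice: the vectors $2\delta_k$ are equally independent. Excluding that is exactly what the displayed $\Z$-span formula asserts.

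\textbf{Your route.} Your explicit expansion $\bm z=\sum_{k<n} z_k\delta_k$ has integer coefficients, so it proves the lattice statement directly. The paper never addresses this point in its proof. It could be recovered in one line from the unimodular basis $\mathcal D_k\cup\{e_{n(n)}\}$ of Lemma~\ref{lem:z-basis}: applying $\bm 1_n^\top$ forces the $e_{n(n)}$-coefficient to vanish. The paper does not make that connection. You also write out items 1 and 3, which the paper leaves to the reader.

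Overall, your argument costs a few more lines but proves the statement as actually displayed. The paper's argument proves the vector-space version and leaves the integrality step tacit.
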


\begin{proof}
We omit the proofs for the first and third items, as they are standard results in linear algebra.
For the second, the kernel of $\bm{1}_n^{\top}$ has dimension $n-1$, and the $n-1$ vectors $\delta_{k}\;(k=1,\ldots, n-1)$ are clearly linearly independent, thus forming a basis for the kernel.
\end{proof}

\begin{proposition}[Kernel of selection stacks] 
\label{prop:selection_stack}
Let $A$ be a stacked matrix where each block uses a distinct standard basis vector $e_{i(n)}\in \mathbb{R}^n$ for $i=1,\ldots, n$ acting as a selector of a specific context (state), tensored with $n$ arbitrary matrix $B_i\in \R^{m_i \times n'}$ of the same column size $n'$:
\begin{equation}
    A = \begin{bmatrix}
        e_{1(n)}^{\top}\otimes B_1\\
        e_{2(n)}^{\top}\otimes B_2\\
        \vdots\\
        e_{n(n)}^{\top}\otimes B_{n}
    \end{bmatrix}\in \R^{\sum_im_i\times nn'}.
\end{equation}
Then, the kernel of $A$ decomposes cleanly into the direct sum of the individual kernels of $B_i$, embedded by the respective standard basis vectors:
\begin{equation}
    \ker A = \bigoplus_{i=1}^{n} \Big(e_{i(n)} \otimes \ker B_i \Big).
\end{equation}
\end{proposition}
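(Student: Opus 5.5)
The plan is to decompose an arbitrary vector $\bm u\in\R^{nn'}$ along the first Kronecker mode and reduce the kernel condition to $n$ independent conditions. Since $\{e_{i(n)}\}_{i=1}^n$ is a basis of $\R^n$, every $\bm u\in\R^n\otimes\R^{n'}\cong\R^{nn'}$ can be written uniquely as
\begin{equation}
    \bm u=\sum_{i=1}^n e_{i(n)}\otimes \bm u_i,\qquad \bm u_i\in\R^{n'},
\end{equation}
where $\bm u_i$ is simply the $i$-th consecutive block of $n'$ coordinates of $\bm u$. I will use this block decomposition throughout, and the same computation works over $\Z$ as well as $\R$.

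\textbf{Step 1: evaluate each row block.} Using the mixed-product rule $(C\otimes D)(\bm x\otimes\bm y)=C\bm x\otimes D\bm y$, I get
\begin{equation}
    (e_{j(n)}^\top\otimes B_j)\bm u=\sum_{i=1}^n (e_{j(n)}^\top e_{i(n)})\otimes B_j\bm u_i = B_j\bm u_j,
\end{equation}
because $e_{j(n)}^\top e_{i(n)}=\delta_{ij}$ is a scalar and a scalar Kronecker a vector is just scalar multiplication. So the $j$-th selector block only sees the $j$-th slice $\bm u_j$. This is the only point where the special structure is used: the selectors are mutually orthogonal, so the blocks act on disjoint coordinate sets.

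\textbf{Step 2: combine the blocks.} By Lemma~\ref{lem:kernel-stack}, applied inductively over the $n$ stacked blocks, $A\bm u=\bm 0$ if and only if $B_j\bm u_j=\bm 0$ for every $j$. Equivalently, $\bm u_j\in\ker B_j$ for all $j$, that is, $\bm u\in\sum_i e_{i(n)}\otimes\ker B_i$. Conversely, any vector of that form is annihilated by every block, again by Step 1.

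\textbf{Step 3: show the sum is direct.} The subspaces $e_{i(n)}\otimes\R^{n'}$ have pairwise disjoint coordinate supports. Hence the subspaces $e_{i(n)}\otimes\ker B_i\subseteq e_{i(n)}\otimes\R^{n'}$ intersect each other only in $\bm 0$, and more strongly the representation of $\bm u$ is unique. Therefore the sum is direct in the sense of Definition~\ref{def:direct-sum}, extended to $n$ summands via uniqueness of the decomposition.

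None of these steps is a real obstacle. The only delicate point is bookkeeping: checking that the coordinate ordering of $e_{i(n)}\otimes\bm u_i$ matches the column indexing of $e_{i(n)}^\top\otimes B_i$. This holds because both follow the same Kronecker convention, with the first factor as the slow index.
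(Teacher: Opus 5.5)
Your proof is correct and follows essentially the same route as the paper. Both split $\bm u=\sum_i e_{i(n)}\otimes\bm u_i$ into coordinate chunks, use $e_{j(n)}^\top e_{i(n)}=\delta_{ij}$ to see that the $j$-th block acts only as $B_j\bm u_j$, and conclude that $\bm u\in\ker A$ iff $\bm u_j\in\ker B_j$ for all $j$; your certification of directness through uniqueness of the block decomposition is the right one for $n>2$ summands, where pairwise trivial intersection alone would not suffice.
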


\begin{proof}
Let $x\in\mathbb{R}^{nn'}$ be an arbitrary vector in the domain space. 
We can uniquely decompose $x$ along the standard basis $\{e_{i(n)}\}$ as:
$x = \sum_{i=1}^{n} e_{i(n)} \otimes x_i$.
That is, any vector $x$ can be uniquely split into $n$ consecutive chunks $x_1, \ldots, x_n \in \mathbb{R}^{n'}$ of equal length $n'$, where $x_i$ occupies the $i$-th block of coordinates of $x$.
Applying the $i$-th block of $A$ to $x$, the orthogonality of the standard basis vectors, i.e.,  $e_{i(n)}^\top e_{j(n)} = \delta_{ij}$, yields:
\begin{equation}\label{eq:selector-decomposition}
    \Big(e_{i(n)}^\top \otimes B_i\Big) x 
    = \Big(e_{i(n)}^\top \otimes B_i\Big)\left( \sum_{j=1}^{n} e_{j(n)}\otimes x_j\right)
    =\sum_{j=1}^{n} \Big(e_{i(n)}^\top e_{j(n)} \Big) \otimes B_i x_j = B_i x_i.
\end{equation}
Moreover, since $e_{1(n)}, \ldots, e_{n(n)}$ are linearly independent, the subspaces $e_{j(n)}\otimes \ker B_j \;(j=1,\ldots,n)$ intersect trivially, 
i.e., $\bigl(e_{j(n)}\otimes \ker B_j\bigr) \cap \bigl(e_{k(n)}\otimes \ker B_k\bigr) = \{0\}$ for $j\neq k$. 
Hence, the sum $\sum_{i=1}^{n}\bigl(e_{i(n)}\otimes \ker B_i\bigr) = \bigoplus_{i=1}^{n}\bigl(e_{i(n)}\otimes \ker B_i\bigr) $ is a direct sum under Definition~\ref{def:direct-sum}.

Since each $x_i$ is uniquely determined by reading off the $i$-th chunk of $x$, the condition $x \in \ker A$ forces $x_i \in \ker B_i$ for each $i$ independently (Eq.~\ref{eq:selector-decomposition}), and thus,  $x$ decomposes uniquely as $x = \sum_i e_{i(n)}\otimes x_i$ with $x_i\in\ker B_i$.
Thus, $x \in \ker A \iff Ax = \bm{0} \iff B_i x_i = 0 \text{ for all } i$. 
This implies $x_i \in \ker B_i$ for all $i$, completing the proof of the direct sum structure.
\end{proof}

\begin{remark}[Necessity of exhaustive selection in toric models]
\label{rem:exhaustive-selection}
Proposition~\ref{prop:selection_stack} highlights a crucial dichotomy in the kernel structure of the Kronecker-stack configuration matrix. 
While the all-one vector factor (\emph{copy vector}) in $\bm{1}^\top \otimes B$ \emph{copies} and couple together the states, yielding the \emph{anti-copy} vectors $\delta_i$ in the kernel, 
the standard basis factor in $e_{i(n)}^\top \otimes B_i$ acts as a selector that perfectly decouples the kernel into independent, branch-wise components.

In a toric model, the configuration matrix $A$ must contain the all-one vector $\bm{1}$ in its row space, which requires the column sums of $A$ to be constant (i.e., the model must be homogeneous). 
When introducing a selector $e_{i(n_k)}^{\top}$  with respect to the $k$-th factor to capture branch-wise or context-specific structures, we cannot arbitrarily omit any other state; doing so would result in zero-columns for the omitted states, directly violating the constant column sum property. 
Consequently, we must exhaustively use all standard basis vectors $i=1, \dots, n_k$ to cover the entire state space. 
\end{remark}

\subsection{Construction of Signatures}
Together with the explicit kernels of the three allowed local factors---identity, copy, and selector (Lemma~\ref{lem:kernel-factors})---this implies that the integer kernel of a Kronecker-stack matrix can be generated by local standard basis vectors and anti-copy vectors in a combinatorial way. 
In what follows, we make this observation precise: in Theorem~\ref{thm:factorwise_integer_kernel_generation} 
we show that the factor-wise admissible tensors constructed from these local vectors generate the entire integer kernel, 
after giving a series of definitions, lemmas, and corollaries.
This is the key step that makes the Graver basis, and hence the algebraic signature, effectively enumerable within the Kronecker-stack class.

\begin{definition}[Admissible set]
\label{def:admissible-set}
Let $A$ be a matrix in the Kronecker-stack class defined in Definition~\ref{def:kronekcer-class}: 
\begin{equation}
A =\begin{bmatrix}
        A^{(1)}\\\vdots\\A^{(R)}
\end{bmatrix},
\quad
A^{(r)}=\bigotimes_{k=1}^{m} B^{(r)}_{k},
\quad\mathrm{where}\;
B^{(r)}_{k}\in
\left\{
E_{n_k},\,
\mathbf{1}_{n_k}^{\top},\,
e_{i(n_k)\;(i\in [n_k])}^{\top}
\right\}.
\end{equation}

Define the set of free vectors and anti-copy vectors for its $k$-th mode ($k=1,\ldots, m)$ by
\begin{align}
\mathcal E_k:&=\big\{e_{1(n_k)},\ldots,e_{n_k(n_k)}\big\}\\
\mathcal D_k :&=\big\{\delta_{i(n_k)}:=e_{i(n_k)}-e_{n_k(n_k)}\mid 1\leq i\leq n_k-1\big\}
\end{align}
and let $\mathcal L_k:=\mathcal E_k \cup \mathcal D_k $.

For a tuple of local vectors chosen from $\mathcal{L}_k$ across modes, 
$(\bm u_1,\ldots,\bm u_m)\in \mathcal L_1\times\cdots\times\mathcal L_m$,
write
\begin{equation}
\bm u:=\bm u_1\otimes\cdots\otimes \bm u_m\in \Z^{N},
\end{equation}
where $N=\prod_{k=1}^{m} n_k$.
We call such a pure tensor $\bm u$ \emph{admissible} for $A$ if, for every row block $r=1,\ldots,R$, at least one local vector $\bm u_{\ell(r)}$ is annihilated by the corresponding local matrix:
\begin{equation}
\label{eq:global_annihilator_rule}
\forall r\in\{1,\ldots,R\}\;\left(
\exists\; \ell(r) \in\{1,\ldots,m\}
\;\text{such that}\;
B^{(r)}_{\ell(r)}\bm u_{\ell(r)}=0\right) .
\end{equation}

The collection of all admissible pure tensors of $A$ is called the  \emph{admissible set}, denoted as:
\begin{equation}
    C(A):=\Big\{\bm{u} \in \Z^N\mid \bm{u}=\bm u_1\otimes\cdots\otimes\bm u_m;\, \bm u_k\in \mathcal L_k\,\mathrm{for}\,k=1,\ldots, m;\, \bm{u}\mathrm{\,is\,admissible\;for}\;A\Big\}
\end{equation}
\end{definition}

Note that the annihilating mode $\ell(r)$ depends on $r$ and may differ across different row blocks.

\begin{corollary}\label{cor:intersection} 
Let $C(A^{(r)})$ be an admissible set for $A^{(r)}$ that is a single row-block matrix; $A^{(r)}=\bigotimes_{k=1}^m B_k^{(r)}$ and $A$ is a vertical stack of such matrices $A^{(1)},\ldots,A^{(R)}$, where the consisting factors $B_k^{(r)}$  in the same mode $k$ across $r$ is either, $E_{n_k}, \bm{1}_{n_k}^\top$ or $e_{i(n_k)}^\top$,
then the following intersection holds.
    \begin{equation}
        C(A) = \bigcap_{r=1}^R C(A^{(r)})
    \end{equation}
\end{corollary}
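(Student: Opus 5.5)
The plan is to argue directly from Definition~\ref{def:admissible-set} by unpacking what membership in each admissible set means. Both $C(A)$ and every $C(A^{(r)})$ are subsets of the same ambient collection of pure tensors $\bm u=\bm u_1\otimes\cdots\otimes\bm u_m$ with $\bm u_k\in\mathcal L_k$. The sets $\mathcal L_k=\mathcal E_k\cup\mathcal D_k$ depend only on the column sizes $n_k$, and these are shared by all blocks because every factor choice in Eq.~\eqref{eq:choices-factors} has $n_k$ columns. So the candidate tuples are literally identical for $A$ and for each single block $A^{(r)}$. The only thing distinguishing the sets is the admissibility predicate \eqref{eq:global_annihilator_rule}.

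Next I will specialize the predicate to a single block. For the one-block matrix $A^{(r)}=\bigotimes_k B_k^{(r)}$, the quantifier over blocks ranges over the single index $r$. Hence $\bm u\in C(A^{(r)})$ if and only if there exists $\ell\in\{1,\ldots,m\}$ with $B^{(r)}_{\ell}\bm u_{\ell}=0$. Call this condition $P_r(\bm u_1,\ldots,\bm u_m)$. For the stacked matrix $A$, the predicate reads: for all $r$, there exists $\ell(r)$ with $B^{(r)}_{\ell(r)}\bm u_{\ell(r)}=0$. This is exactly $\bigwedge_{r=1}^R P_r$. The index $\ell(r)$ may vary with $r$, as the note after the definition stresses, so no uniform choice of mode is imposed. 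The conjunction is therefore genuinely the blockwise condition.

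Both inclusions then follow immediately. If $\bm u\in C(A)$, each $P_r$ holds, so $\bm u\in C(A^{(r)})$ for all $r$. Conversely, if $\bm u$ lies in every $C(A^{(r)})$, choose for each $r$ the witnessing mode $\ell(r)$; together these witness \eqref{eq:global_annihilator_rule} for $A$. I would also remark that, by Lemma~\ref{lem:kernel-stack} and the Kronecker identity used in Lemma~\ref{lem:kronecker}, this mirrors $\ker A=\bigcap_r\ker A^{(r)}$ at the level of pure tensors. Each $P_r$ implies $A^{(r)}\bm u=\bigotimes_k B^{(r)}_k\bm u_k=0$, so in particular $C(A)\subseteq\ker_{\Z}A$.

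The only point needing care is a well-definedness subtlety, not a computation. A pure tensor $\bm u\in\Z^N$ might admit more than one factorization into local vectors; for example, signs can be moved between factors, although $\mathcal L_k$ contains no negated vectors. I would handle this by reading the definition at the level of tuples, or by noting that any factorization witnessing admissibility for $A$ witnesses it for each block. For the converse, I would observe that the condition $B\bm u_\ell=0$ is invariant under the scalar rescalings relating two factorizations of the same nonzero pure tensor. Thus the choice of factorization does not affect membership, and the intersection identity holds as sets in $\Z^N$.
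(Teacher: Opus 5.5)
Your proposal is correct and follows the same route as the paper: both unpack the admissibility predicate of Definition~\ref{def:admissible-set} and observe that the condition for the stacked $A$ is the conjunction over $r$ of the single-block conditions, with the witnessing mode $\ell(r)$ allowed to vary with $r$. Your version is slightly more complete, since the paper writes out only the inclusion $\bigcap_r C(A^{(r)})\subseteq C(A)$ and does not mention the factorization subtlety you raise; that subtlety is in fact harmless, because no two distinct elements of $\mathcal L_k$ are proportional, so every admissible pure tensor has a unique factorization.
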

\begin{proof}
For $\bm u \in \bigcap_{r=1}^R C(A^{(r)})$, $\bm u \in C(A^{(r)})$ for all $r$.
By Definition~\ref{def:admissible-set}, $\bm{u}=\bm u_1\otimes \cdots\otimes\bm u_m \in C(A^{(r)})$  has at least one mode $\ell(r) \in \{1,\ldots m\} $ such that $B_{\ell(r)}^{(r)}\bm u_{\ell(r)}=\bm{0}$.
This holds for all $r$ since $\bm u \in C(A^{(r)})$ for all $r$, which is exactly the definition of $C(A)$.
\end{proof}

\begin{corollary}[Selector reduction for admissible sets]
\label{cor:selector-reduction}
Under the setting of Proposition~\ref{prop:selection_stack}, 
suppose mode $k$ contains a selector factor $e_{i(n_k)}^\top$ exhaustively
across all states $i=1,\ldots,n_k$ (as explained by
Remark~\ref{rem:exhaustive-selection}).
Let $B_i$ denote the Kronecker-stack matrix obtained by fixing the $k$-th
factor to $e_{i(n_k)}^\top$ and removing that mode.
Then the admissible set splits blockwise:
\begin{equation}
  C(A) = \bigsqcup_{i=1}^{n_k}\bigl\{e_{i(n_k)}\otimes u_i \mid u_i\in C(B_i)\bigr\}\label{eq:select_disjoint_union},
\end{equation}
where $\sqcup$ denotes a disjoint union.
\end{corollary}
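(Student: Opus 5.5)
The plan is to prove the two inclusions in \eqref{eq:select_disjoint_union} directly from Definition~\ref{def:admissible-set}, and then check disjointness. I use Eq.~\eqref{eq:selector-decomposition} from the proof of Proposition~\ref{prop:selection_stack}. I also use Lemma~\ref{lem:kernel-factors}(3), which says that the kernel of a selector $e_{i(n_k)}^\top$ is the $\Z$-span of standard basis vectors only. Accordingly, I read the local vocabulary in a selector mode $k$ as $\mathcal E_k$. This is the reading under which the corollary is stated, since the selector mode is the one being ``fixed'' and removed. So a pure tensor in $C(A)$ has the form $\bm u = e_{i(n_k)}\otimes \bm u'$, after reordering modes so that $k$ comes first as in Proposition~\ref{prop:selection_stack}. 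Here $\bm u'$ is a pure tensor over the remaining modes with factors in the corresponding $\mathcal L_\ell$.

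\emph{Step 1 ($\subseteq$).} Take $\bm u=e_{i(n_k)}\otimes\bm u'\in C(A)$. The row blocks of $A$ split by their mode-$k$ selector $e_{j(n_k)}^\top$, and the blocks with $j=i$ are exactly the row blocks of $B_i$ with mode $k$ reattached.
\begin{itemize}
\item For a block with $j\neq i$, the mode-$k$ factor annihilates, because $e_{j(n_k)}^\top e_{i(n_k)}=0$. The admissibility condition \eqref{eq:global_annihilator_rule} for that block is therefore automatic and imposes nothing on $\bm u'$.
\item For a block with $j=i$, the mode-$k$ factor gives $e_{i}^\top e_i=1\neq 0$. So the annihilating mode $\ell(r)$ must be one of the other modes, which is precisely the admissibility of $\bm u'$ for that row block of $B_i$.
\end{itemize}
Collecting over all rows of $B_i$ gives $\bm u'\in C(B_i)$; this is Corollary~\ref{cor:intersection} applied blockwise.

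\emph{Step 2 ($\supseteq$).} This runs the same computation backwards. Given $\bm u_i\in C(B_i)$, the tensor $e_{i(n_k)}\otimes\bm u_i$ is a pure tensor with factors in the $\mathcal L_\ell$. Blocks with selector $e_j$, $j\neq i$, are annihilated in mode $k$. Blocks with selector $e_i$ are annihilated in the mode witnessing admissibility of $\bm u_i$ in $B_i$. Hence $e_{i(n_k)}\otimes\bm u_i\in C(A)$.

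\emph{Step 3 (disjointness).} The pieces for different $i$ are disjoint. The map $\bm x\mapsto(\bm x_1,\dots,\bm x_{n_k})$ into chunks, from the proof of Proposition~\ref{prop:selection_stack}, sends $e_{i}\otimes\bm u_i$ to a vector supported only in chunk $i$. Since $\bm u_i\neq 0$ (its factors are nonzero local vectors), the support identifies $i$ uniquely. Selector-completeness (Remark~\ref{rem:exhaustive-selection}) guarantees that every $i\in[n_k]$ indexes a genuine block $B_i$, so the union runs over all $i$ with no missing states.

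\emph{Main obstacle.} The delicate step is justifying that the mode-$k$ local vector ranges over $\mathcal E_k$ rather than all of $\mathcal L_k$. An anti-copy vector $\delta_j$ in the selector mode would fail to be annihilated by both $e_j^\top$ and $e_{n_k}^\top$, and would couple two branches. I would handle this by arguing from Lemma~\ref{lem:kernel-factors}(3) that the selector's kernel is generated by $\{e_{l}\}_{l\neq i}$, so a selector mode carries no copy relation and its admissible local generators are the standard basis vectors. I would also note that any $\delta_j\otimes\bm w$ equals $e_j\otimes\bm w-e_{n_k}\otimes\bm w$, a sum of elements of the two pieces of \eqref{eq:select_disjoint_union}. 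Consequently nothing is lost at the level of the generated lattice, which is what the subsequent generation theorem requires.
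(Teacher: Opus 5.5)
There is a genuine gap at exactly the step you flag as the main obstacle. It cannot be closed, because with Definition~\ref{def:admissible-set} as written the set equality is false.

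That definition uses $\mathcal L_k=\mathcal E_k\cup\mathcal D_k$ in \emph{every} mode, selector modes included. Lemma~\ref{lem:kernel-factors}(3) is a statement about a $\Z$-span, not about which local vectors may be chosen; indeed $e_l^\top\delta_j=0$ for every $l\notin\{j,n_k\}$, so anti-copy vectors do lie in selector kernels. Your own remark about $e_j^\top$ and $e_{n_k}^\top$ gives the precise condition: $\delta_j\otimes\bm{w}\in C(A)$ if and only if $\bm{w}\in C(B_j)\cap C(B_{n_k})$. The blocks with those two selectors must be annihilated in another mode, while all other blocks die in mode $k$. This intersection can be nonempty.

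Concretely, write the matrix of Example~\ref{ex:example_CI} in selector form,
\[
A'=\begin{bmatrix} e_{1}^\top\otimes E_2\otimes \bm{1}_2^\top\\ e_{2}^\top\otimes E_2\otimes \bm{1}_2^\top\\ e_{1}^\top\otimes \bm{1}_2^\top\otimes E_2\\ e_{2}^\top\otimes \bm{1}_2^\top\otimes E_2 \end{bmatrix}.
\]
It has the same rows, because $E_2\otimes M$ is the stack of $e_1^\top\otimes M$ and $e_2^\top\otimes M$. Here $B_1=B_2$ is the stack of $E_2\otimes\bm{1}_2^\top$ and $\bm{1}_2^\top\otimes E_2$, so $C(B_i)=\{\delta\otimes\delta\}$. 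The right-hand side is therefore $\{e_1\otimes\delta\otimes\delta,\ e_2\otimes\delta\otimes\delta\}$. Yet $\delta\otimes\delta\otimes\delta\in C(A')$, exactly as Example~\ref{ex:example_CI} lists, since every block has a copy factor in mode $2$ or $3$.

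So restricting the selector mode to $\mathcal E_k$ is a change of definition, not a reading of the statement. The paper's own proof has the same hole. It asserts via Proposition~\ref{prop:selection_stack} that every admissible vector is a single $e_{i(n_k)}\otimes u_i$. That proposition only writes kernel vectors as sums $\sum_i e_{i(n_k)}\otimes x_i$, and the paper never checks $u_i\in C(B_i)$ rather than $u_i\in\ker_\Z B_i$.

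What your argument does establish is sound and is worth stating as the actual claim. Steps~1--3 are a complete proof of the displayed equality once Definition~\ref{def:admissible-set} is amended to use only $\mathcal E_k$ in selector modes. Those steps are: blocks with $e_j^\top$, $j\neq i$, die in mode $k$; blocks with $e_i^\top$ transfer admissibility to $B_i$; disjointness follows from chunk support. They are more careful than the paper's sketch.

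With the definition unchanged, your closing identity $\delta_j\otimes\bm{w}=e_j\otimes\bm{w}-e_{n_k}\otimes\bm{w}$ still does the job, since both terms lie in the right-hand side. It gives the lattice-level statement
\[
\langle C(A)\rangle_\Z=\Bigl\langle \bigsqcup_{i=1}^{n_k} e_{i(n_k)}\otimes C(B_i)\Bigr\rangle_\Z=\bigoplus_{i=1}^{n_k} e_{i(n_k)}\otimes\langle C(B_i)\rangle_\Z ,
\]
which is all that Theorem~\ref{thm:factorwise_integer_kernel_generation} uses. Either version is provable, but the corollary as stated is not. You should commit explicitly to one of them rather than presenting the restriction to $\mathcal E_k$ as the intended reading.
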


\begin{proof}
By Proposition~\ref{prop:selection_stack}, any admissible vector (Definition~\ref{def:admissible-set}) can be expressed by a pure tensor $e_{i(n_k)}\otimes u_i$ for $u_i \in \ker_{\Z}B_i$.
Since distinct $\{e_{i(n_k)}\}$ are orthogonal, the \emph{strata}, i.e., the subsets of $\ker_{\mathbb{Z}}A$ sharing a common selector index $i$ for the $k$th mode,  share only the zero vector, 
so Eq.~\eqref{eq:select_disjoint_union} holds as a disjoint union.
\end{proof}

When several modes act as selectors, the same reduction applies mode by mode,
and the resulting strata are indexed by all combinations of their respective selector indices.
To express each block-wise kernel through a common integral basis in the
proof of Theorem~\ref{thm:factorwise_integer_kernel_generation}, we augment
the anti-copy vectors $\mathcal{D}_k$ of Definition~\ref{def:admissible-set}
by a single selector $e_{n_k(n_k)}$ into a basis of the whole mode-$k$ space.

\begin{lemma}[$\mathbb{Z}$-basis of $\mathbb{Z}^N$]
\label{lem:z-basis}
Let $\mathcal{B}_k := \mathcal{D}_k \cup \{e_{n_k(n_k)}\}$ for each mode $k$
as in Definition~\ref{def:admissible-set}, and let
\begin{equation}
  \mathcal{B} := \bigotimes_{k=1}^{m} \mathcal{B}_k
  = \Bigl\{ b_1 \otimes \cdots \otimes b_m \mid b_k \in \mathcal{B}_k \Bigr\}.
\end{equation}
Then $\mathcal{B}_k$ is a $\mathbb{Z}$-basis of $\mathbb{Z}^{n_k}$,
and $\mathcal{B}$ is a $\mathbb{Z}$-basis of $\mathbb{Z}^N$
with $N = \prod_{k=1}^{m} n_k$.
\end{lemma}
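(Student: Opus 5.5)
We argue in two steps: first show that each $\mathcal{B}_k$ is a $\mathbb{Z}$-basis of $\mathbb{Z}^{n_k}$, then pass to the tensor product by a unimodularity argument.

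\emph{Step 1: the single-mode case.} Let $T_k\in\mathbb{Z}^{n_k\times n_k}$ be the matrix with columns $\delta_{1(n_k)},\ldots,\delta_{n_k-1(n_k)},e_{n_k(n_k)}$, in that order. Each $\delta_{i(n_k)}=e_{i(n_k)}-e_{n_k(n_k)}$ has $1$ in position $i$ and $-1$ in the last position. Hence $T_k$ is the identity except that its last row is $(-1,\ldots,-1,1)$. This matrix is lower unitriangular, so $\det T_k=1$ and $T_k$ is unimodular. Its inverse is integral, and it can be written down explicitly:
\begin{equation}
e_{i(n_k)}=\delta_{i(n_k)}+e_{n_k(n_k)}\quad (i<n_k).
\end{equation}
So every standard basis vector of $\mathbb{Z}^{n_k}$ is an integer combination of elements of $\mathcal{B}_k$, and $\mathcal{B}_k$ spans $\mathbb{Z}^{n_k}$ over $\mathbb{Z}$. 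Linear independence follows from $\det T_k\neq 0$, since $\mathcal{B}_k$ consists of $n_k$ vectors. Therefore $\mathcal{B}_k$ is a $\mathbb{Z}$-basis.

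\emph{Step 2: the tensor product.} The set $\mathcal{B}$ consists exactly of the columns of $T:=T_1\otimes\cdots\otimes T_m$, listed in lexicographic order of the factor indices. By the mixed-product rule, $T$ has the integer inverse $T_1^{-1}\otimes\cdots\otimes T_m^{-1}$. Equivalently, the standard determinant formula for Kronecker products gives
\begin{equation}
\det T=\prod_k (\det T_k)^{N/n_k}=1.
\end{equation}
Thus $T$ is unimodular, so its $N$ columns form a $\mathbb{Z}$-basis of $\mathbb{Z}^N$. There is also a direct spanning argument that avoids determinants. Each standard basis vector of $\mathbb{Z}^N$ is a pure tensor $e_{i_1(n_1)}\otimes\cdots\otimes e_{i_m(n_m)}$. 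Substitute the integer expansion of each $e_{i_k(n_k)}$ in $\mathcal{B}_k$ and expand multilinearly. The result is an integer combination of elements of $\mathcal{B}$. Since $|\mathcal{B}|=\prod_k n_k=N$, the spanning set $\mathcal{B}$ must also be linearly independent, hence a basis.

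No step here is a real obstacle. The only point needing care is distinguishing a $\mathbb{Z}$-basis from a mere $\mathbb{Q}$-basis, i.e.\ ensuring the inverse is integral. The explicit unitriangular form of $T_k$ and the formula $e_{i(n_k)}=\delta_{i(n_k)}+e_{n_k(n_k)}$ settle this directly. The other detail is that the elements of $\mathcal{B}$ are pairwise distinct, so the count is exactly $N$; this holds because the columns of a nonsingular $T$ are distinct.
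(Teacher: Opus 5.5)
Your proof is correct and follows essentially the paper's route. In both, the transition matrix $T_k$ is lower unitriangular and hence unimodular, and $\mathcal{B}$ is the column set of $\bigotimes_k T_k$, which is unimodular because its determinant is $\prod_k (\det T_k)^{N/n_k}=1$. Your explicit inverse $e_{i(n_k)}=\delta_{i(n_k)}+e_{n_k(n_k)}$ and the multilinear spanning argument are useful extra detail, but they are not a different approach.
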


\begin{proof}
The transition matrix $T_k$ from $\mathcal{E}_k$ to $\mathcal{B}_k$ is lower
triangular with unit diagonal such as $T_3=\begin{bsmallmatrix}
1&0&0\\
0&1&0\\
-1&-1&1
\end{bsmallmatrix}$ in case of $n_k=3$, hence unimodular
\cite[p.~53]{aoki2012markov}, so $\mathcal{B}_k$ is a $\mathbb{Z}$-basis
of $\mathbb{Z}^{n_k}$.
The transition matrix of $\mathcal{B} = \bigotimes_k \mathcal{B}_k$
from the standard basis of $\mathbb{Z}^N$ is $\bigotimes_k T_k$,
whose determinant equals $\prod_k (\det T_k)^{N/n_k} = 1$.
Hence $\mathcal{B}$ is also a $\mathbb{Z}$-basis of $\mathbb{Z}^N$.
\end{proof}

\begin{corollary}[Single-block kernel generation: selector-free case]
\label{cor:single-block-basis}
Let $A^{(r)} = \bigotimes_{k=1}^{m} B_k^{(r)}$ be a single row-block matrix
with $B_k^{(r)} \in \{E_{n_k},\, \mathbf{1}_{n_k}^{\top}\}$ for all $k$.
Define $\mathcal{B}_r := C(A^{(r)}) \cap \mathcal{B}$, where $\mathcal{B}$
is the $\mathbb{Z}$-basis of Lemma~\ref{lem:z-basis}. Then
\begin{equation}
  \ker_{\mathbb{Z}} A^{(r)} = \langle \mathcal{B}_r \rangle_{\mathbb{Z}}.
\end{equation}
\end{corollary}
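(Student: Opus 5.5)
Let $F:=\{k \mid B_k^{(r)}=\mathbf{1}_{n_k}^\top\}$ be the copy modes and let the remaining modes be identity modes. I will characterise $\mathcal{B}_r$ explicitly, show it spans the kernel over $\mathbb{Q}$ (indeed over $\mathbb{R}$), and then lift the statement to $\mathbb{Z}$ using the unimodularity of $\mathcal{B}$ from Lemma~\ref{lem:z-basis}.

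\textbf{Step 1: describe $\mathcal{B}_r$.} For a basis tensor $b=b_1\otimes\cdots\otimes b_m\in\mathcal{B}$, admissibility for the single block means that some $b_\ell$ is annihilated by $B^{(r)}_\ell$. The identity factor $E_{n_k}$ kills no nonzero vector (Lemma~\ref{lem:kernel-factors}). The copy factor $\mathbf{1}_{n_k}^\top$ kills exactly the anti-copy vectors $\delta_{i(n_k)}$, and does not kill $e_{n_k(n_k)}$. Hence
\[
\mathcal{B}_r=\{\,b\in\mathcal{B}\mid b_k\in\mathcal{D}_k \text{ for some } k\in F\,\}.
\]
Every element of $\mathcal{B}_r$ lies in $\ker_{\mathbb{Z}}A^{(r)}$, because $A^{(r)}b=\bigotimes_k B_k^{(r)}b_k$ has a zero tensor factor. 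This gives the inclusion $\langle\mathcal{B}_r\rangle_{\mathbb{Z}}\subseteq\ker_{\mathbb{Z}}A^{(r)}$.

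\textbf{Step 2: the reverse inclusion via the basis.} Take $z\in\ker_{\mathbb{Z}}A^{(r)}$. By Lemma~\ref{lem:z-basis}, $z=\sum_{b\in\mathcal{B}}c_b\, b$ with unique integer coefficients $c_b$. Write $z=z'+z''$, where $z'$ is the part supported on $\mathcal{B}_r$ and $z''$ is the part supported on the complement $\mathcal{B}\setminus\mathcal{B}_r$. That complement consists of the tensors with $b_k=e_{n_k(n_k)}$ for every $k\in F$ and arbitrary basis vectors $b_k\in\mathcal{B}_k$ in the identity modes. Since $z'\in\ker A^{(r)}$, also $z''\in\ker A^{(r)}$, and it suffices to show $z''=0$. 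The key computation is that $A^{(r)}$ restricted to the span of the complement is injective. Each complement tensor maps to $\bigotimes_{k\notin F} b_k$ times the scalar $\prod_{k\in F}\mathbf{1}^\top e_{n_k(n_k)}=1$, because the identity factors act trivially. For $k\notin F$, the vectors $b_k$ run over the basis $\mathcal{B}_k$ of $\mathbb{Z}^{n_k}$, so the images form the basis $\bigotimes_{k\notin F}\mathcal{B}_k$ of $\mathbb{Z}^{\prod_{k\notin F}n_k}$ and are in particular linearly independent. Therefore $A^{(r)}z''=0$ forces every coefficient of $z''$ to vanish.

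\textbf{Step 3: conclude.} We get $z=z'\in\langle\mathcal{B}_r\rangle_{\mathbb{Z}}$ with integer coefficients, and the integrality comes for free from the uniqueness of integer coordinates in the $\mathbb{Z}$-basis $\mathcal{B}$. A dimension count serves as a sanity check: $|\mathcal{B}_r|=N-\prod_{k\notin F}n_k=N-\operatorname{rank}A^{(r)}$.

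\textbf{Main obstacle.} I expect the only delicate point to be the injectivity argument in Step 2, namely checking that the images of the complement tensors are genuinely independent and do not merely span. This is handled by identifying the images exactly with a tensor basis in the identity modes. Doing so avoids any appeal to the saturation argument used in Lemma~\ref{lem:kronecker}.
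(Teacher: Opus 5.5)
Your proof is correct, but it takes a different route from the paper's. The paper first applies Lemma~\ref{lem:kronecker} inductively to write $\ker_{\Z}A^{(r)}=\sum_{k=1}^{m}\Z^{n_1}\otimes\cdots\otimes\ker_{\Z}B_k^{(r)}\otimes\cdots\otimes\Z^{n_m}$. It then checks that $\mathcal{B}\cap\ker_{\Z}A^{(r)}=\mathcal{B}_r$, and finally asserts that a kernel vector has zero coordinates on $\mathcal{B}\setminus\mathcal{B}_r$. That last assertion actually rests on the sum decomposition: each summand with $B_k^{(r)}=\mathbf{1}_{n_k}^{\top}$ is the $\Z$-span of the basis tensors whose $k$-th factor lies in $\mathcal{D}_k$, and the summands with $B_k^{(r)}=E_{n_k}$ vanish.

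You never use Lemma~\ref{lem:kronecker} or its complement/torsion-freeness argument. Instead you show directly that $A^{(r)}$ maps $\mathcal{B}\setminus\mathcal{B}_r$ one-to-one onto the tensor basis $\bigotimes_{k\notin F}\mathcal{B}_k$. Hence $A^{(r)}$ is injective on the span of the complement, and those coordinates must vanish.

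Your route is more elementary and self-contained. It makes explicit exactly the step the paper leaves implicit, and it yields the rank $\prod_{k\notin F}n_k$ as a by-product. It does depend on the specific choice $\mathcal{B}_k=\mathcal{D}_k\cup\{e_{n_k(n_k)}\}$, under which each copy mode has a single non-annihilated basis vector, sent to the scalar $1$. The paper's route goes through a description of $\ker_{\Z}$ of a Kronecker product that holds for arbitrary integer factors, and it specializes to identity and copy factors only at the end.
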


\begin{proof}
Applying Lemma~\ref{lem:kronecker} inductively, we have a decomposition 
$\ker_{\mathbb{Z}} A^{(r)}= \sum_{k=1}^{m} \mathbb{Z}^{n_1}\otimes\cdots\otimes
\ker_{\mathbb{Z}} B_k^{(r)} \otimes\cdots\otimes\mathbb{Z}^{n_m}$.
The sum over $k$ reflects that a pure tensor lies in $\ker_{\mathbb{Z}} A^{(r)}$
once at least one factor lies in $\ker_{\mathbb{Z}} B_k^{(r)}$.
For  $\bm b = \bm b_1\otimes\cdots\otimes \bm b_m \in \mathcal{B}$, this decomposition shows that
$\bm b \in \ker_{\mathbb{Z}} A^{(r)}$ if and only if at least one factor $\bm b_k$ lies in $\ker_{\mathbb{Z}} B_k^{(r)}$.
By Lemma~\ref{lem:kernel-factors}, $\bm b_k\in \ker_{\mathbb{Z}} B_k^{(r)}$ only when $B_k^{(r)}=\bm{1}_{n_k}^\top$ instead of $E_{n_k}$.
Thus, such $\bm b_k$ is an element of $\mathcal{D}_k$, and therefore $\bm b \in C(A^{(r)})$.

In sum, $\bm{b} \in \mathcal{B}$ that lies in $\ker_{\mathbb{Z}} A^{(r)}$ implies $\bm{b} \in C(A^{(r)})$, and the converse holds by Definition~\ref{def:admissible-set}:
$\mathcal{B} \cap \ker_{\mathbb{Z}}A^{(r)} = \mathcal{B}\cap C(A^{(r)}) = \mathcal{B}_r$ with the last equality as so defined.
Since $\mathcal{B}$ is a $\mathbb{Z}$-basis of $\mathbb{Z}^N$,
every $\bm{z}\in\ker_{\mathbb{Z}}A^{(r)}$ expands uniquely as
$\bm{z}=\sum_i c_i\bm{b}_i$, and $c_i=0$ for $\bm{b}_i\notin\mathcal{B}_r$.
Therefore $\ker_{\mathbb{Z}}A^{(r)}=\langle\mathcal{B}_r\rangle_{\mathbb{Z}}$.
\end{proof}

\begin{lemma}[Span--intersection commutativity on a common $\mathbb{Z}$-basis]
\label{lem:span-intersection}
Let $\mathcal{B} = \{\bm b_1, \dots, \bm b_N\}$ be a $\mathbb{Z}$-basis of $\mathbb{Z}^N$,
that is, a set of vectors such that every $\bm z \in \mathbb{Z}^N$ has a unique
representation
\begin{equation}
  \bm z = \sum_{i=1}^{N} c_i\, \bm b_i, \qquad c_i \in \mathbb{Z}.
\end{equation}
For $r = 1, \dots, R$, let $\mathcal{B}_r \subseteq \mathcal{B}$ be a subset, and
let $\langle \mathcal{B}_r \rangle_{\mathbb{Z}}$ be the subgroup of $\mathbb{Z}^N$ generated by $\mathcal{B}_r$.  
Then
\begin{equation}
  \bigcap_{r=1}^{R} \Big\langle \mathcal{B}_r \Big\rangle_{\Z}
  \;=\;
  \Big\langle \bigcap_{r=1}^{R} \mathcal{B}_r \Big\rangle_{\Z}.
\end{equation}
\end{lemma}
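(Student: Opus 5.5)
The plan is to use the uniqueness of coordinates with respect to the $\mathbb{Z}$-basis $\mathcal{B}$. The key observation is a coordinate characterization of each span $\langle \mathcal{B}_r\rangle_{\Z}$. For any $\bm z\in\Z^N$ with unique expansion $\bm z=\sum_i c_i\bm b_i$, we claim
\begin{equation}
\bm z\in\langle\mathcal{B}_r\rangle_{\Z}\iff c_i=0\ \text{for every}\ \bm b_i\notin\mathcal{B}_r .
\end{equation}
The direction ($\Leftarrow$) is immediate: the expansion itself exhibits $\bm z$ as an integer combination of elements of $\mathcal{B}_r$. For ($\Rightarrow$), suppose $\bm z=\sum_{\bm b_i\in\mathcal{B}_r} d_i\bm b_i$ with $d_i\in\Z$. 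Extend the $d_i$ by zero to all of $\mathcal{B}$; this gives a second representation of $\bm z$ in the basis $\mathcal{B}$. By uniqueness of the representation, $c_i=d_i$ for all $i$, and in particular $c_i=0$ whenever $\bm b_i\notin\mathcal{B}_r$. Here we use that the $\bm b_i$ are distinct elements of a basis, so that no vector is counted twice.

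With this characterization, both inclusions of the lemma follow directly. The inclusion $\supseteq$ is trivial. Since $\bigcap_r\mathcal{B}_r\subseteq\mathcal{B}_s$ for each $s$, the span of the intersection lies in every $\langle\mathcal{B}_s\rangle_{\Z}$.

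For $\subseteq$, take $\bm z$ in every $\langle\mathcal{B}_r\rangle_{\Z}$. By the characterization, its unique coefficients satisfy $c_i=0$ whenever $\bm b_i\notin\mathcal{B}_r$, and this holds for every $r$. So $c_i\neq0$ forces $\bm b_i\in\mathcal{B}_r$ for all $r$, that is, $\bm b_i\in\bigcap_r\mathcal{B}_r$. Hence $\bm z=\sum_{c_i\neq0}c_i\bm b_i$ lies in $\langle\bigcap_r\mathcal{B}_r\rangle_{\Z}$.

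The only substantive step is the coordinate characterization, and it rests entirely on uniqueness of the expansion. That step is where the hypothesis that the $\mathcal{B}_r$ come from a \emph{common} basis is used; for general generating sets, span and intersection do not commute. I will also note the edge case: if $\bigcap_r\mathcal{B}_r=\emptyset$, its span is $\{\bm 0\}$, and the argument above shows that all coefficients vanish, so both sides equal $\{\bm 0\}$.
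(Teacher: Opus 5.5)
Your proof is correct and takes essentially the same approach as the paper. Both arguments rest on the uniqueness of coordinates in the common $\mathbb{Z}$-basis $\mathcal{B}$, which forces $c_i=0$ whenever $\bm b_i\notin\mathcal{B}_r$ for each $r$, and hence whenever $\bm b_i\notin\bigcap_r\mathcal{B}_r$. The only differences are that you isolate this coordinate characterization as a separate claim and explicitly check the empty-intersection case.
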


\begin{proof}
$(\Leftarrow) \;\bigcap_r \langle \mathcal{B}_r \rangle_{\mathbb{Z}} \supseteq \langle \bigcap_r \mathcal{B}_r \rangle_{\mathbb{Z}}$ is immediate.
Any $\bm z \in \langle \bigcap_r \mathcal{B}_r \rangle_{\mathbb{Z}}$ is an integer linear combination of basis vectors in $\bigcap_r \mathcal{B}_r \subseteq \mathcal{B}_q $ for every $q \in\{1,\ldots, R\}$, 
hence lies in $\langle \mathcal{B}_q \rangle_{\mathbb{Z}}$ for all $q$, i.e. $\bm{z}\in \bigcap_r \langle \mathcal{B}_r \rangle_{\mathbb{Z}}$.

For $(\Rightarrow)$, let $\bm z \in \bigcap_r \langle \mathcal{B}_r \rangle_{\mathbb{Z}}$ and write
$\bm z = \sum_{i=1}^{N} c_i \bm b_i$ for the unique coefficients $c_i \in \Z$.
For each $r$, since $\bm z \in \langle \mathcal{B}_r \rangle_{\mathbb{Z}}$,
we can write $\bm z = \sum_{b_i \in \mathcal{B}_r} c'_i \bm b_i$ for some $c'_i \in \Z$.
By uniqueness of the $\mathcal{B}$-expansion, $c_i = c'_i$ for
$\bm b_i \in \mathcal{B}_r$ and $c_i = 0$ for $\bm b_i \notin \mathcal{B}_r$.
Since this holds for every $r$, we have $c_i = 0$ whenever
$\bm b_i \notin \bigcap_r \mathcal{B}_r$,
hence $\bm z \in \langle \bigcap_r \mathcal{B}_r \rangle_{\mathbb{Z}}$.
\end{proof}

\begin{lemma}[Equivalence of $\Z$ span of admissible set and intersected basis]
\label{lem:C-cap-B}
With $\mathcal{B}_r := C(A^{(r)})\cap\mathcal{B}$ as in
Corollary~\ref{cor:single-block-basis},
\begin{equation}
  \langle C(A) \rangle_{\mathbb{Z}}
  = \Bigl\langle \bigcap_{r=1}^R \mathcal{B}_r \Bigr\rangle_{\mathbb{Z}}.
\end{equation}
\end{lemma}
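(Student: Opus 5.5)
The plan is to prove the two inclusions separately, passing through the integer kernel $\ker_{\Z}A$. The key tools are Corollary~\ref{cor:intersection}, Corollary~\ref{cor:single-block-basis} and Lemma~\ref{lem:span-intersection}.

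\emph{Inclusion $\supseteq$.} By definition $\mathcal{B}_r\subseteq C(A^{(r)})$ for every $r$, so $\bigcap_r\mathcal{B}_r\subseteq\bigcap_r C(A^{(r)})$. By Corollary~\ref{cor:intersection} the right-hand side equals $C(A)$. Every element of $\mathcal{B}$ is a pure tensor of vectors from $\mathcal{D}_k\cup\{e_{n_k(n_k)}\}\subseteq\mathcal{L}_k$, so it is a legitimate candidate for $C(A)$. Taking $\Z$-spans gives $\langle\bigcap_r\mathcal{B}_r\rangle_{\Z}\subseteq\langle C(A)\rangle_{\Z}$.

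\emph{Inclusion $\subseteq$, selector-free case.} First, every admissible $\bm u\in C(A)$ lies in $\ker_{\Z}A$. For each $r$, the annihilating mode $\ell(r)$ kills the pure tensor by the mixed-product rule used in Lemma~\ref{lem:kronecker}, so $\bm u\in\ker_{\Z}A^{(r)}$ for all $r$. Then Lemma~\ref{lem:kernel-stack} gives $\bm u\in\ker_{\Z}A$. Hence
\[
\langle C(A)\rangle_{\Z}\subseteq\ker_{\Z}A=\bigcap_{r=1}^R\ker_{\Z}A^{(r)}.
\]
When all factors are identities or copy vectors, Corollary~\ref{cor:single-block-basis} gives $\ker_{\Z}A^{(r)}=\langle\mathcal{B}_r\rangle_{\Z}$ with $\mathcal{B}_r\subseteq\mathcal{B}$ a subset of a common $\Z$-basis. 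Lemma~\ref{lem:span-intersection} then gives $\bigcap_r\langle\mathcal{B}_r\rangle_{\Z}=\langle\bigcap_r\mathcal{B}_r\rangle_{\Z}$, which closes the chain. As a byproduct, all these sets coincide with $\ker_{\Z}A$.

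\emph{Main obstacle: selector factors.} Corollary~\ref{cor:single-block-basis} is stated only for the selector-free case. The kernel of $e_{i(n_k)}^\top$ is basis-aligned with $\mathcal{B}_k$ when $i<n_k$: it is spanned by $\{\delta_j\}_{j\neq i}\cup\{e_{n_k(n_k)}\}$, since $e_{i}^\top\delta_j=0$ for $j\ne i$ and $e_i^\top e_{n_k}=0$. It is \emph{not} basis-aligned for $i=n_k$, because $e_{n_k}^\top\delta_j=-1$. So Lemma~\ref{lem:span-intersection} cannot be applied blindly.

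To handle this, I would first invoke selector-completeness together with Corollary~\ref{cor:selector-reduction} and Proposition~\ref{prop:selection_stack}. These split both $C(A)$ and $\ker_{\Z}A$ stratum-wise as $\bigsqcup_i e_{i(n_k)}\otimes(\cdot)$, applied mode by mode over all selector modes. This reduces the claim to the selector-free matrices $B_i$ on each stratum, where the argument above applies. The remaining step is to check that the strata reassemble: on a selector mode the relevant local vectors are the $e_{i(n_k)}$, so the basis-intersection in that mode should be read stratum by stratum with the standard basis $\mathcal{E}_k$ in place of $\mathcal{B}_k$. I expect this bookkeeping of which per-mode basis is used to be the delicate point. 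If it fails, I would redefine $\mathcal{B}$ mode-wise, using $\mathcal{E}_k$ on selector modes and $\mathcal{B}_k$ on the others. Such a $\mathcal{B}$ is still a $\Z$-basis by the unimodularity argument of Lemma~\ref{lem:z-basis}, and with it every block kernel becomes basis-aligned.
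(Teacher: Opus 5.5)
\textbf{Verdict: correct, by a different route from the paper.} Your proof is correct where the lemma is meant to hold, namely when all factors are $E_{n_k}$ or $\mathbf{1}_{n_k}^\top$, as in Corollary~\ref{cor:single-block-basis}. It also takes a different route from the paper, and yours is the one that actually works.

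\textbf{Comparison with the paper's argument.} The paper argues purely set-theoretically. From Corollary~\ref{cor:intersection} it obtains $C(A)\cap\mathcal{B}=\bigcap_r\mathcal{B}_r$. It then asserts $C(A)\subseteq\mathcal{B}$ ``by construction'' and concludes that $C(A)$ and $\bigcap_r\mathcal{B}_r$ coincide as sets.

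That assertion is false. The set $\mathcal{L}_k=\mathcal{E}_k\cup\mathcal{D}_k$ contains $e_{1(n_k)},\ldots,e_{n_k-1(n_k)}$, and these are not in $\mathcal{B}_k$. Already in Example~\ref{ex:example_CI}, $e_1\otimes\delta\otimes\delta\in C(A)\setminus\mathcal{B}$. Only the $\Z$-spans agree there, since $e_1\otimes\delta\otimes\delta=\delta\otimes\delta\otimes\delta+e_2\otimes\delta\otimes\delta$.

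Proving that the spans agree requires exactly your sandwich $\bigcap_r\mathcal{B}_r=C(A)\cap\mathcal{B}\subseteq C(A)\subseteq\ker_{\Z}A=\langle\bigcap_r\mathcal{B}_r\rangle_{\Z}$. The inclusion $C(A)\subseteq\ker_{\Z}A$ is the elementary mixed-product observation. The final equality comes from Lemma~\ref{lem:kernel-stack}, Corollary~\ref{cor:single-block-basis} and Lemma~\ref{lem:span-intersection}.

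The cost of your route is that the lemma is no longer a combinatorial statement about admissible sets. It now absorbs the selector-free case of Theorem~\ref{thm:factorwise_integer_kernel_generation}. This is not circular, because those three ingredients are proved independently, and it is what actually closes the chain in the theorem's proof.

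\textbf{Your selector paragraph.} Your final paragraph targets a case where the statement is false, so no bookkeeping can rescue it. Fix $\mathcal{B}$ as in Lemma~\ref{lem:z-basis} and take Example~\ref{ex:example_CSI}. There $e_{2}^\top$ annihilates neither $\delta$ nor $e_2$, and modes $2,3$ of $A^{(3)}$ are identities. Hence $\mathcal{B}_3=\emptyset$ and the right-hand side is $\{\bm 0\}$, while $C(A)=\{e_1\otimes\delta\otimes\delta\}$.

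The lemma therefore has to be read in the selector-free setting, where your first two steps already finish the proof. Re-basing with $\mathcal{E}_k$ on selector modes proves a different identity. Even that identity fails when one mode carries $\mathbf{1}_{n_k}^\top$ in some blocks and selectors in others. A $\Z$-basis adapted to every coordinate hyperplane $\ker e_{i(n_k)}^\top$ must be the standard basis up to sign, and that basis is not adapted to $\ker\mathbf{1}_{n_k}^\top$.
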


\begin{proof}
By Corollary~\ref{cor:intersection}, $C(A) = \bigcap_r C(A^{(r)})$.
Restricting to $\mathcal{B}$,
$C(A)\cap\mathcal{B}
  = \Bigl(\bigcap_r C(A^{(r)})\Bigr)\cap\mathcal{B}
  = \bigcap_r \bigl(C(A^{(r)})\cap\mathcal{B}\bigr)
  = \bigcap_r \mathcal{B}_r$.
We span both sides to obtain $\langle C(A)\cap\mathcal{B}\rangle_\Z = \langle \bigcap_r \mathcal{B}_r\rangle_\Z$.
Since $C(A)\subseteq\mathcal{B}$ by construction, $C(A)=C(A)\cap\mathcal{B}$.
Hence $\langle C(A)\rangle_{\mathbb{Z}}
= \langle C(A)\cap\mathcal{B}\rangle_{\mathbb{Z}}
= \langle\bigcap_r \mathcal{B}_r\rangle_{\mathbb{Z}}$.
\end{proof}

\begin{theorem}[Factor-wise generation of the integer kernel]
\label{thm:factorwise_integer_kernel_generation}
Let \(A\) be a matrix in the Kronecker-stack class. 
Then, its admissible set generates the integer kernel of \(A\):
\begin{equation}
\left\langle C(A)\right\rangle_{\mathbb Z}
=
\ker_{\mathbb Z} A.
\end{equation}
\end{theorem}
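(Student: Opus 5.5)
The plan is to prove the two inclusions separately. The inclusion $\langle C(A)\rangle_{\Z}\subseteq\ker_{\Z}A$ is the easy one. By Lemma~\ref{lem:kernel-stack}, $\ker_{\Z}A=\bigcap_r\ker_{\Z}A^{(r)}$. For each $r$, an admissible pure tensor $\bm u=\bm u_1\otimes\cdots\otimes\bm u_m$ has some mode $\ell(r)$ with $B^{(r)}_{\ell(r)}\bm u_{\ell(r)}=0$. By the mixed-product rule (equivalently, the inclusion part of Lemma~\ref{lem:kronecker} applied inductively), this gives $A^{(r)}\bm u=\bigotimes_k B^{(r)}_k\bm u_k=0$. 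Hence every element of $C(A)$ lies in $\ker_{\Z}A$, and since $\ker_{\Z}A$ is a lattice, so does its $\Z$-span.

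For the reverse inclusion I would first treat the selector-free case, where every factor is $E_{n_k}$ or $\bm 1_{n_k}^\top$. Here Corollary~\ref{cor:single-block-basis} gives $\ker_{\Z}A^{(r)}=\langle\mathcal B_r\rangle_{\Z}$ with $\mathcal B_r=C(A^{(r)})\cap\mathcal B$, all taken inside the common $\Z$-basis $\mathcal B$ of Lemma~\ref{lem:z-basis}. Then Lemma~\ref{lem:span-intersection} gives
\begin{equation*}
\ker_{\Z}A=\bigcap_r\langle\mathcal B_r\rangle_{\Z}=\Bigl\langle\bigcap_r\mathcal B_r\Bigr\rangle_{\Z}.
\end{equation*}
By Corollary~\ref{cor:intersection}, $\bigcap_r\mathcal B_r=C(A)\cap\mathcal B\subseteq C(A)$, so $\ker_{\Z}A\subseteq\langle C(A)\rangle_{\Z}$. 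I only use the inclusion $C(A)\cap\mathcal B\subseteq C(A)$, so I do not need $C(A)\subseteq\mathcal B$; that inclusion is not needed and can fail once selector-type tensors enter.

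For the general case I would induct on the number of modes in which a selector appears. Fix such a mode $k$; by selector-completeness every $e_{i(n_k)}^\top$ occurs there. The first step is a row-space normalization that does not change the kernel. A block with $\bm 1_{n_k}^\top$ in mode $k$ has the same row space as the stack of the blocks obtained by replacing that factor with $e_{i(n_k)}^\top$ for $i=1,\ldots,n_k$, because $\bm 1^\top\otimes B=\sum_i e_i^\top\otimes B$. A block with $E_{n_k}$ in mode $k$ is literally the stack of the blocks $e_{i(n_k)}^\top\otimes(\cdots)$. The first replacement only enlarges the row space, so I must check that it does not shrink the kernel. It does not: in the original stack the $e_i$-selected blocks are all present, and $\bm 1^\top\otimes B$ lies in their joint row space only when every $e_i^\top\otimes B$ does. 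If that fails, I would instead keep the $\bm 1^\top$ block and handle it by a direct check. After normalization, mode $k$ is a pure selector mode, and Proposition~\ref{prop:selection_stack} together with Corollary~\ref{cor:selector-reduction} gives
\begin{equation*}
\ker_{\Z}A=\bigoplus_i e_{i(n_k)}\otimes\ker_{\Z}A_i ,
\end{equation*}
where $A_i$ is the Kronecker-stack matrix in the remaining modes consisting of the blocks selected by $i$. Each $A_i$ has one fewer selector mode, so by induction (with the selector-free case as base) $\ker_{\Z}A_i=\langle C(A_i)\rangle_{\Z}$.

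The remaining and main step is to verify that $e_{i(n_k)}\otimes\bm u$ is admissible for the original $A$ whenever $\bm u\in C(A_i)$; this bookkeeping is where I expect the real difficulty. Blocks with $e_{j(n_k)}^\top$, $j\neq i$, in mode $k$ are annihilated in mode $k$, since $e_{j(n_k)}^\top e_{i(n_k)}=0$. Blocks with $e_{i(n_k)}^\top$ or $E_{n_k}$ in mode $k$ contribute, after normalization, a block of $A_i$, so $\bm u$ supplies an annihilating mode for them. A block with $\bm 1_{n_k}^\top$ in mode $k$ contributes to every $A_i$ the same remaining factors, so again $\bm u$ supplies an annihilating mode. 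Since $e_{i(n_k)}\in\mathcal E_k\subseteq\mathcal L_k$, the tensor $e_{i(n_k)}\otimes\bm u$ is a genuine element of $C(A)$. Summing over $i$ then yields $\ker_{\Z}A\subseteq\langle C(A)\rangle_{\Z}$, which completes the argument.
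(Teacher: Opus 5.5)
Your proof has a genuine gap, and it sits in the normalization of blocks that carry $\mathbf{1}_{n_k}^\top$ in a selector mode $k$.

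Replacing $\mathbf{1}^\top\otimes B$ by the blocks $e_{i(n_k)}^\top\otimes B$, $i=1,\dots,n_k$, changes the row space from $\mathbf{1}\otimes\rowspan(B)$ to $\R^{n_k}\otimes\rowspan(B)$. Selector-completeness only guarantees that each $e_{i(n_k)}^\top$ occurs with \emph{some} remaining factors, not with $B$. So the kernel genuinely shrinks in general. For example, write the $2\times 2$ independence model as $[e_{1(2)}^\top\otimes\mathbf{1}_2^\top;\ e_{2(2)}^\top\otimes\mathbf{1}_2^\top;\ \mathbf{1}_2^\top\otimes E_2]$. It normalizes to $E_2\otimes E_2$, which has trivial kernel, whereas $\ker_\Z A=\Z(\delta\otimes\delta)$. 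Here $\delta\otimes\delta$ kills the copy block through an anti-copy factor in the selector mode itself, which stratification never produces. Consequently your bookkeeping sentence about $\mathbf{1}^\top$-blocks only shows that $\langle C(A)\rangle_\Z$ contains the kernel of the normalized matrix.

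The ``direct check'' you defer is exactly the missing case, and it cannot be supplied: the statement is false when a mode mixes selectors with copy factors. Consider
\[
A=\begin{bmatrix} e_{1(2)}^\top\otimes e_{2(2)}^\top\\ e_{2(2)}^\top\otimes e_{1(2)}^\top\\ \mathbf{1}_2^\top\otimes\mathbf{1}_2^\top\end{bmatrix}=\begin{bmatrix}0&1&0&0\\0&0&1&0\\1&1&1&1\end{bmatrix}.
\]
It lies in the Kronecker-stack class, since both modes are selector-complete, and it is even homogeneous, since $\mathbf{1}_4$ is a row. Its kernel is $\ker_\Z A=\Z(1,0,0,-1)^\top=\Z\,(e_{1(2)}\otimes e_{1(2)}-e_{2(2)}\otimes e_{2(2)})$, a rank-two tensor. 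Every element of $C(A)$ is a nonzero pure tensor lying in $\ker_\Z A$, by your easy inclusion, so $C(A)=\emptyset$ and $\langle C(A)\rangle_\Z=\{0\}\neq\ker_\Z A$.

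What is true is the theorem under the extra hypothesis that no mode containing a selector also contains a factor $\mathbf{1}_{n_k}^\top$. The paper's proof tacitly assumes this, since it invokes Proposition~\ref{prop:selection_stack} and Corollary~\ref{cor:selector-reduction}, whose setting requires every block to carry a selector in mode $k$. Under that hypothesis your argument is correct and follows the paper's route: split $E_{n_k}$-blocks into selector blocks, stratify, induct, and close with the common-basis chain in the selector-free case.

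Two further remarks apply under that hypothesis.
\begin{itemize}
\item Run the induction over stacks that are not required to be selector-complete, because strata need not inherit completeness. This is harmless: a missing selector leaves its stratum unconstrained, and that stratum is spanned by pure tensors of standard basis vectors, all of which are admissible.
\item Your selector-free step improves on the paper. Lemma~\ref{lem:C-cap-B} asserts $C(A)\subseteq\mathcal B$, which already fails in Example~\ref{ex:example_CI}: there $e_1\otimes\delta\otimes\delta\notin\mathcal B$, because $e_{1(2)}\notin\mathcal B_1=\{\delta,e_{2(2)}\}$. You correctly use only $C(A)\cap\mathcal B\subseteq C(A)$ together with the easy inclusion.
\end{itemize}
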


\begin{proof}
If $A$ contains any selector factor $e_{i(n_k)}^\top$ in mode $k$,
Corollary~\ref{cor:selector-reduction}  decomposes $C(A)$ blockwise, and together with the direct-sum decomposition $\ker_\Z A = \bigoplus_i e_{i(n_k)} \otimes \ker_\Z B_i$ of Proposition~\ref{prop:selection_stack}, 
projecting onto each coordinate block gives $\langle C(A)\rangle_\Z=\ker_\Z A$ if and only if $\langle C(B_i)\rangle_\Z=\ker_\Z B_i$ for every $i$.
This reduces the claim to each stratum $B_i$ with one fewer selector mode; applying this recursively, 
it suffices to prove the theorem for $B_k^{(r)}\in\{E_{n_k},\mathbf{1}_{n_k}^\top\}$ for all $k$
and $r$.

In the selector-free case, the equality
$\ker_{\mathbb{Z}}A = \langle C(A)\rangle_{\mathbb{Z}}$
follows from the chain
\begin{equation}
  \ker_{\Z}A
  \overset{\text{Lem.\ref{lem:kernel-stack}}}{=}
  \bigcap_r \ker_{\Z}A^{(r)}
  \overset{\text{Cor.\ref{cor:single-block-basis}}}{=}
  \bigcap_r \langle\mathcal{B}_r\rangle_{\mathbb{Z}}
  \overset{\text{Lem.\ref{lem:span-intersection}}}{=}
  \Bigl\langle\bigcap_r\mathcal{B}_r\Bigr\rangle_{\mathbb{Z}}
  \overset{\text{Lem.\ref{lem:C-cap-B}}}{=}
  \langle C(A)\rangle_{\Z },
\end{equation}
where the kernel is successively expressed as an intersection of
block-wise kernels, rewritten in terms of the common $\mathbb{Z}$-basis
$\mathcal{B}$, commuted with the span, and finally identified with a $\Z$-span of the constructed admissible set $\langle C(A)\rangle_{\Z}$.
\end{proof}

Since $\langle C(A)\rangle_{\mathbb{Z}}=\ker_{\mathbb{Z}}A$
by Theorem~\ref{thm:factorwise_integer_kernel_generation},
the Graver basis of $A$ is obtained by taking the conformally indecomposable elements of $\Z$-span of $C(A)$, 
which satisfies $\langle\graver(A)\rangle_{\Z}=\ker_{\Z}A$.
The explicit construction is described in the next subsection.

\subsection{Examples of constructing signatures}
We demonstrate the combinatorial construction of the generating set, $C(A)$ via Theorem~\ref{thm:factorwise_integer_kernel_generation} for two example models with three binary variables $X_1, X_2, X_3$ where the number of states $n_k=2\;(k=1,2,3)$. 
For brevity, we denote the free basis candidates as $\mathcal{E} = \{e_1, e_2\}$ and the anti-copy candidate as $\mathcal{D} = \{\delta\}$, where $\delta := e_1 - e_2$.

\begin{example}[$2\times 2\times2$ Conditional independence model]
\label{ex:example_CI}
Consider the configuration matrix $A$ of Eq.~\eqref{eq:configuraion-CI},
representing the conditional independence $X_2 \perp\!\!\!\perp X_3 \mid X_1$ that is a hierarchical model with facets $\mathcal{F}=\{\{1,2\},\{1,3\}\}$:
\begin{equation}
\label{eq:configuraion-CI}
    A =\begin{bmatrix}
        A^{(1)} \\ A^{(2)}
    \end{bmatrix}
    =\begin{bmatrix}
        E_2 \otimes E_2 \otimes \bm{1}_2^\top\\
        E_2 \otimes \bm{1}_2^\top \otimes E_2\\
    \end{bmatrix}
\end{equation}
We form a tensor product  $\bm u = \bm u_1 \otimes \bm u_2 \otimes \bm u_3$ by selecting each local vector $\bm u_k \in \{e_1, e_2, \delta\}$ for $k=1, 2, 3$.
\begin{itemize}
    \item For $A^{(1)}= E_2 \otimes E_2 \otimes \bm{1}_2^\top$: Modes 1 and 2 act as identities and cannot annihilate the block. Thus, it strictly requires an annihilator in mode 3, thus we need to select $\bm u_3 = \delta$.
    \item For $A^{(2)}=E_2 \otimes \bm{1}_2^\top \otimes E_2$: Modes 1 and 3 cannot annihilate the block. Thus, it strictly requires an annihilator in mode 2, thus we need to have $\bm u_2 = \delta$.
\end{itemize}
Applying the intersection rule of Lemma~\ref{lem:kernel-stack}, a valid pure tensor must have $\bm u_2 =\bm u_3 = \delta$. 
Crucially, Mode 1 ($X_1$) interacts with the identity matrix $E_2$ in \emph{both} row blocks. Therefore, $\bm u_1$ is never required to act as an annihilator; it survives as a free component. Thus, we can choose any candidate for $\bm u_1 \in \{e_1, e_2, \delta\}$.
Thus the construction yields exactly three valid combinations for the admissible set:
\begin{equation}
    C(A) = \big\{ e_1 \otimes \delta \otimes \delta,\quad e_2 \otimes \delta \otimes \delta,\quad \delta \otimes \delta \otimes \delta \big\}
\end{equation}
Statistically, this perfectly aligns with the definition of conditional independence: we obtain one independent binomial constraint for the stratum $X_1=1$ by $e_1 \otimes \delta \otimes \delta$ and another for the stratum $X_1=2$ by $e_2 \otimes \delta \otimes \delta$. The third element is their algebraic difference, forming the complete admissible set.
\end{example}

\begin{example}[$2\times 2\times2$ Context-specific independence model]
\label{ex:example_CSI}

Consider the configuration matrix $A$ of Eq.~\eqref{eq:configuraion-CSI},
representing the context-specific independence $X_2 \perp\!\!\!\perp X_3$ only when $X_1=1$:
\begin{equation}
\label{eq:configuraion-CSI}
    A = \begin{bmatrix}
        A^{(1)} \\ A^{(2)} \\ A^{(3)}
    \end{bmatrix}
    = \begin{bmatrix}
        e_{1}^\top\otimes\bm{1}_2^\top\otimes E_2\\
        e_{1}^\top\otimes E_2\otimes\bm{1}_2^\top\\
        e_{2}^\top\otimes E_2\otimes E_2\\
    \end{bmatrix}
\end{equation}
We analyze the annihilator conditions:
\begin{itemize}
    \item $A^{(1)}$ will be annihilated if $\bm u_1 =e_2$ (orthogonal to $e_{1}^\top$) or $\bm u_2=\delta$.
    \item $A^{(2)}$ will be annihilated if $\bm u_1 =e_2$ (orthogonal to $e_{1}^\top$) or $\bm u_3=\delta$.
    \item $A^{(3)}$ will be annihilated only if $\bm u_1 = e_1$ (orthogonal to $e_{2}^\top$) since modes 2 and 3 cannot provide an annihilator for this block.
\end{itemize}
To satisfy the intersection rule of Lemma~\ref{lem:kernel-stack}, the tensor must annihilate all three blocks.
If we choose $\bm u_1 =e_2$, it fails to annihilate row block 3, thus we must choose $\bm u_1 = e_1$, forcing us to select $\bm u_2 = \delta$ (for block 1) and $\bm u_3 = \delta$ (for block 2).
This yields exactly one valid combination:
\begin{equation}
   C(A) = \big\{ e_1 \otimes \delta \otimes \delta \big\}
\end{equation}
Comparing this to the conditional independence model above, the structural rule elegantly filters out the constraint for $X_1=2$, leaving only the context-specific binomial.
\end{example}

\subsection{Computing a dictionary}
\label{sec:computing_dictionary}
By Theorem~\ref{thm:factorwise_integer_kernel_generation}, the admissible set $C(A)$ generates the integer kernel $\langle C(A)\rangle_{\mathbb Z}=\ker_{\mathbb Z}A$.
We therefore compute the Graver basis of $A$ as the Graver basis of the lattice generated by
$C(A)$:
\begin{equation}
    \operatorname{Gr}(A)
    :=
    \operatorname{Gr}\bigl(\langle C(A)\rangle_{\mathbb Z}\bigr).
\end{equation}
Since $\langle C(A)\rangle_{\mathbb Z}=\ker_{\mathbb Z}A$, this is exactly the Graver basis of
the integer kernel of $A$ that satisfies
\begin{equation}
\langle \operatorname{Gr}(A)\rangle_{\mathbb Z}
=
\langle C(A)\rangle_{\mathbb Z}
=
\ker_{\mathbb Z}A.
\end{equation}

The admissible set $C(A)$ is generally redundant: it generates the integer lattice, but its elements need not be conformally indecomposable. 
The Graver basis $\graver(A)$ is obtained from this lattice by retaining precisely the conformally indecomposable elements. 
Thus, in computation, we first construct $C(A)$ by the factor-wise admissibility rule, form the lattice $\langle C(A)\rangle_{\Z}$,
and then compute the Graver basis of this lattice.

\begin{example}[Computing a Graver basis of $2\times 2\times 2$ Conditional independence model]
We demonstrate the computation of the Graver basis for Example~\ref{ex:example_CI}. 
The core mechanism of filtering out conformally decomposable elements can be illustrated by the conditional independence model $X_2 \perp\!\!\perp X_3 \mid X_1$.
As we saw, the factor-wise construction yields the following admissible pure tensors in $C(A)$:
\begin{align*}
    \bm u_1 &= e_1 \otimes \delta \otimes \delta=(1,-1,-1,1,0,0,0,0)^\top \\
    \bm u_2 &= e_2 \otimes \delta \otimes \delta=(0,0,0,0,1,-1,-1,1)^\top , \\
    \bm u_3 &= \delta \otimes \delta \otimes \delta = (1,-1,-1,1,-1,1,1,-1)^\top.
\end{align*}
Recall the conformal decomposition $\delta = e_1 + (-e_2)$, where $e_1 \sqsubseteq \delta$ and $(-e_2) \sqsubseteq \delta$. 
This property allows us to explicitly decompose $\bm u_3$ as $\bm u_3 = \bm u_1 + (-\bm u_2)$. 
Because both $\bm u_1$ and $-\bm u_2$ are nonzero kernel elements conformally bounded by $\bm u_3$, the element $\bm u_3$ is conformally decomposable, not belong to $\graver(A)$.
Correspondingly, in terms of the polynomial ideal, this filtering correctly isolates the two quadratic binomials $p_{111}p_{122}-p_{112}p_{121}$ and $p_{211}p_{222}-p_{212}p_{221}$, successfully eliminating the redundant quartic binomial $p_{111}p_{122}p_{212}p_{221}-p_{112}p_{121}p_{211}p_{222}$ from the signature of the conditional independence model.
\end{example}

The preceding example illustrates a general pruning rule for passing from $C(A)$ to $\graver(A)$: 
if an admissible candidate contains an anti-copy factor $\delta_{s,t}=e_s-e_t$ in a non-annihilating mode $k$, its replacement by $e_s$ and by $e_t$ each yields an  admissible pure tensor, thus the candidate splits conformally into these fragments and is excluded before the final Graver-basis extraction.

\vspace{0.5cm}
For model selection, we apply this procedure not to a single model but to a finite family of candidate models. 
The resulting signatures can be arranged as a dictionary: each model is represented by the pattern of binomials that vanish on it. 
The following example illustrates this construction for the $2\times 2\times 2$ case.

\begin{example}[Constructing a dictionary for $2\times 2\times 2$ cases]
\label{eg:dictionaly_222}
As we explicitly list in Table~\ref{tab:hierarchical_models}, there are 9 hierarchical models in the $2\times 2\times 2$ case and additionally we consider 6 context-specific models, totally 15 models.
Applying the Graver-signature computation to those candidate models yields a common list of 13 binomial relations---12 quadratic binomials and one quartic binomial---from which the model-specific signatures are read:
\begin{equation}\label{eq:ideal_minors}
\begin{split}
    &m_1 = p_{111}p_{122}-p_{112}p_{121}, 
    \quad m_2 = p_{211}p_{222}-p_{212}p_{221},\\
    &m_3 = p_{111}p_{212}-p_{112}p_{211},
    \quad m_4 = p_{121}p_{222}-p_{122}p_{221},\\
    &m_5 = p_{111}p_{221}-p_{121}p_{211},
    \quad m_6 = p_{112}p_{222}-p_{122}p_{212},\\
    &m_7 = p_{111}p_{222}-p_{112}p_{221},
    \quad m_8 = p_{111}p_{222}-p_{121}p_{212},\\
    &m_9 = p_{111}p_{222}-p_{211}p_{122},
    \quad m_{10} = p_{112}p_{221}-p_{121}p_{212},\\
    &m_{11} = p_{112}p_{221}-p_{211}p_{122},
    \quad m_{12} = p_{121}p_{212}-p_{211}p_{122},\\
    &m_{13} = p_{111}p_{122}p_{212}p_{221}-p_{112}p_{121}p_{211}p_{222}.
\end{split}
\end{equation}

For each candidate model \(\mathcal M_i\), we define its binary signature vector
\begin{equation}
\bm{d}_i=(d_{i,1},\ldots,d_{i,13})^\top \in\{0,1\}^{13},
\quad
d_{i,j}:=
\begin{cases}
1 & \text{if } m_j \text{ vanishes}\,( \text{i.e.,}\; m_j=0)\, \text{on }\, \mathcal M_i,\\
0 & \text{otherwise}.
\end{cases}
\end{equation}
Stacking these vectors $\bm{d}_i$ as row gives the dictionary matrix
$D_{2\times2\times2}:=
\begin{bmatrix}
\bm{d}_0&
\cdots&
\bm{d}_{14}
\end{bmatrix}^\top
\in\{0,1\}^{15\times 13}$ for 15 models with 13 binomials displayed as a binary array with rows indexing models and columns indexing binomials:
{\footnotesize
\setlength{\arraycolsep}{3pt}
\begin{equation}
\label{eq:dictionary_222}
D_{2\times2\times2}
=
\begin{array}{c|cccccc|cccccc|c}
 & m_1 & m_2 & m_3 & m_4 & m_5 & m_6 & m_7 & m_8 & m_9 & m_{10} & m_{11} & m_{12} & m_{13} \\
\hline
\mathcal{M}_0\;(\textit{Saturated}) 
& 0 & 0 & 0 & 0 & 0 & 0 & 0 & 0 & 0 & 0 & 0 & 0 & 0 \\
\hline
\mathcal{M}_1\;(\textit{No3Way}) 
& 0 & 0 & 0 & 0 & 0 & 0 & 0 & 0 & 0 & 0 & 0 & 0 & 1 \\
\hline
\mathcal{M}_2\;(X_2\ind X_3\mid X_1) 
& 1 & 1 & 0 & 0 & 0 & 0 & 0 & 0 & 0 & 0 & 0 & 0 & 1 \\
\mathcal{M}_3\;(X_1\ind X_3\mid X_2) 
& 0 & 0 & 1 & 1 & 0 & 0 & 0 & 0 & 0 & 0 & 0 & 0 & 1 \\
\mathcal{M}_4\;(X_1\ind X_2\mid X_3) 
& 0 & 0 & 0 & 0 & 1 & 1 & 0 & 0 & 0 & 0 & 0 & 0 & 1 \\
\hline
\mathcal{M}_5\;((X_1,X_2)\ind X_3) 
& 1 & 1 & 1 & 1 & 0 & 0 & 1 & 0 & 0 & 0 & 0 & 1 & 1 \\
\mathcal{M}_6\;((X_1,X_3\ind X_2) 
& 1 & 1 & 0 & 0 & 1 & 1 & 0 & 1 & 0 & 0 & 1 & 0 & 1 \\
\mathcal{M}_7\;((X_2,X_3)\ind X_1) 
& 0 & 0 & 1 & 1 & 1 & 1 & 0 & 0 & 1 & 1 & 0 & 0 & 1 \\
\hline
\mathcal{M}_8\;(X_1\ind X_2 \ind X_3) 
& 1 & 1 & 1 & 1 & 1 & 1 & 1 & 1 & 1 & 1 & 1 & 1 & 1 \\
\hline
\mathcal{M}_9\;(X_2\ind X_3\mid X_1=0) 
& 1 & 0 & 0 & 0 & 0 & 0 & 0 & 0 & 0 & 0 & 0 & 0 & 0 \\
\mathcal{M}_{10}\;(X_2\ind X_3\mid X_1=1) 
& 0 & 1 & 0 & 0 & 0 & 0 & 0 & 0 & 0 & 0 & 0 & 0 & 0 \\
\mathcal{M}_{11}\;(X_1\ind X_3\mid X_2=0) 
& 0 & 0 & 1 & 0 & 0 & 0 & 0 & 0 & 0 & 0 & 0 & 0 & 0 \\
\mathcal{M}_{12}\;(X_1\ind X_3\mid X_2=1) 
& 0 & 0 & 0 & 1 & 0 & 0 & 0 & 0 & 0 & 0 & 0 & 0 & 0 \\
\mathcal{M}_{13}\;(X_1\ind X_2\mid X_3=0) 
& 0 & 0 & 0 & 0 & 1 & 0 & 0 & 0 & 0 & 0 & 0 & 0 & 0 \\
\mathcal{M}_{14}\;(X_1\ind X_2\mid X_3=1) 
& 0 & 0 & 0 & 0 & 0 & 1 & 0 & 0 & 0 & 0 & 0 & 0 & 0\\
\hline
\end{array}.
\end{equation}
}
Each row $\bm{d}_i^\top$ of $D_{2\times2\times2}$ is the model-specific algebraic signature of $\mathcal M_i$.
The model $\mathcal{M}_8$---where all 13 relations vanish---corresponds to complete independence denoted as $X_1 \ind X_2 \ind X_3$. 
The models $\mathcal{M}_5, \mathcal{M}_6$, and $\mathcal{M}_7$ correspond to joint independence (JI) such as $(X_1,X_2)\ind X_3$, 
while $\mathcal{M}_2, \mathcal{M}_3$, and $\mathcal{M}_4$ correspond to conditional independence (CI) such as  $X_2\ind X_3\mid X_1$ . 
$\mathcal{M}_1$ is the no-three-way interaction model, where all pairwise interactions exist but the three-way interaction vanishes and its sole constraint is the quartic binomial $m_{13}$.
$\mathcal{M}_0$---associated with the empty ideal ($\bm{d}_0=\bm{0}$)---corresponds to the saturated model. 
The remaining models $\mathcal{M}_i$ for $i=9, \ldots, 14$ correspond to context-specific independence (CSI) such as $X_2 \ind X_3 \mid X_1=0 $. 
\end{example}

The dictionary matrix therefore converts algebraic signatures into finite binary feature vectors. 
In Section~\ref{sec:learning}, an empirical vanishing pattern obtained from data is encoded as a binary vector $\hat{\bm{m}}\in\{0,1\}^{13}$, and model selection is performed by comparing $\hat{\bm{m}}$ with the rows $\bm{d}_i$ of $D_{2\times2\times2}$, for example by Hamming distance.

\begin{remark}
A signature for each model is not redundant since it consists of binomials corresponding to its Graver basis only.
However, the dictionary as a union of all models  may contain binomials that are not Graver elements of a specific model, but are Graver elements of the other models.
For example, $m_{13}$ enters the dictionary through the No3Way model $\mathcal M_1$, though it is a binomial corresponding to non-Graver basis element for other models such as $\mathcal M_2$ since it can be decomposed into a combination of $m_1$ and $m_2$.
Thus, a row vector of the dictionary matrix might contain $1$ for other binomials than those in a signature, however, 
it can uniquely identify a model as shown in Corollary~\ref{cor:dictionaly-identifiability}.
\end{remark}

\subsection{Copy/Anti-copy mechanism and MIC (minimum invariant constraints)}

The previous subsections defined the Kronecker-stack class and constructed its kernel block by block, assembling candidate kernel bases via Kronecker products and filtering them. 
Stepping back, this construction reveals a fundamentally combinatorial nature of the kernel, which in turn provides geometric insight into why the polynomial constraints take the form of binomials.

In the Kronecker-stack class, the three factor types play fundamentally different roles in the kernel. 
Identity factors $E_n$ have trivial kernels, and selector factors $e_i^\top$ merely restrict attention to a specific state. 
The entire nontrivial kernel structure therefore originates exclusively from copy factors $\bm{1}^\top$, whose kernel is spanned by the anti-copy vectors $\delta_i = e_{i(n)}- e_{n(n)}$ for $i=1,\ldots, n-1$. 
The key insight is that the kernel of the Kronecker-stack class is entirely characterized by anti-copy vectors: copy factors in the configuration matrix induce anti-copy vectors into the kernel, a correspondence we call the \emph{copy/anti-copy mechanism}. 
Since this structure is determined solely by the row space of the configuration matrix and not by the model parameters, it yields a parameter-invariant algebraic foundation for the binomial signatures in this class.

The copy/anti-copy mechanism admits a natural notion of minimality. 
For a single block configuration matrix $A$,  
consider a kernel vector $\bm{u} = \bigotimes_{k=1}^m \bm{u}_k \in \ker A$ that contains only one anti-copy factor $\delta_{i,j(n_k)} = e_{i(n_k)} - e_{j(n_k)}$ for some $k$ where $m$ is the number of factors and $e_{i(n_k)}$ is the $i$-th standard basis vector of the $n_k$-dimensional vector space. 
The corresponding constraint is a linear equality 
$p_{\phi_k(i)} = p_{\phi_k(j)}$, where $\phi_k(i)$ denotes the multi-index of the state determined by selecting $i$ in mode $k$,
which allows us to equate two states, thus trivial.
Therefore, non-trivial minimal constraint arises when two factors are anti-copy vectors: 
for distinct modes $s \neq t$ and states $i_s\neq j_s$, $i_t\neq j_t$, the kernel vector $\bm{u} = \bigotimes_{k=1}^m \bm{u}_k$ where
$\bm{u}_k= \delta_{i_s, j_s(n_s)}$ for $k=s$, 
$\bm{u}_k= \delta_{i_t, j_t(n_t)}$ for $k=t$, 
and $\bm{u}_k=e_{i_k(n_k)}$ otherwise.
Then, $\bm{u}$ corresponds to an irreducible quadratic binomial.
This motivates the following definition.

\begin{definition}[Minimum Invariant Constraint; MIC]
Let $A$ be a Kronecker-stack class matrix whose a single block has a form of $A^{(r)} = \bigotimes_{k=1}^m B_k $ with $B_k \in \{E_{n_k}, \bm{1}_{n_k}^\top, e_{i(n_k)}^\top\}$.
A kernel vector $\bm{u} = \bigotimes_{k=1}^m \bm{u}_k \in \ker A$ is called a \emph{Minimum Invariant Constraint (MIC)} if exactly two factors are anti-copy vectors: 
$\bm{u}_s = \delta_{i_s, j_s(n_s)}$ and $\bm{u}_t = \delta_{i_t, j_t(n_t)}$ for some $s \neq t \in [m]$, and $\bm{u}_k = e_{i_k(n_k)}$ otherwise.
The corresponding binomial is
\begin{equation}\label{eq:MIC-condition}
p_{\phi_{s,t}(i_s, i_t)} p_{\phi_{s,t}(j_s, j_t)} - p_{\phi_{s,t}(i_s, j_t)} p_{\phi_{s,t}(j_s, i_t)} = 0,
\end{equation}
where $\phi_{s,t}(i_s, i_t)$ denotes the multi-index of the state determined by selecting $i_s$ in mode $s$ and $i_t$ in mode $t$ (the remaining modes fixed by the selector factors).
\end{definition}

An MIC is the smallest local invariant unit arising from the copy/anti-copy mechanism of the Kronecker-stack class, and is thus conformally primitive with respect to this mechanism.
An MIC, if it exists for a given configuration matrix, belongs to its Graver basis since it is in the kernel and not conformally decomposable, assuming there are no trivially equivalent states.
Moreover, an MIC is a \emph{circuit}~\citep{herzog2018binomial} since the two anti-copy factors each contribute exactly two states to the support, while the remaining factors are standard basis vectors contributing a single fixed index, yielding a support of exactly four states that is minimal under inclusion by the same assumption. 
Whether an MIC is an \emph{indispensable} binomial depends on the model~\citep{aoki2012markov}. 
In a $2\times 2$ independence model, the toric ideal contains a single MIC 
$p_{11}p_{22} - p_{12}p_{21} = 0$, which is the unique generator and is 
therefore indispensable.
In contrast, in a $2\times 3$ independence model where the three MICs $p_{11}p_{22}-p_{12}p_{21}$, $p_{12}p_{23}-p_{13}p_{22}$, and $p_{11}p_{23}-p_{13}p_{21}$ connect states within a common fiber, the third is dispensable since it can be reached via additive composition of the first 
two.
It is noteworthy that an MIC implies minimality of degree only, i.e., quadratic, and does not refer to minimality with respect to a kernel basis, Graver basis, or Gr\"{o}bner basis for a given configuration matrix. 
This distinguishes MICs from existing algebraic primitives: although MICs coincide externally with circuits and Graver elements, their definition originates from the copy/anti-copy mechanism 
rather than from ideal-theoretic considerations, and their minimality is one of degree rather than set-membership.

\begin{remark}[Higher-order binomials as compositions of MICs]
\label{rem:mic-composition}
The MIC is the minimal quadratic invariant of the Kronecker-stack class. 
We advocate viewing MIC as the atomic building block of the kernel for the class:
every higher-order binomial can be assembled from MICs through one or both of two distinct 
mechanisms, multiplicative and additive.

\emph{Multiplicative composition (tensor product).}
When a Kronecker-stack matrix carries three or more copy factors across blocks,
its kernel admits vectors with three or more anti-copy factors, and the support expands multiplicatively. 
For example, in the No3Way interaction model of the $2\times2\times2$ case, 
the vector $\delta\otimes\delta\otimes\delta$ has support of size $2^3=8$ and yields 
the quartic binomial
\begin{equation}
p_{111}p_{122}p_{212}p_{221} - p_{112}p_{121}p_{211}p_{222}.
\end{equation}
The degree is governed solely by the number of anti-copy factors; 
This binomial is thus a direct multiplicative extension of a MIC dictated by the Kronecker-product structure of the configuration matrix.

\emph{Additive composition (integer combination).}
A second mechanism to generate a higher-degree binomial is an integer combination of kernel vectors, which corresponds at the level of probability coordinates to a multiplication of the corresponding MIC relations.
In particular, when two MICs share a probability coordinate and their integer 
combination cancels it, the additive composition produces a binomial of degree 
other than a power of 2.
Consider a $3\times3$ joint probability table in which the $2$-minor vanishes on the top-left and bottom-right $2\times2$ blocks, giving the two MICs
$\bm u_1 = (1,-1,0,\,-1,1,0,\,0,0,0)$ for $p_{11}p_{22}-p_{12}p_{21}$, and 
$\bm u_2 = (0,0,0,\,0,1,-1,\,0,-1,1)$ for $p_{22}p_{33}-p_{23}p_{32}$, 
which share the coordinate $p_{22}$. 
Their difference $\bm u_1-\bm u_2 = (1,-1,0,\,-1,0,1,\,0,1,-1)$
cancels $p_{22}$ and generates the cubic binomial
\begin{equation}
p_{11}p_{23}p_{32} - p_{12}p_{21}p_{33},
\end{equation}
equivalently obtained by eliminating $p_{22}$ between the relations 
$p_{11}p_{22}=p_{12}p_{21}$ and $p_{22}p_{33}=p_{23}p_{32}$. 
Although the operation is linear at the level of lattice vectors, the cancellation is valid only where the shared coordinate is positive, equivalently away from the structural zeros on the boundary of the probability simplex. 
The degree is thus controlled not by the Kronecker structure but by the combinatorics of shared coordinates. 
This phenomenon has a concrete manifestation in the structure of Markov bases: for the 
no-three-factor interaction model of $3\times3\times K$ contingency tables, the basic moves of degree 4 alone do not form a Markov basis, because applying them sequentially can force intermediate tables to contain negative entries~\citep[Chapter 9]{aoki2012markov}. 
Moves of degree 6, 7, 8, and 10 arise precisely to circumvent these boundary constraints, connecting fibers that the basic moves cannot reach without passing through the boundary. 
The additive composition of MICs is the algebraic mechanism underlying this phenomenon: 
shared coordinates act as pivots, and their cancellation produces the higher-degree moves that navigate around structural zeros.

Taken together, these mechanisms suggest a reframing of the algebraic invariants of the Kronecker-stack class and shed light on the complexity inherent to a computation of Markov basis~\citep{aoki2012markov,hara2012running}. 
Rather than treating higher-degree binomials as separate irreducible primitives, 
one may take the MIC as an atom of the theory and recover the others through tensor-product and integer composition. 
In the present section we use MICs only as the atomic quadratic coordinates underlying the dictionary construction; a systematic treatment of additive composition, structural zeros, and ideal saturation is left for a future research.
\end{remark}

\begin{remark}[MIC as an atom of invariance structure]
\label{rem:MIC-as-atom}
An MIC is a building block from which the invariance structure of the Kronecker-stack class is entirely assembled. 
This perspective unifies a broad range of familiar models under a single algebraic atom. 
In the $2\times2\times2$ case (Table~\ref{tab:hierarchical_models}), independence models, conditional independence models, and context-specific independence models are all expressed as collections of MIC atoms, 
while the no-three-way interaction model arises as their multiplicative assembly.
Beyond these named models, as seen in the previous Remark, additive assembly of MIC atoms captures structures that carry no independence name,  such as the $3\times 3$ table considered above, where $2$-minors vanish only on the top-left and bottom-right  submatrices.
This observation allows us to generalize the notion of independence to that of \emph{invariance}, characterizing a class of probability distributions whose parameter-invariant structure is assembled from MIC atoms through multiplicative and additive composition.
This generalization places the MIC at the center of the theory as the fundamental unit from which all algebraic invariants of the class are assembled, with independence and conditional independence as special cases of invariance.
This last example, however, already lies outside the Kronecker-stack class, so that a full account of the resulting class of invariance models, and of how MIC atoms
compose to generate it, is left for future study.
\end{remark}

Algebraic characterization in this section naturally motivates the empirical question addressed next: given a finite-sample empirical distribution, can we robustly recover the underlying algebraic signature and identify the correct model structure? 
Section~4 studies this operational problem through proof-of-concept numerical experiments.

\section{Model Selection by Algebraic Signature Matching}
\label{sec:learning}

\subsection{Proof-of-Concept Setting}
\label{sec:POC_setting}
As established in the previous sections, the bijective relationship between statistical models and their algebraic signatures enables structural model selection directly from empirical data. 
In this section, we demonstrate this learnability through a proof-of-concept (PoC) experiment utilizing a tractable family of probability models. 
We propose \emph{Algebraic Signature Analysis (ASA)}, 
a structural learning procedure operating for confirmatory model selection via signature matching, which is called ASA-M in this section.
Here, the task for learning is to identify the specific model structure that generates an observed probability distribution characterized by certain inherent algebraic invariants. 
This formulation fundamentally contrasts with conventional machine learning paradigms, where parameter optimization is typically performed under a pre-assumed, fixed model structure. 
Our primary goal is to operationalize a core principle of algebraic statistics—namely, that the parameter-invariant structure within a probability distribution can uniquely dictate the model class. 

To mitigate the combinatorial complexity inherent in large hypothesis spaces, this PoC focuses on a manageable model scale,  leaving the comprehensive discussion of computational complexity and acceleration strategies to Section~\ref{sec:discussion}. 
To this end, we restrict our attention to $K \times K \times K$ models consisting of three discrete variables $X_1, X_2, X_3$, each taking $K$ states, where $K \in \{2, 3, 4, 5\}$. 
By leveraging the framework of hierarchical log-linear models, we can systematically describe any interaction among variables using the facets of a simplicial complex. 
As shown in Table~\ref{tab:hierarchical_models}, there exist 9 distinct facets and the corresponding models and additionally $3K$ context-specific independence with $K$ variations for each of three variables, thus, totally $9 + 3K$ distinct models.

According to the procedure described in Example~\ref{eg:dictionaly_222} in Section~\ref{sec:computing_dictionary}, 
we explicitly construct the algebraic signature for each candidate model as a set of vanishing binomials, then integrate them into a dictionary by taking their union. 
By evaluating the empirical values of the binomials contained in this dictionary, we perform a signature matching procedure to identify the model whose theoretical vanishing pattern best aligns with the data. 
For this PoC verification, we employ a brute-force inspection of signatures, which involves enumerating all candidate binomials, including local $2\times 2$ minors, within a given probability tensor to assess their vanishing conditions.
For the specific case of $K=2$, the resulting dictionary consists of 12 local $2\times2$ minors (quadratic binomials) and a single quartic binomial, as detailed in the system of defining equations of Eq.~\eqref{eq:ideal_minors} in the dictionary of Eq.~\eqref{eq:dictionary_222}.

\subsection{Threshold Selection to Determine Constraint Satisfaction}
\label{subsec:threshold}
\subsubsection{Sampling errors and pseudo-independence}
To determine whether a given empirical probability distribution satisfies the algebraic signature of a specific model, we evaluate the polynomial values of the vanishing binomials comprising that signature. 
Theoretically, if a probability vector $\bm{p}$ is generated by a model that possesses a vanishing binomial $m(\bm{p})$, the relation $m(\bm{p}) = 0$ holds strictly. 
In empirical settings, however, finite-sample fluctuations inevitably cause the evaluated binomial $m(\hat{\bm{p}})$ of an empirical distribution $\hat{\bm{p}}$ to deviate from zero. 
To accommodate this sampling noise, we introduce a decision rule governed by a tolerance threshold $\tau > 0$: if $|m(\hat{\bm{p}})| \leq \tau$, the constraint is determined to be satisfied (i.e., vanishing); otherwise, it is rejected. 
When sampling error is sufficiently large, this thresholding procedure inherently introduces Type~I errors (false positive), where a truly vanishing invariant is incorrectly classified as non-vanishing due to stochastic noise.

Furthermore, the detection framework encounters Type~II errors (false negative), which are prominently driven by a geometric phenomenon termed ``pseudo-independence.'' 
In the standard toric formulation, the joint probability vector is determined by the parameter vector $\bm{\theta}$ via the relation $p_i = \frac{1}{Z} \exp(\bm{\theta}^\top \bm{a}_i)$, where $\bm{a}_i$ denotes the $i$-th column of the configuration matrix $A$ and $Z$ is the normalizing partition function. 
Even when the underlying true model is saturated, a coincidental alignment of the parameter values within $\bm{\theta}$ can position the resulting probability distribution arbitrarily close to an independence manifold (e.g., Segre variety~\citep{sullivant2023algebraic}). 
More precisely, if a linear combination of parameters approaches zero---for instance, if a parameter vector $\bm{\theta}\in \R^4$ and an \emph{alignment vector} $\bm{u} = (1, -1, -1, 1)^\top$ as an independence constraint satisfies $\bm{\theta}^\top \bm{u} \approx 0$, then the distribution exhibits pseudo-independence. 
In the extreme case where $\bm{\theta}^\top \bm{u} = 0$, the saturated model becomes structurally indistinguishable from the independence model, thereby generating false negatives regardless of sample size.

Therefore, the calibration of a threshold $\tau$ requires resolving a fundamental trade-off: 
a higher $\tau$ mitigates sampling-induced false positives, whereas a lower threshold suppresses false negatives arising from pseudo-independence.
The optimal value should be selected to minimize the total error rate, thereby maximizing the classification accuracy of the model selection procedure. 

\subsubsection{
Statistical test on normalized binomial invariants
\label{sec-stats}}

To formalize this optimization as a tractable toy problem, we investigate the simplest non-trivial scenario: a $2 \times 2$ joint distribution where the probability vector $\bm{p}$ resides in the 3-dimensional probability simplex $\Delta_3$. 
In this setting, the structural learning task reduces to a binary hypothesis testing problem between the independence model (hypothesis $H_0$) and the saturated model (hypothesis $H_1$), each associated with a configuration matrix $A_0$ or $A_1$, respectively, 
\begin{equation}
    A_0 = \begin{bmatrix}
        1 & 1 & 0 & 0 \\
        0 & 0 & 1 & 1 \\
        1 & 0 & 1 & 0 \\
        0 & 1 & 0 & 1
    \end{bmatrix}, \qquad
    A_1 = \begin{bmatrix}
        1 & 0 & 0 & 0 \\
        0 & 1 & 0 & 0 \\
        0 & 0 & 1 & 0 \\
        0 & 0 & 0 & 1
    \end{bmatrix}.
\end{equation}
Under $H_0$, the statistical structure is fully characterized by a single vanishing binomial constraint:
$m(\bm{p}) = p_{11}p_{22} - p_{12}p_{21}$.
For this analysis, the log-linear parameters $\bm{\theta} \in [0,1]^4$ of the true generating distribution are assumed to be drawn from a uniform prior distribution.

To ensure that the threshold can be applied uniformly across any local $2 \times 2$ minors regardless of the absolute probability scales of the four constituent cells beyond this toy setting, we introduce a scale-homogeneous variant. 
Specifically, instead of evaluating the raw value of $m(\bm{p})$, we define the {\it normalized binomial invariant} $\tilde{m}(\bm{p})$ as
\begin{equation}
\label{eq-normalized-binomial}
    \tilde{m}(\bm{p}) := \frac{p_{11}p_{22} - p_{12}p_{21}}{p_{11}p_{22} + p_{12}p_{21}}.
\end{equation}
Unlike the unnormalized binomial invariant $p_{11}p_{22} - p_{12}p_{21}$, whose magnitude scales with the absolute probabilities of the four constituent cells, 
the normalized invariant $\tilde{m}(\bm{p})$ is homogeneous of degree zero and hence invariant under a common rescaling of the cells.
A single threshold on $\tilde{m}(\bm{p})$ is comparable across minors of widely differing probability magnitudes, 
independently of any distributional assumption on $\bm{p}$.
Appendix~\ref{sec-stat-normalized} discusses this preference for the normalized invariant in the elementary single-ratio case.

\subsubsection{Monte Carlo calibration of thresholds}

The scale-invariance of $\tilde{m}(\bm{p})$ makes a single threshold $\tau$ meaningful across minors, but does not by itself fix its value.
We therefore set $\tau$ so that it balances the Type~I and Type~II error
estimated from empirical data, as follows. 
In empirical settings, the observed count vector sampled from the true distribution $\bm{p}$ follows a multinomial distribution, yielding an empirical probability vector $\Hat{\bm{p}}$. 
To optimize the threshold $\tau$, we minimize the expected total error rate by balancing the inherent trade-off between Type~I and Type~II errors. 
Let $\alpha(\tau)$ and $\beta(\tau)$ denote the Type~I (false positive) and Type~II (false negative) error rates, respectively, both averaged over the prior distribution of $\bm{\theta}$:
\begin{align}
    \alpha(\tau) &= \int_{\bm{\theta}} \Pr\left( |\Tilde{m}(\hat{\bm{p}})| > \tau \mid \bm{\theta}, H_0 \right) f(\bm{\theta}) d\bm{\theta} \\
    \beta(\tau)  &= \int_{\bm{\theta}} \Pr\left( |\Tilde{m}(\hat{\bm{p}})| \leq \tau \mid \bm{\theta}, H_1 \right) f(\bm{\theta}) d\bm{\theta},
\end{align}
where $f(\bm{\theta})$ is the prior density. 
The total risk objective function $R(\tau)$ to be minimized is formulated as:
\begin{equation} \label{eq:errors}
    R(\tau) = \pi_0 \alpha(\tau) + \pi_1 \beta(\tau),
\end{equation}
where $\pi_0$ and $\pi_1$ denote the prior probabilities of models $H_0$ and $H_1$, respectively. 
The optimal threshold $\tau^*$ satisfies the first-order condition
$$\pi_0 \frac{d\alpha}{d\tau}\bigg|_{\tau^*} = -\pi_1 \frac{d\beta}{d\tau}\bigg|_{\tau^*},$$
equating the prior-weighted marginal reduction in Type~I error to the corresponding marginal increase in Type~II error.

By the Central Limit Theorem, the empirical probability vector $\hat{\bm{p}}$ can be asymptotically approximated by a multivariate Gaussian distribution around the true probability vector $\bm{p}$. 
However, the normalized invariant $\tilde{m}(\hat{\bm{p}})$ is a nonlinear function of $\hat{\bm{p}}$, and therefore it does not itself follow a Gaussian distribution in finite samples. 
Although a first-order Gaussian approximation can be obtained by the delta method under regularity conditions, our threshold calibration does not rely on this approximation. 
Instead, we directly estimate the finite-sample distributions of $\tilde{m}(\hat{\bm{p}})$ under $H_0$ and $H_1$ by Monte Carlo simulation (See Appendix~\ref{app:mc_threshold_calibration} for details).
Under a symmetric prior ($\pi_0 = \pi_1 = 0.5$) and a uniform distribution $\bm{\theta} \sim \text{Uniform}[0,1]^4$, the numerically optimized thresholds $\tau^*$ can be computed as in Table~\ref{tab:tau_theory}.
\begin{table}[htbp]
    \centering
    \caption{Monte Carlo calibrated thresholds $\tau^*$ for the normalized invariant $\tilde{m}$.}
    \label{tab:tau_theory}
    \begin{tabular}{lrrrrr}
    \hline
    \textbf{Models} & \textbf{\# Cells} $C$ & $N_{\mathrm{eff}}/N$ & $N = 1,000$ & $N = 10,000$ & $N = 100,000$ \\
    \hline
    $2{\times}2{\times}2$ & 8   & 0.500 & 0.156 & 0.062 & 0.024 \\
    $3{\times}3{\times}3$ & 27  & 0.148 & 0.232 & 0.104 & 0.042 \\
    $4{\times}4{\times}4$ & 64  & 0.062 & 0.301 & 0.139 & 0.059 \\
    $5{\times}5{\times}5$ & 125 & 0.032 & 0.351 & 0.179 & 0.072 \\
    \hline
    \end{tabular}
\end{table}
Since each $2 \times 2$ minor evaluates exactly four joint states, the classification accuracy is fundamentally governed not by the total sample size $N$, but by the \emph{local effective sample size} $N_{\mathrm{eff}}$, defined as $N_{\mathrm{eff}} = N \times (4/C)$, where $C = K^3$ denotes the total number of cells in the tensor in Table~\ref{tab:tau_theory}. 

In practice, practitioners can operationalize the threshold selection via two distinct paradigms: 
either theoretically utilizing those simulated values (\emph{theoretical values}), or empirically optimizing $\tau$ through a data-driven \emph{grid search} in a learning step  (\emph{grid search values}).

\subsection{Proof-of-Concept Experiment}
\label{subsec:signature-experiment}

\subsubsection{Objectives}
The objective of this proof-of-concept (PoC) experiment is to empirically demonstrate the structural learning capability of the proposed method in identifying the underlying true probability model from finite data. 
Within this framework, structural learning is operationalized as a signature matching problem, wherein the empirical signature vector extracted from noisy data is validated against the pre-computed theoretical signatures of all candidate models. 
Furthermore, the experiment aims to show that the proposed method offers a substantial advantage in computational cost over conventional likelihood-based model selection frameworks, such as the Akaike Information Criterion (AIC)~\citep{akaike1974} and the Bayesian Information Criterion (BIC)~\citep{schwarz1978}.

\subsubsection{Synthetic Data Generation}
Synthetic datasets for the evaluation are generated through a systematic two-step simulation process. 
First, for each candidate model $\mathcal{M}_i$ within the $K \times K \times K$ hypothesis space, a true log-linear parameter vector $\bm{\theta} \in [0,1]^d$ is sampled from a uniform prior distribution. 
The exact joint probability vector $\bm{p}$ is then analytically computed via the toric parametrization dictated by the configuration matrix $A_i$ associated with that specific model. 
Second, to emulate empirical fluctuations, observed count tensors are simulated by sampling from a multinomial distribution governed by the true probability vector $\bm{p}$. 
We evaluate total sample sizes of $N = 10^3, 10^4, \text{ and } 10^5$, and additionally incorporate the exact probability vector $\bm{p}$ itself to represent the asymptotic empirical distribution under an infinite sample size ($N = \infty$). 
For each individual model class, we generate 500 independent realizations as training data and 100 independent realizations as test data, yielding a total test set size of $100 \times (9+3K)$ samples. 
The training dataset is utilized exclusively to optimize the threshold $\tau^*$ via a data-driven grid search. 
Conversely, when deploying the theoretically derived threshold or evaluating the conventional maximum-likelihood-based criteria (AIC/BIC), model selection is performed directly on the test dataset without any prior training phase.

\subsubsection{Identification Procedure}
Given an empirical distribution $\hat{\bm{p}}$ derived from the observed sample counts, the model identification process is executed through the following sequential steps:
\begin{enumerate}
    \item \textbf{Invariant Evaluation and Binarization}: The polynomial values for all binomials contained in the pre-constructed unified dictionary $\mathcal{D}$ are evaluated. For a given test sample $\hat{\bm{p}}$, the normalized invariant values $\tilde{m}_j(\hat{\bm{p}})$ are computed for each binomial in the dictionary and binarized into an empirical signature vector $\hat{\bm{v}}$, where the $j$-th entry is mapped to $1$ if $|\tilde{m}_j(\hat{\bm{p}})| \leq \tau^*$ (indicating constraint satisfaction within noise tolerances) and $0$ otherwise.
    \item \textbf{Signature Matching and Distance Minimization}: The resulting binary vector $\hat{\bm{v}}$ is systematically matched against the theoretical signature vectors $\bm{\bm{d}}_i$ of all candidate models within the hypothesis class. If the empirical vector exhibits an exact match with the theoretical signature of a specific model, that model is uniquely selected.
    \item \textbf{Distance Metrics and Tie-Breaking}: In scenarios where no candidate model provides an exact binary match, we employ the Hamming distance as the primary metric to quantify the proximity between the empirical and theoretical binary signature patterns. 
    Furthermore, if a tie occurs among multiple competing models under the Hamming metric, the Euclidean distance between the raw, unbinarized empirical binomial values  and their theoretical targets (i.e., zero or one) for a model $\mathcal{M}_i$ is utilized as a deterministic tie-breaking criterion to uniquely identify the optimal model.

\end{enumerate}
The optimal tolerance threshold $\tau^*$ is operationalized either by the theoretical values obtained by a Monte Carlo simulation in Table~\ref{tab:tau_theory} or by empirical optimization via a grid search over the learning dataset. 
To benchmark the identification performance, the proposed ASA-M is compared directly with the standard likelihood-based framework using AIC and BIC scores calculated via maximum likelihood estimation as the standard method~\citep{sullivant2023algebraic}.

\subsubsection{Evaluation Metric}
The primary evaluation criterion is the model identification accuracy, defined as the proportion of test samples for which the selected model exactly agrees with the true generating model class.

\subsection{Results}
\label{subsec:learning-results}

\begin{table}[t]
\centering
\caption{%
    Model selection accuracy : test cases $100\times (9+3K)$ for $K=2,3,4,5$}
\label{tab:accuracy}
\setlength{\tabcolsep}{6pt}
\begin{tabular}{lr|cc|ccc}
\toprule
Model family & Sample size ($N$)
      & ASA\,(GS)
      & ASA\,(TH)
      & ML
      & AIC
      & BIC \\
\midrule
\multirow{4}{*}{$K=2\;(2{\times}2{\times}2)$}
 & $\infty$ (true probability)     & ---    & \textbf{1.000} & ---   & ---   & ---   \\
 & $1{,}000$         & 0.368  & \textbf{0.379} & 0.067 & 0.362 & 0.324 \\
 & $10{,}000$        & \textbf{0.693} & \textbf{0.693} & 0.069 & 0.604 & 0.664 \\
 & $100{,}000$       & 0.860  & 0.856          & 0.069 & 0.705 & \textbf{0.864} \\
\addlinespace[4pt]
\hline
\addlinespace[4pt]
\multirow{4}{*}{$K=3\;(3{\times}3{\times}3)$}
 & $\infty$ (true probability)         & ---    & \textbf{1.000} & ---   & ---   & ---   \\
 & $1{,}000$         & 0.542  & 0.394          & 0.056 & \textbf{0.782} & 0.472 \\
 & $10{,}000$        & 0.965  & 0.937          & 0.056 & 0.917 & \textbf{0.988} \\
 & $100{,}000$       & 0.997  & 0.998          & 0.056 & 0.927 & \textbf{1.000} \\
\addlinespace[4pt]
\hline
\addlinespace[4pt]
\multirow{4}{*}{$K=4\;(4{\times}4{\times}4)$}
 & $\infty$ (true probability)          & ---    & \textbf{1.000} & ---   & ---   & ---   \\
 & $1{,}000$         & 0.495  & 0.248          & 0.048 & \textbf{0.898} & 0.281 \\
 & $10{,}000$        & 0.985 & 0.967  & 0.048 & 0.977 & \textbf{0.986} \\
 & $100{,}000$       & \textbf{1.000} & \textbf{1.000} & 0.048 & 0.984 & \textbf{1.000} \\
\addlinespace[4pt]
\hline
\addlinespace[4pt]
\multirow{4}{*}{$K=5\;(5{\times}5{\times}5)$}
 & $\infty$  (true probability)       & ---    & \textbf{1.000} & ---   & ---   & ---   \\
 & $1{,}000$         & 0.287  & 0.222          & 0.042 & \textbf{0.920} & 0.058 \\
 & $10{,}000$        & \textbf{0.995} & 0.923  & 0.042 & 0.992 & 0.958 \\
 & $100{,}000$       & \textbf{1.000} & \textbf{1.000} & 0.042 & 0.995 & \textbf{1.000} \\
\bottomrule
\end{tabular}
\label{tab:result_accuracy}
\end{table}

\subsubsection{Accuracy}

Table~\ref{tab:result_accuracy} reports the model selection accuracy achieved by the evaluated methods across different tensor dimensions and sample sizes. 
Here, we take two methods aim at the optimal threshold $\tau^{*}$:
ASA~(GS) estimates it by a data-driven one-dimensional grid search on the held-out training trials, 
while ASA~(TH) uses the theoretically derived value in Table~\ref{tab:tau_theory}.
As a degenerate baseline, ML represents unconstrained maximum likelihood selection without any model complexity penalty. 
Bold text indicates the highest classification accuracy within each row.

Under the asymptotic condition of an infinite sample size ($N=\infty$), the model selection accuracy for the true probability distribution reaches $1.000$ across all tested dimensions ($K=2, 3, 4, 5$), signifying that the theoretical binomial constraints perfectly characterize and isolate the underlying statistical models. 
This perfect recovery provides empirical and operational validation for our core algebraic framework, as the true joint probability vector $\bm{p}$ generated by a model must strictly satisfy the algebraic invariants defined within the kernel of its corresponding configuration matrix.

In the finite-sample scenarios encompassing 12 distinct empirical conditions, the proposed ASA methods achieved the highest performance among all tested frameworks in 5 scenarios: $(K,N)=(2,10^3), (2,10^4), (4, 10^5), (5,10^4), \text{ and } (5, 10^5)$. 
In these settings, the invariant-matching paradigm successfully outperformed both conventional information-theoretic frameworks, namely AIC and BIC. 
In contrast, the conventional criteria exhibited sample-size-dependent performance profiles; 
AIC achieved the highest accuracy in the smallest sample size regime ($N=10^3$), 
whereas BIC demonstrated superior accuracy in the larger sample size regimes ($N=10^4$ and $N=10^5$). 
Crucially, the unconstrained ML baseline degenerated to a chance-level performance. 
Because ML lacks a regularization penalty for model complexity, straight likelihood maximization systematically favors the unconstrained parameter space, consistently selecting the saturated model $\mathcal{M}_0$ due to its maximal degrees of freedom. 
Consequently, the accuracy of the ML baseline is strictly bounded by the prior reciprocal share of the model space; for instance, it yields a base rate of exactly $1/15 \approx 0.067$ in the $K=2$ setting.

\subsubsection{Thresholds: Theoretical Optimization vs. Grid Search}

Figure~\ref{fig:grid-search-combined} compares the two alternative approaches for determining the optimized threshold: the values derived from the theoretical simulation (denoted by star markers) and those obtained empirically via the data-driven grid search (denoted by cross markers). 
As illustrated in the left panel ($K=2$), the theoretical threshold in cases of smaller $K$ closely approximates the optimal threshold selected through the exhaustive grid search, demonstrating high alignment across the evaluated sample sizes.

\begin{figure}[htbp]
    \centering
    \begin{subfigure}[b]{0.48\linewidth}
        \centering
        \includegraphics[width=\linewidth]{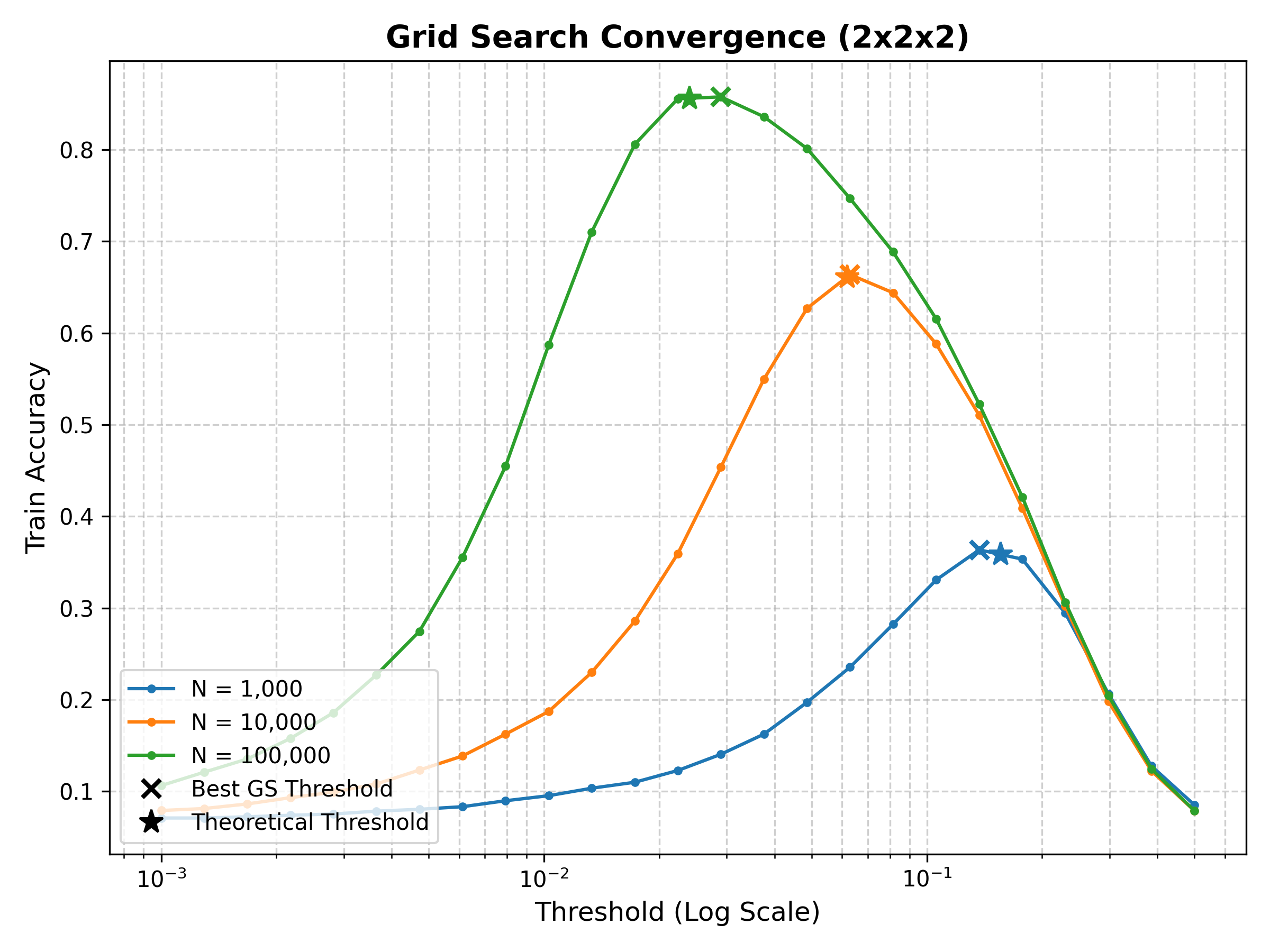}
        \caption{$2\times 2\times 2$ case.}
        \label{subfig:grid_2x2x2}
    \end{subfigure}
    \hfill 
    \begin{subfigure}[b]{0.48\linewidth}
        \centering
        \includegraphics[width=\linewidth]{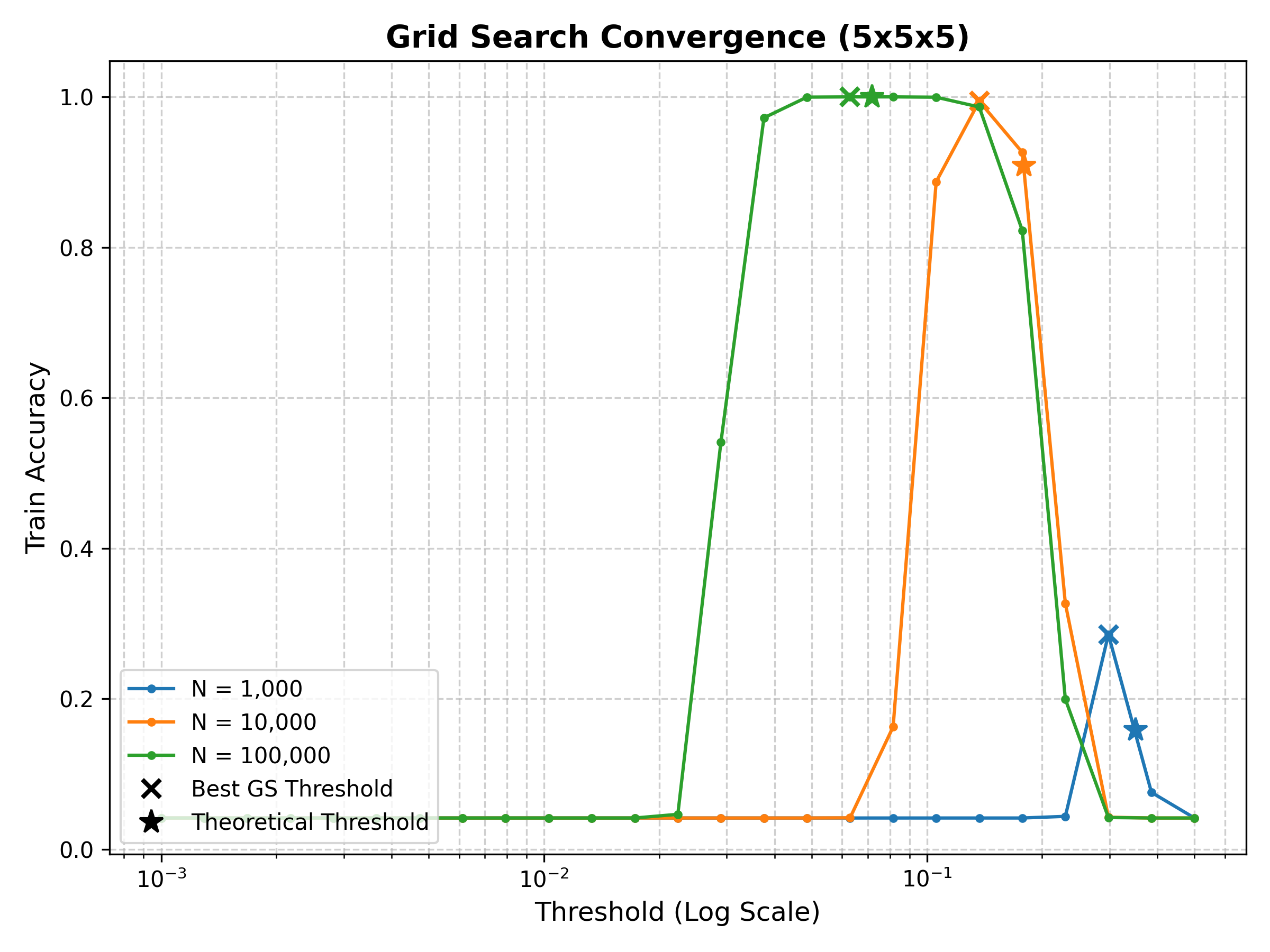}
        \caption{$5\times 5\times 5$ case.}
        \label{subfig:grid_5x5x5}
    \end{subfigure}
    
    \caption{Comparison of the optimized thresholds obtained via the theoretical scaling formula (star markers) and the empirical grid search (cross markers).}
    \label{fig:grid-search-combined}
\end{figure}

In contrast, for models with more states such as $K=5$, combined with smaller sample sizes, 
the theoretical threshold exhibits a noticeable deviation from the empirically selected grid-search threshold.
The impact of sampling noise is larger due to the stronger effect of smaller effective sample sizes.
This discrepancy can be attributed to the underlying assumptions of the analytical derivation; 
the theoretical formula assumes a symmetric (50:50) prior distribution between a single independent hypothesis and a saturated hypothesis. 
In the actual multi-model testing environment with a larger state size $K$, however, the effective prior proportion among the multiple overlapping invariants may deviate from the assumed prior distribution, thereby introducing a structural bias to the values computed by the theoretical formula.

\subsubsection{Error Analysis of Invariant Judgment}

To investigate the source of misclassifications in the ASA classifier,
we analyze judgment errors at the level of individual binomial invariants
rather than at the level of model selection.
Specifically, for each pair of a model $\mathcal{M}_i$ and an invariant $m_j$,
we examine whether the threshold-based binary judgment---vanishing ($\hat{v}_{i,j}=1$)
or non-vanishing ($\hat{v}_{i,j}=0$)---agrees with the true signature entry $\bm{d}_{i,j}$.

We focus on the $2\times2\times2$ case, which comprises 15 models and 13 binomial
invariants, yielding a $15\times13$ matrix of judgment opportunities.
Two types of error are distinguished:
\begin{itemize}
    \item \textbf{Type I error}: $\mathcal{D}_{i,j}=1$ (truly vanishing) but judged as
          non-vanishing ($\hat{v}_{i,j}=0$).
    \item \textbf{Type II error}: $\mathcal{D}_{i,j}=0$ (truly non-vanishing) but judged as
          vanishing ($\hat{v}_{i,j}=1$).
\end{itemize}
\begin{figure}[h]
    \centering
    \includegraphics[width=1.0\linewidth]{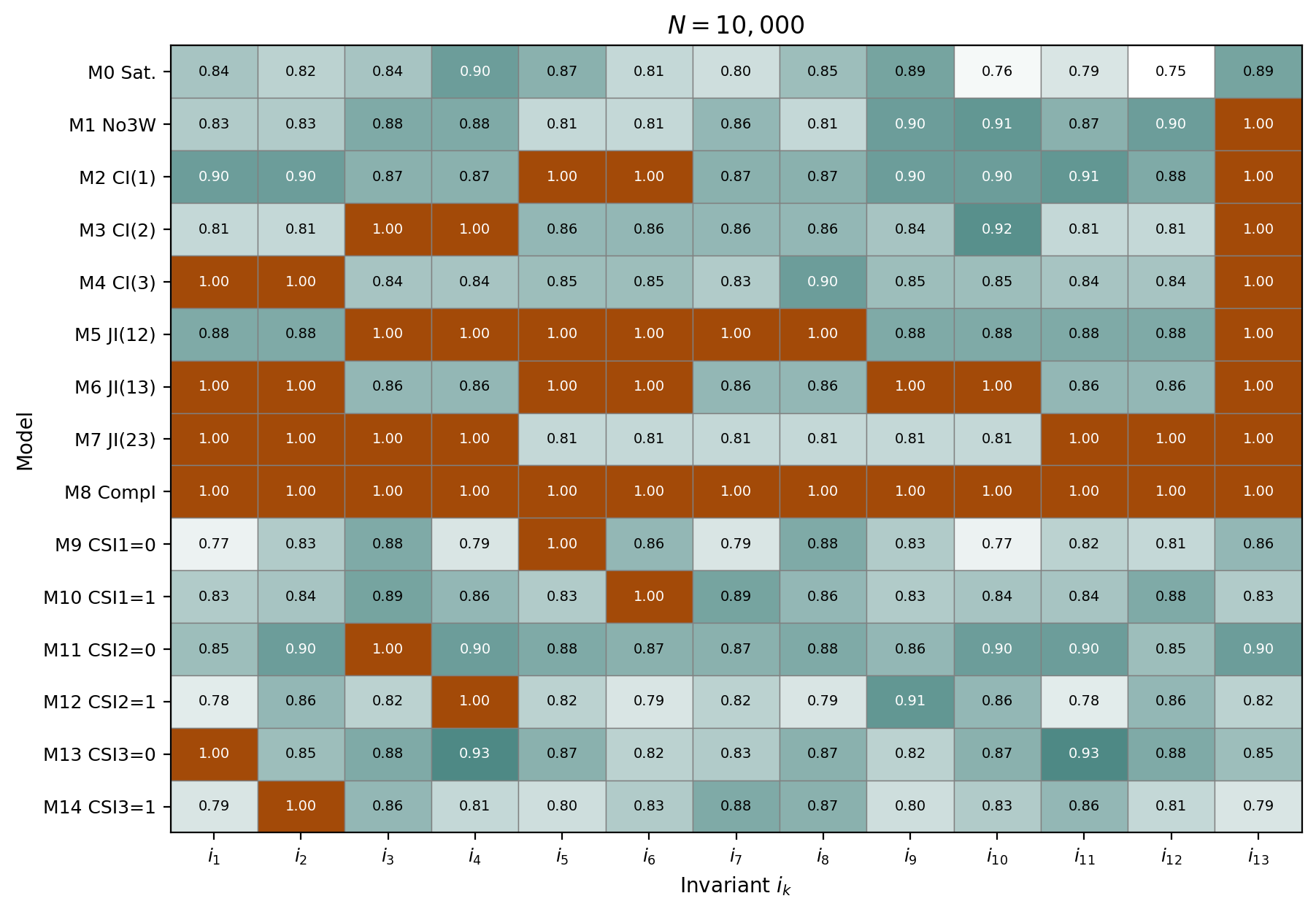}
    \caption{Confusion matrix on invariant judgment accuracy per 
model $M_i$ and binomial $i_k$ for the $2\times2\times2$ case at $N=10^4$, using threshold $\tau=0.062$.
Amber: signature $=1$ (constraint), intensity encoding Type~I error risk;
teal: signature $=0$ (free), intensity encoding Type~II error risk.
Intensity scales from the minimum (white) to $1.00$ in each color.}
    \label{fig:error-analysis}
\end{figure}
Figure~\ref{fig:error-analysis} displays the judgment accuracy for each cell
$(i,j)$, computed over 100 test samples per model.
Amber cells correspond to entries where $\mathcal{D}_{i,j}=1$ (Type I error risk);
teal cells correspond to entries where $\mathcal{D}_{i,j}=0$ (Type II error risk).

The results reveal a clear asymmetry between the two error types.
All amber cells achieve accuracy 1.00 across all sample sizes, indicating that Type I errors are entirely absent:
whenever an invariant truly vanishes, the threshold correctly identifies it as such.
In contrast, teal cells show accuracy in the range $[0.75, 0.93]$ at $N=10{,}000$,
confirming that Type II errors occur at a non-negligible rate.
That is, certain non-vanishing invariants are incorrectly judged as vanishing, and it is precisely these judgment failures that propagate into model misclassifications.
This pattern is consistent with the phenomenon of pseudo-independence,
in which the parameter vector $\bm \theta$ is approximately aligned with the kernel of the binomial map, causing a genuinely non-vanishing binomial to take a value close to zero and thereby fall below the threshold $\tau^*$.
The prevalence of Type II errors thus reflects an intrinsic geometric limitation of the threshold-based classifier—its inability to separate genuine independence from pseudo-independence when $\bm\theta$ lies near the kernel—rather than a deficiency of the threshold calibration procedure.

\subsubsection{Computational Efficiency}
A key practical advantage of the proposed ASA classifier over conventional likelihood-based criteria is its computational simplicity. Model selection via information criteria such as AIC or BIC  requires, for each candidate model, computing the maximum likelihood estimate via an iterative numerical optimizer. Each optimization iteration involves evaluating the log-partition function, computing analytical gradients, and updating the parameter estimates; consequently, the number of iterations required for convergence is inherently data-dependent and cannot be bounded \emph{a priori}.
In contrast, the binomial features in the ASA dictionary are derived from the model family alone and can therefore be precomputed once offline, entirely decoupling the per-sample classification cost from any MLE computation.

Table~\ref{tab:inference_time} reports the wall-clock inference time per sample for ASA and BIC across the four model families considered in this study ($K \in \{2, 3, 4, 5\}$). The ASA framework is between \num{153} and \num{874} times faster than BIC, with the most pronounced relative advantage observed at $K = 3$ (\num{874}$\times$). Although the absolute inference time for ASA scales with the number of binomial features (ranging from \SI{0.015}{\milli\second} at $K=2$ to \SI{0.521}{\milli\second} at $K=5$), this growth reflects the expanding expressiveness of the feature dictionary rather than any iterative algorithmic complexity, and it remains highly amenable to further acceleration via batched matrix operations.

\begin{table}[htbp]
    \centering
    \caption{Inference time for ASA and BIC across model sizes $K \in \{2,3,4,5\}$.}
    \label{tab:inference_time}
    \begin{tabular}{c|rrr|S[table-format=1.3]|S[table-format=2.3]|S[table-format=3.0]}
    \toprule
    $K$ & \# features & \# models & \# dim & {ASA [ms]} & {BIC [ms]} & {Speedup ($\times$)} \\
    \midrule
    2 &     13   & 15 &   8 & 0.015 &  9.285 & 630 \\
    3 &    270   & 18 &  27 & 0.022 & 19.084 & 874 \\
    4 &  1,944   & 21 &  64 & 0.064 & 31.815 & 498 \\
    5 &  8,500   & 24 & 125 & 0.521 & 79.437 & 153 \\
    \bottomrule
    \end{tabular}
\end{table}

\vspace{0.5cm}
In conclusion, the experimental results demonstrate that ASA model selection successfully identifies probabilistic structures through parameter-invariant algebraic constraints. 
Unlike likelihood-based criteria, which evaluate optimized likelihood values balanced against penalties for model dimensionality, the proposed framework directly assesses the empirical vanishing patterns of binomial constraints. 
Thus, these empirical evaluations support the central thesis that ASA signatures can function as computable algebraic features for structural learning in finite discrete probability tensors, achieving performance comparable to established likelihood-based criteria.

\section{A Linguistic Application of ASA-D}
\label{sec:linguistic}
We demonstrate that ASA (Algebraic Signature Analysis) extends beyond synthetic probability tensors to large-scale empirical data, by applying ASA-D---its discovery mode for unknown geometric structure---to a corpus of natural-language sentences. 
Our aim here is not to test a linguistic hypothesis, but to show concretely how ASA-D is deployed on large-scale data, and that the mathematical insights, machinery, and theorems developed above are applicable to a broad new range of targets in machine learning, including computational linguistics and natural language processing. 
The central finding is that rank-one sub-tensors satisfying MIC conditions are recoverable from noisy, high-dimensional language data, and that the recovered structures are linguistically interpretable:
ASA-D groups together words that are mutually exchangeable within a shared context, purely from algebraic constraints on corpus frequencies and with no semantic supervision. 
While the significance of this may not be immediately apparent from an algebraic-statistics standpoint, it is of real practical value for computational linguistics, where the unsupervised discovery of such locally coherent semantic structure is a long-standing concern. 
The geometric structure of word representations in language models such as transformer-based large language models~\cite{vaswani2017} is an active area of study, and ASA
offers a potentially valuable tool in this context. 
As far as we could ascertain, no previous studies have applied algebraic statistics to natural-language data at corpus scale.

\subsection{Experiment}
\label{subsec:setup}

\subsubsection{Language as a toric model.}
A language model assigns a probability to a sentence, that is, to the co-occurrence of its constituent words. 
Restricting attention to subject--verb--object sentences, the probability of a sentence is the joint probability $p_{s,v,o}$ of its three syntactic arguments $s \in \mathcal{S}$,
$v \in \mathcal{V}$, $o \in \mathcal{O}$, 
so that a language model over SVO triples is exactly a distribution over the three random variables $\mathcal{S}\times\mathcal{V}\times\mathcal{O}$.
We model this joint distribution by a toric model defined by Eq.~\eqref{eq:toric} in Section~\ref{sec:primer}.
Under the standard linguistic assumption of compositionality---that the probability of a sentence is governed by interactions among its constituent words rather than by holistic sentence-level parameters---the configuration matrix $A$ decomposes as a Kronecker product of per-word factors and therefore belongs to the Kronecker-stack class of Section~\ref{sec:stack-definition}. 
The toric model and the MIC-based structural learning of the preceding sections thus apply
to language modeling directly, with no modification.

\subsubsection{Corpus and tensor construction.}
We extracted subject--verb--object triples from the De\-pen\-den\-cy-\hspace{0pt}parsed Common Crawl corpus \citep[DepCC;][]{panchenko-etal-2018-building}, the largest dependency-annotated
English corpus available, and accumulated trigram counts into an integer tensor $T \in \mathbb{Z}_{\geq 0}^{500\times 500\times 500}$,
retaining the most frequent subject, verb, and object lemmas.
Two well-known properties of corpus data require attention before the algebraic constraints can be read off~\citep{manning99foundations}. 
First, word frequencies are extremely skewed, following the approximately power-law behavior known as Zipf's law, so we work on a logarithmic scale.
Second, the co-occurrence tensor is extremely sparse, and most empty cells correspond to grammatically or semantically inadmissible word combinations that must not be mistaken for algebraic vanishing; 
we therefore treat such cells as structural zeros via a positive-association filter. 
These two steps yield a preprocessed tensor, whose construction is described in the Appendix~\ref{app:preproc} and~\ref{app:mic}.

\subsubsection{Identifying rank-one structure via MICs.}
ASA-D then operates directly on this tensor. 
For each $2\times2$ minor---a quadruple of cells on two of the three
axes---it evaluates the binomial of the MIC condition  (Eq.~\ref{eq:MIC-condition}) to
determine whether that minor is an MIC.
A block of cells on which the MICs jointly vanish is exactly a rank-one submatrix.
Detecting MICs is thus equivalent to extracting rank-one structure locally and cell by
cell.
Since examining every quadruple of the full $500\times500\times 500$ tensor is computationally prohibitive,
ASA-D proceeds greedily: 
first, fixing one axis at $o\in \mathcal{O}$, it forms the $\mathcal{S}\times\mathcal{V}$ slice at that $o$ and, from the subject and verb words tied together by common vanishing MICs, constructs a rank-one submatrix $S'\times V'$,
then collects every $o'\in\mathcal{O}$ whose slice shares this same rank-one submatrix, yielding a rank-one sub-tensor $T[S',V',O']$ with $S'\subseteq\mathcal{S}$, $V'\subseteq\mathcal{V}$,
$O'\subseteq\mathcal{O}$. 
The full assembly procedure is detailed in Appendix~\ref{app:assembly}.

\subsection{Results}
\label{subsec:results}
\subsubsection{Recovery of rank-one sub-tensors.}
A single ASA-D run on the full $500\times500\times 500$ tensor recovered
$1{,}889$ rank-one sub-tensors passing the quality thresholds, with a
median rank-one fit of $97\%$ (the rank-one fit measures how closely a
sub-tensor approximates an exact rank-one tensor, and equals $100\%$ when
the MIC condition holds exactly; see Appendix~\ref{app:metric-definition} for
all metric definitions).
This recovery of near-exact rank-one structure from noisy corpus counts
is the central empirical finding of ASA-D's application.
Without positing any model in advance, ASA-D discovers local invariant constraints---MICs---as atoms directly from the data, and recombines them into rank-one structures;
the structures so obtained are local, confined to small subsets $S'\times V'\times O'$ rather than spanning the global tensor.
Since structural zeros are removed by the filtering step (see Appendix~\ref{app:mic} for details), 
the recovered sub-tensors are non-sparse, and their consistently high rank-one fit confirms that the extracted structures are substantively rank-one, rather than artifacts of sparse support.
This indicates that MICs are indeed discoverable in real language distributions and that ASA-D is effective at recovering them, confirming that the parameter-invariant algebraic structure posited by our theory is a detectable property of real data, and not merely a
feature of the controlled synthetic models of Section~\ref{sec:learning}.

\subsubsection{Representative examples.}
Table~\ref{tab:linguistic} lists five of the recovered sub-tensors,
together with the three metrics discussed below. 
Each row is a self-contained semantic domain, obtained with no semantic supervision and
purely from the algebraic vanishing pattern of corpus frequencies. 
The first row, for instance, collects research-agent nouns such as \emph{study}, \emph{author}, and \emph{research} as subjects, investigative verbs such as \emph{find} and \emph{examine}, and causal-explanatory nouns such as \emph{effect}, \emph{cause}, and \emph{factor} as objects, and is naturally read as an academic-research domain; we call such a recovered sub-tensor a \emph{frame}.
The remaining rows admit analogous readings as electoral-politics, news/media, software-release, and sports-statistics frames.
Recovering algebraic structure from the corpus through MICs thus yields word groups that are linguistically coherent and interpretable, despite the absence of embeddings, supervision, or any prior specification of the categories. 
This suggests that algebraic statistics can extract linguistic structure from language data in the form of algebraic constraints on corpus frequencies.

\begin{table}[htbp]
\centering
\caption{Five selected rank-one SVO sub-tensors with the corresponding words.
Due to limited space object words are truncated to the first 10 items and only non-functional words are shown.
The \emph{Domain} label is assigned post hoc by the authors.
The three metric definitions are given in Appendix~\ref{app:metric-definition}.}
\label{tab:linguistic}
\small
\setlength{\tabcolsep}{3pt}
\begin{tabular}{>{\raggedright\arraybackslash}p{1.6cm} |
                >{\raggedright\arraybackslash}p{2.5cm}
                >{\raggedright\arraybackslash}p{2.8cm}
                >{\raggedright\arraybackslash}p{4.0cm} | r r r}
\toprule
Domain\newline\footnotesize(Size)
  & $S$ (subjects)
  & $V$ (verbs)
  & $O$ (objects)
  & \makecell{Rank-1\\fit (\%)}
  & \makecell{Lift\\(times)}
  & \makecell{NMF\\recall (\%)} \\
\midrule
Academic research\newline\footnotesize $3\times4\times5$
  & study, author, research
  & find, rule, examine, address
  & effect, cause, factor,\newline influence, theory
  & 98.5 & 210 & 21.7 \\[4pt]
\hline
Electoral politics\newline\footnotesize $3\times3\times8$
  & bush, obama,\newline clinton
  & win, lose, capture
  & percent, house, state, \newline majority, nomination, \newline presidency, vote
  & 97.7 & 417 & 18.1 \\[4pt]
\hline
News media\newline\footnotesize $3\times3\times41$
  & times, news, post
  & publish, feature,\newline run
  & story, item, image, \newline account, kind, number
  & 97.4 & 104 & 19.0 \\[4pt]
\hline
Software tech\newline\footnotesize $3\times5\times26$
  & version, release, update
  & introduce, deliver,\newline include, contain, add
  & system, change, level, \newline support, message, list, \newline version
  & 97.3 & 101 & 17.2 \\[4pt]
\hline
Sports stats\newline\footnotesize $4\times3\times19$
  & brown, johnson,\newline jones, williams
  & contribute, add, record
  & point, goal, performance, \newline hit, score, pair, yard, \newline catch
  & 97.1 & 220 & 21.1 \\
\bottomrule
\end{tabular}
\end{table}

\subsubsection{Algebraic content of a frame.}
Every recovered sub-tensor encodes a vanishing MIC, a binomial relation among the joint probabilities that holds on the logarithmic scale introduced above.
For the \emph{electoral-politics} sub-tensor, one such condition reads
\begin{equation}
\label{eq:ling_mic}
  p(\textit{obama},\textit{win},\textit{vote})\,
  p(\textit{clinton},\textit{lose},\textit{vote})
  -
  p(\textit{clinton},\textit{win},\textit{vote})\,
  p(\textit{obama},\textit{lose},\textit{vote}) = 0.
\end{equation}
Algebraically, Eq.~\eqref{eq:ling_mic} is a relation among corpus frequencies
and nothing more.
Yet it coincides with a purely linguistic judgment: 
\textit{obama} and \textit{clinton}, both U.S.\ presidents, are interchangeable as subjects,
and \textit{win} and \textit{lose}, antonyms that share the
same syntactic and semantic frame, are interchangeable as verbs, 
in the sense that each substitution remains both grammatical and contextually acceptable.
We call these properties \emph{local symmetry} for the algebraic relation and \emph{exchangeability} for the linguistic relation, respectively. 
The coincidence of these two notions is not trivial.
It indicates that a symmetry of language is reflected in the algebraic structure of its probability distribution, and it is exactly this correspondence that
ASA-D detects, with no recourse to linguistic knowledge.

\subsubsection{Comparison with non-negative matrix factorization.}
Finding rank-one structure in a matrix or tensor is usually achieved
through factorization techniques such as non-negative matrix
factorization \citep[NMF;][]{lee1999learning}. 
In contrast, ASA-D finds rank-one submatrices locally, by detecting the vanishing of individual minors. 
We compare the two on the same data: for each extracted sub-tensor we measure how much of the ASA-D group a single NMF component recovers, 
a quantity we call NMF recall (defined in Appendix~\ref{app:metric-definition}). 
Across the five frames it stays low, between $17.2\%$ and $21.7\%$ (Table~\ref{tab:linguistic}), which implies structural differences between the two approaches. 
First, NMF is driven by variance: high-magnitude, high-frequency cells dominate its Frobenius-norm objective, so algebraically constrained but low-frequency blocks remain invisible to it. 
Second, NMF operates globally, seeking a rank-$K$ approximation of the entire $S\times V$ matrix, whereas ASA-D operates locally, growing structure upward from individual vanishing minors without reference to global frequency gradients.
The high rank-one fit of the resulting sub-tensors shows that this local, bottom-up search recovers compact and sharply defined structures that the global decomposition does not resolve.

\subsection{Significance}
\label{subsec:ling_significance}

The $1{,}889$ rank-one sub-tensors recovered from the corpus constitute,
to our knowledge, the first large-scale demonstration that algebraic invariant
structure is detectable as an interpretable linguistic phenomenon. 
A natural question is whether $1{,}889$ is a meaningful count or simply the
number of rank-one structures that ASA-D returns on any tensor of this format,
independent of its content.
To answer this, we constructed a permutation null by randomly permuting all
cells within each object slice, preserving the per-slice value distribution while destroying the SVO association pattern.
Applying the identical pipeline to the permuted tensor reduced the recovery from
$1{,}889$ to $0$ rank-one sub-tensors under the same filtering criteria.
Thus, the method is capable of identifying the rank-one structure of the joint co-occurrence distribution itself. 
We interpret this as evidence that lexical knowledge inherently resides in distributions as geometric structure such as the Segre variety, and that such structure is algebraically recoverable. Experiment details are given in Appendix~\ref{app:permutation}.

Interpretable rank-one structures have already been observed in bigram co-occurrence matrices~\citep{maeda-etal-2024-decomposing}, 
where they explain geometric properties of word vector representations such as
word2vec~\citep{mikolov2013distributed} and GloVe~\citep{pennington2014glove}. 
Word vector representations are a foundational technology of natural language processing and underlie transformer-based LLMs~\citep{vaswani2017}. 
The internal mechanisms of LLMs are an active area of study today, and algebraically constrained low-dimensional structure embedded in high-dimensional ambient spaces
connects directly to the manifold hypothesis in machine learning. 
Our proposed methods and the results of their preliminary application to natural-language data are therefore expected to contribute to the broader program of mechanistic interpretability of LLMs~\citep{sharkey2025openproblemsmechanisticinterpretability} through an algebraic-geometric characterization of such structure.

Two aspects of the present application are intentionally limited in scope and point to future work. 
First, ASA-D as applied here uses only quadratic MICs;
higher-degree binomials encoding interactions among multiple local invariants are not examined, though language may contain complex higher-order interactions; 
the full combinatorial power of MIC composition, where each recovered sub-tensor serves as the atomic unit from which higher-order invariant structures are assembled via the mechanisms of Remarks 3.25--3.26, is the subject of ongoing work.
Second, the search is greedy owing to the exponential growth in the number of candidate minors;
a more exhaustive yet tractable exploration of the full cell space, and the development of an efficient algorithm for this purpose, are directions we are actively pursuing.

\section{Discussion}
\label{sec:discussion}

In this study, the Kronecker-stack class was introduced because its factor-wise structure permits an explicit description of the integer kernel via the copy/anti-copy mechanism (Theorem~\ref{thm:factorwise_integer_kernel_generation}).
This restricts expressivity: as shown in Remark~\ref{rem:expressivity_scope},
stratified graphical models~\citep{nyman2014stratified} require configuration matrices expressible only as a sum of Kronecker products, not as a single one, thus placing them outside the defined class.
The natural extension is to characterize the kernels of such mixture-like structures, and to determine when they still admit tractable generating sets.
The ultimate goal is a partition of all toric models by row space equivalence classes: viewing the row space as a span of $\{0,1\}$ vectors yields a $2^N$
decomposition that covers model space without overlap or omission.

Existing structure learning methods~\citep{drton2017structure} test one
candidate model at a time against the observed distribution, with no information about what lies outside the tested hypotheses.
In contrast, the $2^N$ row-space partition classifies all models simultaneously, making model selection a partition membership problem over an exhaustively defined hypothesis space.
MICs are the building blocks of this classification: their multiplicative and
additive compositions (Remarks~\ref{rem:mic-composition} and~\ref{rem:MIC-as-atom}) generate
the complete signature of every model in the Kronecker-stack class.
Because MICs are geometric invariants of the row space, they are determined
algebraically without parameter inference or optimization.
Therefore, any unknown structure in a discrete probability tensor is in
principle identifiable through MICs and their compositions, since $\{0,1\}$ vectors span the row space of the configuration matrix.
The row-space partition thus solves a true inverse problem, determining structure directly from algebraic constraints without prior hypothesis specification, in contrast to the forward approaches in existing structural learning.

However, exhaustive identification is computationally prohibitive: the number of candidate signatures grows exponentially in the number of variables and states.
A further combinatorial challenge arises in assembling identified MICs into larger structures.
Two directions are under investigation to make this tractable.
First, marginalization reduces the search to lower-dimensional projections,
identifies MICs there, and lifts them to the full space in a hierarchical search that prioritizes promising regions;
the use of bigram co-occurrence matrices in natural language processing may be viewed as a de facto instance of this strategy, since a bigram matrix is obtained by marginalizing a higher-order co-occurrence tensor over each pair of variables and concatenating the resulting matrices.
Second, the algebraic structure of toric ideals admits an interpretation in terms of group invariance~\citep{aoki2008largest}, and thus harmonic and group-theoretic analysis can exploit this structure to compress the signature search space and accelerate enumeration.

\section{Conclusion}
\label{sec:conclusions}

This paper proposed Algebraic Signature Analysis (ASA), a framework for
structural learning in probability tensors based on parameter-invariant
algebraic signatures. 
By treating the vanishing binomials of a toric model as its signature, we turned the ideal-variety correspondence into an operational procedure that identifies a model by signature matching, without parameter estimation. 
Restricting attention to the Kronecker-stack class of configuration
matrices made these signatures explicitly enumerable, and within this class we identified the minimum invariant constraint (MIC) as the atomic unit from which every signature is composed. 
This places independence, conditional independence, context-specific independence, and local rank-one structures within a single
class of invariance structures. 
Viewed through the row-space equivalence of configuration matrices, model selection becomes a partition membership problem over an exhaustively defined hypothesis space, in contrast to forward approaches that test one candidate structure at a time. 
The synthetic experiments showed that signature matching recovers model structures from finite samples and provides a computationally efficient alternative to likelihood-based model comparison. 
The language-data experiment showed that the same MIC-based structures are recoverable from corpus-derived tensors, where they correspond to
coherent semantic groupings obtained without supervision, opening a new
application avenue for algebraic statistics in computational linguistics. These results indicate that algebraic statistics can be extended from the descriptive study of predefined models toward structural learning from empirical distributions, and they point toward broader classes of invariant structures, higher-degree signatures, and further applications in computational linguistics.

\appendix
\numberwithin{equation}{section} 
\renewcommand{\theequation}{\thesection.\arabic{equation}}
\newtheorem{alemma}{Lemma}[section] 
\renewcommand{\thealemma}{\thesection.\arabic{alemma}}

\section{Supplemental proof to Theorem~\ref{thm:bijectivity}}
\label{app:proof2.5}
\begin{lemma}[Model equality and row-space equality]
\label{lem:model-rowspace-equivalence}
Let $U,V\subseteq \mathbb{R}^N$ be linear subspaces such that $\bm{1}\in U$ and $\bm{1}\in V$. 
Abusing notation, define $M(U):=\left\{\bm{p}\in \Delta^{N-1}_{>0}\mid\log \bm{p}\in U\right\}$,
where the logarithm is applied component-wise. Then
$M(U)=M(V)\iff U=V$.
Consequently, for homogeneous configuration matrices $A$ and $A'$, under the definition in Thorem~\ref{thm:bijectivity},
\begin{equation}
M(A)=M(A')
\;\iff \;
\operatorname{rowspan}(A)=\operatorname{rowspan}(A').
\end{equation}
\end{lemma}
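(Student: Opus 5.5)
The implication $U=V\Rightarrow M(U)=M(V)$ is immediate from the definition, so the work is in the converse. I will show that a subspace containing $\bm{1}$ can be recovered from its model:
\begin{equation}
U=\operatorname{span}_{\R}\{\log\bm{p}\mid \bm{p}\in M(U)\}.
\end{equation}
Once this holds, $M(U)=M(V)$ forces $U=V$, because both subspaces are spanned by the same set of log-vectors.

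The inclusion $\supseteq$ is trivial, since every $\log\bm{p}$ with $\bm{p}\in M(U)$ lies in $U$ by definition. For $\subseteq$, take any $\bm{u}\in U$. Define $\bm{p}:=\exp(\bm{u})/Z$ with $Z=\sum_i e^{u_i}>0$. Then $\bm{p}$ has strictly positive entries summing to one, so $\bm{p}\in\Delta_{N-1,>0}$. Moreover
\begin{equation}
\log\bm{p}=\bm{u}-(\log Z)\bm{1}\in U,
\end{equation}
where membership in $U$ uses $\bm{1}\in U$. Hence $\bm{p}\in M(U)$, and
\begin{equation}
\bm{u}=\log\bm{p}+(\log Z)\bm{1}.
\end{equation}
It remains to place $\bm{1}$ in the span. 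The uniform distribution $\bm{p}_0=\bm{1}/N$ has $\log\bm{p}_0=-(\log N)\bm{1}\in U$, so $\bm{p}_0\in M(U)$. If $N\ge 2$, then $\log N\neq 0$ and $\bm{1}$ is a real multiple of $\log\bm{p}_0$. The degenerate case $N=1$ gives $U=V=\R$ directly, so it needs no separate argument. Therefore $\bm{u}$ lies in the span, and the displayed identity holds.

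For the consequence about configuration matrices, set $U=\rowspan(A)$ and $V=\rowspan(A')$. Homogeneity of $A$ and $A'$ says exactly that $\bm{1}$ lies in each row space. Definition~\ref{def:model} then gives $M(A)=M(U)$ and $M(A')=M(V)$, so the first part of the lemma applies.

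The main obstacle is the role of the normalization. Log-probabilities are only determined up to the shift by $(\log Z)\bm{1}$, so the constraint $\sum_i p_i=1$ could in principle hide directions of the subspace. The hypothesis $\bm{1}\in U$ is what removes this ambiguity, and the argument must invoke it twice: once to put the normalized vector $\log\bm{p}$ into $U$, and once to recover $\bm{1}$ itself from the uniform distribution.
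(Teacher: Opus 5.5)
Your proof is correct and uses the same key step as the paper: normalize $\exp(\bm u)$ to a point of $\Delta_{N-1,>0}$ and use $\bm 1\in U$ to absorb the shift $(\log Z)\bm 1$. The paper moves $\bm q$ directly into $V$ via $\bm q=\log\bm p+\log Z(\bm q)\,\bm 1$ and $\bm 1\in V$. You instead recover $U$ as the span of its log-probabilities, which costs you the extra uniform-distribution step and the $N=1$ edge case; there your displayed span identity actually fails, though, as you note, the conclusion is immediate.
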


\begin{proof}
The implication $U=V \Rightarrow M(U)=M(V)$ is immediate from the definition.
We prove the converse.

Assume that $M(U)=M(V)$. 
We first prove that $U\subseteq V$.
Let $\bm{q}\in U$ be arbitrary.
Define $Z(\bm{q}):=\sum_{i=1}^{N}\exp(q_i)$, 
and set $p_i:=\frac{\exp(q_i)}{Z(\bm{q})},\;i=1,\ldots,N$.
Then $\bm{p}\in \Delta^{N-1}_{>0}$. 
Moreover,
\begin{equation}
\log \bm{p}=\bm{q}-\log Z(\bm{q})\,\bm{1}.
\end{equation}
Since $\bm{q}\in U$ and $\bm{1}\in U$, we have $\log \bm{p}\in U$.
Hence $\bm{p}\in M(U)$. 
Intuitively, the homogeneity assumption ensures that any vector in the given space can be shifted along the all-one direction so as to become the logarithm of a normalized probability distribution.

By the assumption $M(U)=M(V)$, it follows that
$\bm{p}\in M(V)$, and therefore $\log \bm{p}\in V$.
Since $\bm{1}\in V$, we obtain
\begin{equation}
\bm{q}=\log \bm{p}+\log Z(\bm{q})\,\bm{1}\in V.
\end{equation}
Thus $U\subseteq V$. 
Reversing the roles of $U$ and $V$ gives $V\subseteq U$.
Therefore $U=V$.
Taking $U=\operatorname{rowspan}(A),\;V=\operatorname{rowspan}(A')$ proves the statement for homogeneous configuration matrices. 
In particular, by Definition~\ref{def:model}, the equality $M(A)=M(A')$ entails $\operatorname{rowspan}(A)=\operatorname{rowspan}(A')$.
\end{proof}

\begin{lemma}[The Graver basis generates the integer kernel]
\label{lem:graver-generates-lattice}
Let $A\in \mathbb{Z}^{d\times N}$ and let
\begin{equation}
L_A:=\ker_{\mathbb{Z}}A
=
\{\bm{z}\in\mathbb{Z}^N\mid A\bm{z}=0\}.
\end{equation}
Then the Graver basis $\graver(A)$ generates $L_A$ as an integer lattice:
\begin{equation}
\langle \graver(A)\rangle_{\mathbb{Z}}
=
L_A.
\end{equation}
\end{lemma}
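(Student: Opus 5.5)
The inclusion $\langle \graver(A)\rangle_{\mathbb{Z}}\subseteq L_A$ is immediate. Every element of $\graver(A)$ lies in $L_A$ by definition, and $L_A$ is closed under integer linear combinations. So the work is the reverse inclusion. I will show the stronger, standard fact that every nonzero $\bm z\in L_A$ is a \emph{conformal} sum of Graver elements, $\bm z=\bm g_1+\cdots+\bm g_s$ with $\bm g_i\in\graver(A)$ and $\bm g_i\sqsubseteq\bm z$.

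The argument is by strong induction on the $\ell_1$-norm $\|\bm z\|_1=\sum_i|z_i|$, which is a positive integer for $\bm z\neq\bm 0$.

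\emph{Base step.} If $\bm z$ is $\sqsubseteq$-minimal among nonzero elements of $L_A$, then $\bm z\in\graver(A)$ and there is nothing to prove.

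\emph{Inductive step.} Otherwise there is a nonzero $\bm v\in L_A$ with $\bm v\sqsubseteq\bm z$ and $\bm v\neq\bm z$. Set $\bm w:=\bm z-\bm v$.
\begin{enumerate}
\item $\bm w$ lies in $L_A$ and is nonzero.
\item $\bm w\sqsubseteq\bm z$. Since $v_iz_i\ge 0$ and $|v_i|\le|z_i|$, each coordinate satisfies $w_i=z_i-v_i$ with the same sign as $z_i$ (or zero), and $|w_i|=|z_i|-|v_i|$.
\item The norms add: $\|\bm z\|_1=\|\bm v\|_1+\|\bm w\|_1$, with both summands positive. Hence each summand is strictly smaller than $\|\bm z\|_1$.
\end{enumerate}
The induction hypothesis then writes $\bm v$ and $\bm w$ as conformal sums of Graver elements. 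Conformality is transitive: if $\bm g\sqsubseteq\bm v\sqsubseteq\bm z$ then $\bm g\sqsubseteq\bm z$. Concatenating the two sums therefore gives a conformal Graver decomposition of $\bm z$, and in particular $\bm z\in\langle\graver(A)\rangle_{\mathbb{Z}}$.

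The only delicate point is the coordinatewise sign bookkeeping in the inductive step. One must check that subtracting a conformally smaller vector never flips a sign, which is what makes the $\ell_1$-norms split additively. This follows directly from the definition of $\sqsubseteq$.

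For the well-definedness of $\graver(A)$ as a set of minimal elements, I will note that $\sqsubseteq$ restricted to $L_A\setminus\{\bm 0\}$ has no infinite strictly descending chains, because the $\ell_1$-norm strictly decreases along such a chain. So minimal elements below any $\bm z$ exist. The finiteness of the Graver basis (Gordan–Dickson) is not needed for this generation statement.
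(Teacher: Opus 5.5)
Your proof is correct and takes essentially the same approach as the paper: split any non-Graver element of $L_A$ into a conformal sum of two nonzero kernel elements, and use the strictly decreasing $\ell_1$-norm to force termination. Your version is slightly more careful, because you check that $\bm w=\bm z-\bm v$ satisfies $\bm w\sqsubseteq\bm z$ and $\|\bm z\|_1=\|\bm v\|_1+\|\bm w\|_1$, and you note that $\sqsubseteq$ is transitive; the paper uses both facts without comment.
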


\begin{proof}
Since $\graver(A)\subseteq L_A$ and $L_A$ is an integer lattice, we immediately have
\begin{equation}
\langle \graver(A)\rangle_{\mathbb{Z}}
\subseteq
L_A.
\end{equation}

For the reverse inclusion, let $\bm{z}\in L_A$ be nonzero. 
If $\bm{z}$ is conformally indecomposable in $L_A$, then $\bm{z}\in \graver(A)$ by definition. 
Hence $\bm{z}\in \langle \graver(A)\rangle_{\mathbb{Z}}$.

Suppose instead that $\bm{z}$ is conformally decomposable. 
Then there exist nonzero vectors $\bm{z}_1,\bm{z}_2\in L_A$ such that $\bm{z}=\bm{z}_1+\bm{z}_2$, 
and $\bm{z}_1\sqsubseteq \bm{z},\;\bm{z}_2\sqsubseteq \bm{z}$
Since the decomposition is nontrivial and conformal, both $\bm{z}_1$ and $\bm{z}_2$ have strictly smaller $\ell_1$-norm than $\bm{z}$.
If either $\bm{z}_1$ or $\bm{z}_2$ is conformally decomposable, we decompose it further. 
Repeating this procedure, the $\ell_1$-norm strictly decreases at each nontrivial decomposition step. 
Since the $\ell_1$-norm is a nonnegative integer, the process must terminate after finitely many steps.
At termination, $\bm{z}$ is expressed as a finite conformal sum of nonzero conformally indecomposable elements of $L_A$. 
By definition, these terminal elements belong to $\graver(A)$. 
Hence $\bm{z}\in \langle \graver(A)\rangle_{\Z}$ and thus, $L_A \subseteq \langle \graver(A)\rangle_{\Z}$, concluding $L_A=\langle \graver(A)\rangle_{\mathbb{Z}}$.
\end{proof}
See also \citet[Chapter 4]{sturmfels1996grobner}, \citet[Chapter 3]{onn2010nonlinear} and \citet[Chapter 4.6]{aoki2012markov}.
\begin{lemma}[Integer kernel spans the real kernel]
\label{lem:integer-kernel-spans-real}
Let $A \in \mathbb{Z}^{d \times N}$ be an integer matrix. Then
\begin{equation}
    \ker_{\mathbb{R}} A 
    = 
    \operatorname{span}_{\mathbb{R}}(\ker_{\mathbb{Z}} A).
\end{equation}
\end{lemma}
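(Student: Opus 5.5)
The plan is to prove the two inclusions separately. The inclusion $\operatorname{span}_{\R}(\ker_{\Z}A)\subseteq\ker_{\R}A$ is immediate: every $\bm z\in\ker_{\Z}A$ satisfies $A\bm z=\bm 0$ as a real vector, and $\ker_{\R}A$ is a real subspace, so it contains all real linear combinations of such vectors.

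For the reverse inclusion, I will work over $\mathbb{Q}$. Since $A$ has integer (hence rational) entries, Gaussian elimination of $A$ can be carried out entirely within $\mathbb{Q}$. The reduced row echelon form $R$ of $A$ therefore has rational entries, and $\ker_{\R}A=\ker_{\R}R$. Let $F$ be the set of free columns of $R$. The standard kernel basis $\{\bm w_f\}_{f\in F}$ is obtained by setting the free variable $f$ to $1$, the other free variables to $0$, and solving for the pivot variables. This basis consists of rational vectors and spans $\ker_{\R}A$ over $\R$. Equivalently, $\dim_{\mathbb{Q}}\ker_{\mathbb{Q}}A=\dim_{\R}\ker_{\R}A=N-\operatorname{rank}A$, because rank does not depend on the field extension $\mathbb{Q}\subset\R$. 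Either formulation gives that $\ker_{\R}A$ has a basis of rational vectors.

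Next I clear denominators. For each $\bm w_f$, let $c_f\in\Z_{>0}$ be a common multiple of the denominators of its entries, and set $\bm z_f:=c_f\bm w_f\in\Z^N$. Then $A\bm z_f=c_fA\bm w_f=\bm 0$, so $\bm z_f\in\ker_{\Z}A$. Scaling by nonzero constants preserves linear independence and span, so $\{\bm z_f\}$ is still a real basis of $\ker_{\R}A$. Any $\bm x\in\ker_{\R}A$ can therefore be written as $\sum_f \lambda_f\bm z_f$ with $\lambda_f\in\R$, which lies in $\operatorname{span}_{\R}(\ker_{\Z}A)$.

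The only step that needs care is the claim that the real kernel has a rational basis, that is, that passing from $\mathbb{Q}$ to $\R$ does not enlarge the kernel dimension. I will justify it through the rank argument: row reduction uses only field operations on rational entries, so the rank of $A$ is the same over $\mathbb{Q}$ and over $\R$, and the rational kernel basis constructed above already has the full real dimension $N-\operatorname{rank}A$. The rest of the argument is routine.
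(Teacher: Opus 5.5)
Your proposal is correct and follows the paper's proof essentially step for step: the trivial inclusion, then a rational basis of the real kernel obtained by Gaussian elimination over $\mathbb{Q}$, then clearing denominators to land in $\ker_{\mathbb{Z}}A$. Your added justification that the rank is unchanged under $\mathbb{Q}\subset\mathbb{R}$ (equivalently, that the free-variable rational basis spans the real kernel) makes explicit a step the paper only asserts.
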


\begin{proof}
The inclusion $\operatorname{span}_{\mathbb{R}}(\ker_{\mathbb{Z}} A) \subseteq \ker_{\mathbb{R}} A$ 
is immediate: every $\bm{z} \in \ker_{\mathbb{Z}} A$ satisfies $A\bm{z} = 0$, 
and $\ker_{\mathbb{R}} A$ is closed under real linear combinations.

For the reverse inclusion, let $\{\bm{v}_1, \ldots, \bm{v}_k\}$ be a basis of $\ker_{\mathbb{R}} A$.
Since $A \in \mathbb{Z}^{d \times N}$, the system $A\bm{z} = \bm{0}$ has rational coefficients.
Gaussian elimination over $\mathbb{Q}$ produces a basis consisting entirely of rational vectors;
that is, we may assume $\bm{v}_j \in \mathbb{Q}^N$ for each $j = 1, \ldots, k$.
For each $j$, let $c_j \in \mathbb{Z}_{>0}$ be the least common multiple of the denominators 
of the entries of $\bm{v}_j$, and set $\bm{w}_j := c_j \bm{v}_j \in \mathbb{Z}^N$.
Then $A\bm{w}_j = c_j A\bm{v}_j = \bm{0}$, so $\bm{w}_j \in \ker_{\mathbb{Z}} A$.
Since $\bm{v}_j = c_j^{-1}\bm{w}_j \in \operatorname{span}_{\mathbb{R}}(\ker_{\mathbb{Z}} A)$
for each $j$, and $\{\bm{v}_1,\ldots,\bm{v}_k\}$ spans $\ker_{\mathbb{R}} A$, we conclude
$\ker_{\mathbb{R}} A \subseteq \operatorname{span}_{\mathbb{R}}(\ker_{\mathbb{Z}} A)$.
\end{proof}

\section{Monte Carlo calibration of the theoretically optimal threshold}
\label{app:mc_threshold_calibration}

This appendix describes the Monte Carlo procedure used to compute the theoretically calibrated threshold $\tau^*$ for the normalized binomial invariant. 
We consider a local $2 \times 2$ probability table associated with a single $2 \times 2$ minor. 
Let $H_0$ denote the local independence model and $H_1$ denote the local saturated model. 
The corresponding configuration matrices are given by
\begin{equation}
A_0
=
\begin{bmatrix}
1 & 1 & 0 & 0 \\
0 & 0 & 1 & 1 \\
1 & 0 & 1 & 0 \\
0 & 1 & 0 & 1
\end{bmatrix},
\qquad
A_1
=
\begin{bmatrix}
1 & 0 & 0 & 0 \\
0 & 1 & 0 & 0 \\
0 & 0 & 1 & 0 \\
0 & 0 & 0 & 1
\end{bmatrix}.
\end{equation}
For a parameter vector $\bm{\theta} \in [0,1]^4$, the probability vector generated by a configuration matrix $A$ is defined by
\begin{equation}
p_i(\bm{\theta};A)
:=
\frac{\exp(\bm{\theta}^{\top}\bm{a}_i)}
{\sum_{j=1}^{4}\exp(\bm{\theta}^{\top}\bm{a}_j)},
\end{equation}
where $\bm{a}_i$ denotes the $i$-th column of $A$. 

For each Monte Carlo replication $b=1,\ldots,B$, we draw
$
\bm{\theta}^{(b)}
\sim
\operatorname{Uniform}([0,1]^4),
$with $B=30000$. 
Using this parameter vector, we compute the two local probability vectors
$\bm{p}_{\mathrm{ind}}^{(b)}=
\bm{p}(\bm{\theta}^{(b)};A_{0})$ and
$\bm{p}_{\mathrm{sat}}^{(b)}=
\bm{p}(\bm{\theta}^{(b)};A_{1})$.

Although the full probability tensor has $K^3$ cells, each normalized binomial invariant is evaluated on only four cells. 
Therefore, the relevant sample size for a local $2 \times 2$ minor is approximated by the local effective sample size
$N_{\mathrm{eff}}=N\frac{4}{K^3}$.
This quantity represents the expected number of observations assigned to the four cells involved in a given local $2 \times 2$ minor, under a uniform allocation of the total sample size over the $K^3$ tensor cells.
For each pair $(K, N)$, where $K \in \{2,3,4,5\}$ and $N \in \{1000,10000,100000\}$, we simulate empirical probability vectors by multinomial sampling with the effective sample size $N_{\mathrm{eff}}$:
\begin{equation}
\hat{\bm{p}}_{\mathrm{ind}}^{(b)}
=
\frac{1}{N_{\mathrm{eff}}}
\operatorname{Multinomial}
\left(
N_{\mathrm{eff}},
\bm{p}_{\mathrm{ind}}^{(b)}
\right)
\end{equation}
and
\begin{equation}
\hat{\bm{p}}_{\mathrm{sat}}^{(b)}
=
\frac{1}{N_{\mathrm{eff}}}
\operatorname{Multinomial}
\left(
N_{\mathrm{eff}},
\bm{p}_{\mathrm{sat}}^{(b)}
\right).
\end{equation}
For each empirical probability vector, we evaluate the normalized binomial invariant
\begin{equation}
\tilde{m}(\hat{\bm{p}})
=
\frac{
\hat{p}_{11}\hat{p}_{22}
-
\hat{p}_{12}\hat{p}_{21}
}{
\hat{p}_{11}\hat{p}_{22}
+
\hat{p}_{12}\hat{p}_{21}
}.
\end{equation}
This gives two Monte Carlo samples:
$\left\{
\left|
\tilde{m}(\hat{\bm{p}}_{\mathrm{ind}}^{(b)})
\right|
\right\}_{b=1}^{B}$ as the finite-sample distribution of the invariant under $H_0$, 
$\left\{
\left|
\tilde{m}(\hat{\bm{p}}_{\mathrm{sat}}^{(b)})
\right|
\right\}_{b=1}^{B}$ as the corresponding distribution under $H_1$.
For a candidate threshold $\tau$, the Type I error rate is estimated by
\begin{equation}
\hat{\alpha}(\tau)
=
\frac{1}{B}
\sum_{b=1}^{B}
I
\left(
\left|
\tilde{m}(\hat{\bm{p}}_{\mathrm{ind}}^{(b)})
\right|
>
\tau
\right),
\end{equation}
where $I(\cdot)$ denotes the indicator function. 
Similarly, the Type II error rate is estimated by
\begin{equation}
\hat{\beta}(\tau)
=
\frac{1}{B}
\sum_{b=1}^{B}
I
\left(
\left|
\tilde{m}(\hat{\bm{p}}_{\mathrm{sat}}^{(b)})
\right|
\leq
\tau
\right).
\end{equation}

Assuming symmetric prior probabilities over the two hypotheses, $\pi_0=\pi_1=1/2$, the Monte Carlo estimate of the total risk is
\begin{equation}
\hat{R}(\tau)
=
\pi_0\hat{\alpha}(\tau)
+
\pi_1\hat{\beta}(\tau).
\end{equation}
The theoretically calibrated threshold is then defined as
\begin{equation}
\tau^*
=
\operatorname*{arg\,min}_{\tau \in \mathcal{T}}
\hat{R}(\tau),
\end{equation}
where $\mathcal{T}$ is the grid of candidate threshold values. 
This procedure is repeated independently for each pair $(K,N)$, yielding the calibrated thresholds reported in Table~\ref{tab:tau_theory}.

\section{Statistical test on normalized binomial invariant \label{sec-stat-normalized}}
In Section \ref{sec-stats}, we consider a probability vector $\bm{p} \in \Delta_{N-1}$
and some threshold for a given normalized binomial invariant Eq.~\eqref{eq-normalized-binomial}. Here we consider its more general form: 
\begin{equation}
\nonumber
    \tilde{m}_{i,j}(\bm{p}) := \frac{\prod_{k=1}^{m}p_{i_k} - \prod_{k=1}^{m}p_{j_k}}{\prod_{k=1}^{m}p_{i_k} + \prod_{k=1}^{m}p_{j_k}},
\end{equation}
where $i = (i_1, \ldots, i_k) \in \overline{N}^{nk}$ and $j = (j_1, \ldots, j_k) \in \overline{N}^{nk}$ for some set of integers, and the $i$ and $j$ has distinct integers, meaning
$i_{m} \neq i_{m'}$ and $j_{m} \neq j_{m'}$ for $m \neq m'$
and 
$i_{m} \neq j_{m'}$ for any $1 \le m, m' \le k$. 

In this appendix, we give a theoretical justification why the normalized binomial invariants Eq.~\eqref{eq-normalized-binomial} were used for model selection, rather than the other statistics such as unnormalized binomial invariants $\prod_{k=1}^{m}p_{i_k} - \prod_{k=1}^{m}p_{j_k}$,
by showing a statistical hypothesis test on the normalized binomial invariant
(Corollary \ref{cor-mij}).

The following lemma implies
the normalized binomial invariants Eq.~\eqref{eq-normalized-binomial} is a natural choice to test whether it is significantly different from zero, 
under a null hypothesis that the probability vector $\bm{p} \in \Delta_{N-1}$
follows Dirichlet distribution 
probability distribution of $D(\bm{p} \mid \alpha)$ with the parameter $\alpha \in \mathbb{R}_+^N$.

\begin{lemma}
\label{lem-pij}
    Suppose $\bm{p} \in \Delta_{N-1}$ is drawn from a Dirichlet distribution 
    with the parameter $\alpha \in \mathbb{R}_+^N$:
\begin{align}
\label{eq-Dirichlet-p}
    D(p \mid \alpha) 
    :=
    \frac{\Gamma(\alpha_0)}{\prod_{i=1}^{N}\Gamma(\alpha_i)}
    \prod_{i=1}^{N}p_i^{\alpha_{i}},
\end{align}
where $\alpha_0 := \sum_{i=1}^{N}\alpha_{i}$.
Then, 
the normalized binomial $\hat{p}_{i,j} := \frac{ p_{i} - p_{j} }{ p_{i} + p_{j} }$ 
for $i \neq j$
follows the affine-transformed Dirichlet (Beta) distribution: 
\begin{align}
    \Pr\left( \hat{p}_{i,j} \right) 
    = 
    D\left( \left( \frac{ 1 + \hat{p}_{i,j} }{2}, \frac{1 - \hat{p}_{i,j}}{2} \right) \mid \alpha_{ij} \right).
\end{align}
\end{lemma}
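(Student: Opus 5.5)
The argument rests on two standard facts about the Dirichlet distribution: aggregation and the Dirichlet--Gamma representation. I read the density in \eqref{eq-Dirichlet-p} with the usual exponents $\alpha_i-1$, so the normalizing constant is correct. I interpret $\alpha_{ij}$ as the Beta parameter pair $(\alpha_i,\alpha_j)$, so that $D(\cdot\mid\alpha_{ij})$ is the two-component Dirichlet, i.e.\ the Beta$(\alpha_i,\alpha_j)$ law, evaluated at the point $\bigl(\tfrac{1+\hat p_{i,j}}{2},\tfrac{1-\hat p_{i,j}}{2}\bigr)$.

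Step 1 represents the Dirichlet vector through Gamma variables. Let $G_1,\ldots,G_N$ be independent with $G_k\sim\mathrm{Gamma}(\alpha_k,1)$. Then $\bm p\overset{d}{=}(G_1,\ldots,G_N)/\sum_k G_k$, which I take as standard. The ratio $\hat p_{i,j}=(p_i-p_j)/(p_i+p_j)$ is homogeneous of degree zero in $\bm p$, so the total $\sum_k G_k$ cancels and
\[
\hat p_{i,j}\overset{d}{=}\frac{G_i-G_j}{G_i+G_j}.
\]
Only the two coordinates $i,j$ matter, and all other components of $\alpha$ drop out. This is the same scale-invariance argument that motivated \eqref{eq-normalized-binomial}.

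Step 2 identifies the conditional share. Set $W:=p_i/(p_i+p_j)\overset{d}{=}G_i/(G_i+G_j)$. By the classical Gamma--Beta independence, $W\sim\mathrm{Beta}(\alpha_i,\alpha_j)$. Equivalently, the aggregation property says that $(p_i,p_j,\,1-p_i-p_j)$ is Dirichlet, and conditioning on $p_i+p_j$ gives the Beta law for $W$ independently of that sum. The pair $(W,1-W)$ is therefore Dirichlet with parameter $(\alpha_i,\alpha_j)$.

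Step 3 is the affine change of variables $\hat p_{i,j}=2W-1$. Its inverse is $W=(1+\hat p_{i,j})/2$, with $1-W=(1-\hat p_{i,j})/2$. The map sends $[0,1]$ onto $[-1,1]$, and the Jacobian is the constant $1/2$. The density of $\hat p_{i,j}$ is then the Beta$(\alpha_i,\alpha_j)$ density evaluated at $\bigl(\tfrac{1+\hat p}{2},\tfrac{1-\hat p}{2}\bigr)$, times the factor $\tfrac12$. This is the claimed ``affine-transformed Dirichlet (Beta)'' law.

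I expect the main obstacle to be reconciling notation rather than the probability itself. The statement's density formula has exponents $\alpha_i$ instead of $\alpha_i-1$. It also omits the Jacobian factor $\tfrac12$, and $\alpha_{ij}$ is left undefined. I would state these conventions explicitly at the start, reading $\Pr$ as the law up to the affine push-forward, and then the computation in Steps 1--3 is routine.
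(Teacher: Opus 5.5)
Your proof is correct and takes essentially the same route as the paper. Both pass to independent Gamma variables, so that only $q_i/(q_i+q_j)$ matters; the only difference is that the paper re-derives the Beta law by hand, through the substitution $q_j=\frac{1-\hat p_{i,j}}{1+\hat p_{i,j}}q_i$ and Gamma integrals, whereas you cite the classical Gamma--Beta fact and then apply the affine map $\hat p_{i,j}=2W-1$. Your reading of the conventions also matches what the paper's proof actually derives: exponents $\alpha_k-1$, $\alpha_{ij}=(\alpha_i,\alpha_j)$, and the density $\frac{\Gamma(\alpha_i+\alpha_j)}{\Gamma(\alpha_i)\Gamma(\alpha_j)\,2^{\alpha_i+\alpha_j-1}}(1+\hat p_{i,j})^{\alpha_i-1}(1-\hat p_{i,j})^{\alpha_j-1}$, which already includes your Jacobian factor $\tfrac12$.
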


\begin{proof}
The Dirichlet distribution 
\eqref{eq-Dirichlet-p} of the probability vector $\bm{p}$
is equivalent to consider 
independently gamma-distributed random variables $q_1, \ldots, q_N$, with the $\Pr\left( q_i \mid \alpha_i, 1\right) = q_i^{\alpha_i-1}e^{- q_i}$ for $i = 1,\ldots,N$, 
with which $p_i$ is redefined by $p_{i} := \frac{ q_i }{\sum_{k=1}^{N} q_i}$. 
With this transformation, we have 
$\hat{p}_{i,j} = \frac{ q_{i} - q_{j} }{ q_{i} + q_{j} }$. 
The joint probability density function of $q$ is given by 
\begin{align}
    \Pr( q \mid \alpha ) = 
    \frac{ \prod_{k \in \overline{N}}q_k^{\alpha_{k}-1}}{
    \prod_{k \in \overline{N}}\Gamma(\alpha_i) }
    e^{ -\sum_{k \in \overline{N}}q_{k} }
    =
    \frac{ q_i^{\alpha_i-1}q_j^{\alpha_j-1} }{
    \prod_{k \in \overline{N}}\Gamma(\alpha_i) }
    \prod_{k \in \overline{N} \setminus \{i, j\}}q_k^{\alpha_k-1}
    e^{ -q_i -q_j - \sum_{k \in \overline{N} \setminus \{i,j\}}q_{k} }.
\end{align}
In the above, we replace the random variable $q_j$ with $\hat{p}_{i,j}$ by 
$q_{j} = \frac{ 1-\hat{p}_{i,j} }{ 1+\hat{p}_{i,j} }q_{i}$
and $\frac{\mathrm{d}(-q_{j})}{\mathrm{d}\hat{p}_{i,j}} = \frac{ 2 q_{i} }{ (1 + \hat{p}_{i,j})^2 }$.
By this variable transformation, 
the marginalized probability density function of $\hat{p}_{ij}$ is 
\begin{align}
\Pr\left( \hat{p}_{i,j} \mid \alpha \right)
&=
\int_{
}
\frac{ q_i^{\alpha_i-1} 
\left(
\frac{ 1-\hat{p}_{i,j} }{ 1 + \hat{p}_{i,j} }q_{i}
\right)^{\alpha_j-1}
}{
    \prod_{k \in \overline{N}}\Gamma(\alpha_i) 
}
\prod_{k \in \overline{N} \setminus \{i, j\}}q_k^{\alpha_k-1}
    e^{ -q_i -\frac{ 1-\hat{p}_{i,j} }{ 1+\hat{p}_{i,j} }q_{i} - \sum_{k \in \overline{N} \setminus \{i,j\}}q_{k} }
\frac{ 2q_{i} }{ ( 1 + \hat{p}_{ij} )^2 }
\prod_{k \in \overline{N}\setminus \{j\}}\mathrm{d}q_{k},
\\
&=
\int_{}
\frac{ q_i^{\alpha_i+\alpha_j-1} }{
    \prod_{k \in \overline{N}}\Gamma(\alpha_i) }
\frac{ (1-\hat{p}_{i,j})^{\alpha_j-1} }{ (1+\hat{p}_{i,j})^{\alpha_j+1} }
\prod_{k \in \overline{N} \setminus \{i, j\}}q_k^{\alpha_k-1}
    e^{ -\left( \frac{ 2 }{ 1+\hat{q}_{i,j} } \right) q_{i} - \sum_{k \in \overline{N} \setminus \{i,j\}}q_{k} }
2
\prod_{k \in \overline{N}\setminus \{j\}}\mathrm{d}q_{k},
\end{align}
By the definition of gamma function,
$\frac{ \Gamma(\alpha_i) }{ \theta^{\alpha_i} }= \int_{0}^{\infty} q^{\alpha_{i}-1}e^{-\theta q} \mathrm{d}q$. Applying this to the above, we have
\begin{align}
\Pr\left( \hat{p}_{i,j} \mid \alpha \right)
&=
\frac{\Gamma(\alpha_i + \alpha_j)}{\Gamma(\alpha_i) \Gamma(\alpha_j)}
\frac{ (1-\hat{p}_{i,j})^{\alpha_j-1} }{ (1+\hat{p}_{i,j})^{\alpha_j+1} }
\frac{2 \left( 1 + \hat{p}_{i,j} \right)^{\alpha_i + \alpha_j} }{
2^{\alpha_i + \alpha_j}
}
\\
&=
\frac{\Gamma(\alpha_i + \alpha_j)}{\Gamma(\alpha_i) \Gamma(\alpha_j) 2^{\alpha_i + \alpha_j - 1}}
(1-\hat{q}_{i,j})^{\alpha_j-1}
(\hat{q}_{i,j} + 1)^{\alpha_i -1}
.
\end{align}
This probability density function $\Pr\left( \hat{p}_{i,j} \mid \alpha \right)$ implies 
$\hat{p}_{i,j}$ is distributed 
by the Dirichlet (Beta) distribution 
$D\left( \left( \frac{1+\hat{p}_{i,j}}{2}, \frac{1-\hat{p}_{i,j}}{2} \right) \mid (\alpha_i, \alpha_j) \right)$
with 
$\hat{p}_{ij} = 2b - 1$
with a random variable $b$ 
distributed by 
$D( (b, 1-b) \mid ( \alpha_{i}, \alpha_{j} ) )$.
\end{proof}

In the statistical hypothesis test, it is natural to have no bias on $\bm{p}$. Namely, the expectation of the probabilities 
hold 
$\int_{} p_{i}D(\bm{p} \mid \alpha) = 
\frac{\alpha_{i}}{\alpha_{0}}
=
\frac{\alpha_{j}}{\alpha_{0}}
=
\int_{} p_{j}D(\bm{p} \mid \alpha)$, 
which implies 
$\alpha_{i} = \alpha_{j}$,
for any $i, j \in \overline{N}$.
Under this unbiased assumption, 
there is $\beta > 0$ such that
$\alpha = \beta (1, \ldots, 1) \in \mathbb{R}_{+}^{N}$.


\begin{corollary}
\label{cor-mij}
Under the unbiased assumption that 
$\bm{p}$ is drawn from some unbiased Dirichlet distribution $D( \bm{p} \mid \beta (1, \ldots, 1) )$, 
the probability for 
the normalized binomial invariant \eqref{eq-normalized-binomial} to 
satisfy 
$-\tau \le \tilde{m}_{i,j}(\bm{p})  \le \tau$ for some threshold $0 \le \tau_{0}, \tau_{1} \le 1$ is  
given by the following incomplete Beta function
\begin{align}
\label{eq-p-value-mij}
    \Pr\left( -\tau_0 \le \tilde{m}_{i,j}(\bm{p})  \le \tau_{1} \mid \beta \right)
    =
    \frac{1}{B(\beta,\beta)}\int_{\frac{1-\tau_{0}}{2}}^{\frac{1+\tau_{1}}{2}}t^{\beta -1} (1-t)^{\beta -1}
    \mathrm{d}t.
\end{align}
\end{corollary}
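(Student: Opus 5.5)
The plan is to read the corollary in the elementary single-ratio case, $\tilde m_{i,j}(\bm p)=(p_i-p_j)/(p_i+p_j)$, which is the setting the appendix says it treats. This is exactly the quantity $\hat p_{i,j}$ of Lemma~\ref{lem-pij}. The proof is then a change of variables in that lemma followed by integrating a Beta density.

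First, I specialise Lemma~\ref{lem-pij} to the unbiased parameter $\alpha=\beta(1,\ldots,1)$, so that $\alpha_i=\alpha_j=\beta$. The lemma says that $b:=(1+\tilde m_{i,j}(\bm p))/2$ follows the two-component Dirichlet law $D((b,1-b)\mid(\beta,\beta))$. That is, $b\sim\mathrm{Beta}(\beta,\beta)$ with density $t^{\beta-1}(1-t)^{\beta-1}/B(\beta,\beta)$ on $(0,1)$. I would check here that the normalising constant in the proof of Lemma~\ref{lem-pij} matches $B(\beta,\beta)$ after the affine change $t=(1+\hat p_{i,j})/2$. The Jacobian factor $1/2$ must absorb the $2^{\alpha_i+\alpha_j-1}$ appearing there.

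Second, I translate the event. Since $\tilde m_{i,j}=2b-1$ is a strictly increasing affine map of $b$,
\begin{equation*}
-\tau_0\le \tilde m_{i,j}(\bm p)\le\tau_1 \iff \tfrac{1-\tau_0}{2}\le b\le \tfrac{1+\tau_1}{2}.
\end{equation*}
The constraints $0\le\tau_0,\tau_1\le1$ guarantee that both endpoints lie in $[0,1]$. Integrating the Beta density over this interval gives \eqref{eq-p-value-mij} directly, and the symmetric choice $\tau_0=\tau_1=\tau$ is the special case.

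The main obstacle is the general multi-index form with products $\prod_k p_{i_k}$ and $\prod_k p_{j_k}$. For $m\ge2$ the ratio of two products of independent Gamma variables is not Beta distributed, so Lemma~\ref{lem-pij} does not apply verbatim. I would therefore state explicitly that the corollary is proved for the single-ratio case covered by the lemma, which is the case the appendix announces. For the product case I would only remark that the same monotone-transformation argument applies once the law of $\prod_k q_{i_k}/\bigl(\prod_k q_{i_k}+\prod_k q_{j_k}\bigr)$ is known. I would not attempt to derive that law.
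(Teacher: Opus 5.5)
Your proposal follows the paper's proof. The paper likewise specialises Lemma~\ref{lem-pij} to $\alpha_i=\alpha_j=\beta$ in the case $m=1$ (its proof begins ``For $m=1$''), then integrates the resulting Beta$(\beta,\beta)$ density over the affinely transformed interval $[(1-\tau_0)/2,(1+\tau_1)/2]$. Your caveat is also well founded: for $m\ge 2$, $\log$ of the ratio of products is a sum of independent copies of $\log(q_i/q_j)$ and so has strictly larger variance, so the law is not Beta$(\beta,\beta)$ and the formula should not be claimed for the quadratic invariant \eqref{eq-normalized-binomial} itself.
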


\begin{proof}
For $m=1$, Lemma~\ref{lem-pij} with $\alpha = \beta(1,\ldots,1)$ implies that the normalized binomial invariant Eq.~\eqref{eq-normalized-binomial} $\tilde{m}_{i,j}(\bm{p})$
is distributed 
by the Dirichlet (Beta) distribution 
\begin{align}
\label{eq-mij}
    \Pr\left( \tilde{m}_{i,j}(\bm{p}) \right) 
    = 
    D\left( \left( \frac{ 1 + \tilde{m}_{i,j}(\bm{p}) }{2}, \frac{1 - \tilde{m}_{i,j}(\bm{p})}{2} \right) \mid (\beta, \beta) \right).
\end{align}
Then taking partial integral 
over the interval
$-\tau_{0} \le \tilde{m}_{i,j}(\bm{p})  \le \tau_{1}$, we have \eqref{eq-p-value-mij}.
\end{proof}

Corollary~\ref{cor-mij} justifies use of the normalized binomial invariant $\tilde{m}_{i,j}(\bm{p})$, when we wish to statistically test whether the (unnormalized) binomial invariant
$\prod_{k=1}^{m}p_{i_k} - \prod_{k=1}^{m}p_{j_k}$
is sufficiently close to zero.
Note that 
the probability 
$\Pr\left( -\tau_{0} \le \tilde{m}_{i,j}(\bm{p})  \le \tau_{1} \mid \beta \right)$ in 
Corollary~\ref{cor-mij}
is essentially independent from 
a particular choice of $i,j$ -- no index $i,j$ at its right hand side of \eqref{eq-p-value-mij}.
This statistical property of the normalized binomial invariant $\tilde{m}_{i,j}(\bm{p})$
is quite useful, when we test whether $\tilde{m}_{i,j}(\bm{p})$ is near zero, as the p-value of such null hypothesis depends only on the choice of some threshold $(\tau_{0},\tau_{1})$. 
In other words, an appropriate threshold $(\tau_{0}, \tau_{1})$ to be associated to a fixed p-value for 
the other form, such as unnormalized binomial invariants $\prod_{k=1}^{m}p_{i_k} - \prod_{k=1}^{m}p_{j_k}$,
would be affected by 
potential variance of the probability vector $p_{k}$.

\section{Algorithm for ASA-D on the corpus}
\label{app:asad_corpus}

This appendix documents the procedure of ASA-D (Algebraic Signature Analysis for Discovery) that produced the rank-one SVO sub-tensors reported in Section~\ref{sec:linguistic}. 
The pipeline takes a corpus as input and returns a catalogue of $1{,}889$ near rank-one sub-tensors.
We describe it in five stages: data extraction and tensor
construction, pre-processing, MIC detection, sub-tensor assembly, and filtering.
All parameter values stated below are the confirmed settings used for the reported results.
\subsection{Data extraction and tensor construction}
\label{app:data}

We extracted subject--verb--object triples from the Dependency-parsed Common Crawl corpus \citep[DepCC;][]{panchenko-etal-2018-building}, 
the largest dependency-annotated English corpus available, 
comprising approximately $365$ million documents and $14.3$ billion sentences. 
Co-occurrence counts of lemmas annotated as \texttt{nsubj}, \texttt{root}, and \texttt{dobj} were accumulated into an integer tensor $T \in \mathbb{Z}_{\geq 0}^{500 \times 500 \times 500}$, retaining the top $500$ lemmas for each axis by frequency.
We denote each axis of the tensor $S$ for subject, $V$ for verb and $O$ for object, and their elements $s\in S, v\in V, o\in O$, respectively.
The tensor contains $2.3$ billion triplets with $64.7$ million nonzero cells.
The index-to-lemma maps for the three axes are stored alongside the tensor.
$S\times V$ slices fixing an object word $o$, $M_o := T[\cdot,\cdot,o] \in \mathbb{Z}_{\geq 0}^{500\times 500}$, are the unit of all subsequent processing, and each slice is treated independently.

\subsection{Pre-processing}
\label{app:preproc}
 
Raw co-occurrence counts span several orders of magnitude and follow the Zipf law (see Section~\ref{subsec:setup}). 
We work throughout in the log domain, adding one to each raw count to keep zero counts finite,
$ L_o := \log(1 + M_o)$ for $M_o = T[\cdot,\cdot,o] \in \mathbb{Z}_{\geq 0}^{500\times 500}$.
This transformation is applied to every slice before any minor is evaluated.

\subsection{MIC detection}
\label{app:mic}

For each $S\times V$ slice we identify the pairs of cells whose normalised $2\times 2$ minor vanishes. 
Detection proceeds on the cells carrying positive association, the others being treated as structural zeros.
 
We define structural zeros through the sign of the pointwise mutual information (PMI).
Writing $r_s = \sum_v M_o[s,v]$, $c_v = \sum_s M_o[s,v]$, and $N_o = \sum_{s,v} M_o[s,v]$, the independence model predicts $E_o[s,v] := r_s c_v / N_o$ for all $s\in S$ and $v \in V$. 
In contrast, a cell $(s,v,o)$ with  $M_o[s,v] \leq E_o[s,v]$, i.e., that has a negative PMI value, is defined as a structural zero here that reflects linguistically inadmissible combination of words.
Thus, we are only interested in the complementary cells that have a positive PMI (PPMI).
We additionally require a minimum count to suppress low-count noise on top of the PPMI condition forming the \emph{active support}:
\begin{equation}
  \label{eq:active}
  M_o[s,v] > E_o[s,v]
  \quad\text{and}\quad
  M_o[s,v] \geq \tau,
  \qquad \tau = 3 .
\end{equation}
 
On the active support we evaluate minors of the log-transformed slice $L_o$.
For two subjects $s_1, s_2$ sharing at least two active verbs, and 
for any two such verbs $v_1, v_2$, 
we form the symmetric normalised minor
\begin{equation}
  \label{eq:normalised_minor}
  m(s_1,s_2,v_1,v_2) \;=\;
  \frac{L_o[s_1,v_1]\,L_o[s_2,v_2] - L_o[s_1,v_2]\,L_o[s_2,v_1]}
       {L_o[s_1,v_1]\,L_o[s_2,v_2] + L_o[s_1,v_2]\,L_o[s_2,v_1]} ,
\end{equation}
which lies in $(-1,1)$ and vanishes exactly when the four log-counts are
proportional, that is when the corresponding $2\times 2$ block of $L_o$ has
rank one. 
A quadruple $(s_1,s_2,v_1,v_2)$ is recorded as a \emph{MIC hit} for object
$o$ when $|m| \leq \varepsilon$ with $\varepsilon = 0.05$. 
The output is the set of all such hits across all objects, indexed by $o$ for slice-wise retrieval in the next stage.

\subsection{Sub-tensor assembly}
\label{app:assembly}

Within each object $o$, the MIC hits define a structure on subjects and verbs:
each subject pair $\{s_1,s_2\}$ carries the set of verb pairs with which it forms a vanishing minor. 
From this structure we assemble candidate rank-one sub-tensors in four sub-steps.

\subsubsection{Seed pairs.}
We choose subject pairs as seeds by two complementary criteria. 
\begin{enumerate}
    \item The \emph{principal} seeds are the subject pairs of highest normalised overlap $|\,V_{s_1}\cap V_{s_2}\,| \,/\, |\,V_{s_1}\cup V_{s_2}\,|$, where $V_s$ is the set of verbs in which $s$ participates in a hit, subject to sharing at least ten verbs; we keep the top twenty. 
    \item The \emph{rare} seeds are the subject pairs that share at least three verbs but have the smallest global frequency sum, above a floor that excludes pure noise; we keep the twenty rarest. The rare seeds target low-frequency structure that frequency-driven methods overlook.
\end{enumerate}

\subsubsection{Verb core.}
For each seed pair $(s_i,s_j)$ to the fixed $o$, the shared verb pairs $(v_k, v_l) $ where the normalized binomial $m(s_i,s_j,v_k,v_l) $ vanishes, are read from the list of MIC hits.
We construct a graph with these verb pairs as edges and their constituent verbs as vertices, and enumerate the maximal cliques of size at least three. 
Each such clique is a set of verbs mutually exchangeable with respect to the seed subjects; 
we call it a \emph{verb core} $V_{core}$, which is by definition a complete graph on its vertices.

\subsubsection{Subject expansion.}
Given a verb core $V_{core}$, we admit further subject words $s_p$ for $p\neq i,j$ beyond seed subjects when $s_p$ together with either of $s_i$ or $s_j$ makes MIC with many verbs in $V_{core}$ above a certain density.
For a candidate subject $s_p$, an edge $\{v_k,v_l\}$ is \emph{covered} when $s_p$ paired with $s_i$ or with $s_j$ forms a vanishing minor on that verb pair, that is when $\{v_k,v_l\}$ appears among the recorded MIC hits of $(s_p,s_i)$ or $(s_p,s_j)$. 
We seek the subjects whose covered edges fill the verb core densely:
the fraction of covered verb-pair edges must be at least $0.80$ and 
the fraction of covered verb vertices at least $0.95$, making sure that adding $s_p$ will not destroy the preexisting clique.
At most fifteen subjects are added per core to keep a set of words manually traceable. 
For the principal seeds, the resulting blocks are then merged when their verb sets have Jaccard similarity at least $0.80$, which consolidates near-duplicate cores arising from different but overlapping seeds; rare-seed blocks are kept separate with intention to detect a very local structure.
This extended pair of sets $(S', V')$ where $S', V'$ are subsets of the full subject and verb vocabularies is called a $\emph{subject-verb block}$

\subsubsection{Object extension.}
Each subject-verb block $(S',V')$ is now extended along the object axis by collecting the objects whose slice over $(S',V')$ is proportional, cell by cell, to that of the seed object. 
In other words, the cell-wise ratio between the two vectorized $S'\times V'$ slices at the seed object and a candidate object is constant.
We measure the deviation of the candidate slice from exact proportionality to the seed slice in the log domain, defined by
\begin{equation}  \label{eq:norm_lr_std}
  \overline{\sigma}_o := \operatorname{std}
  \bigl\{\, \log \tfrac{L[s,v,o]}{L[s,v,o_{\mathrm{seed}}]}
  \;:\; (s,v)\in S'\times V' \,\bigr\}.
\end{equation}
We admit a candidate object $o$ if $\overline{\sigma}_o \leq \sigma_{\max}$.
If two slices are exactly parallel, the cell-wise log-ratios are identical
and $\overline{\sigma}_o = 0$.
To ensure that this dispersion is estimated from reliable counts rather than from sampling noise, we impose the additional condition that the co-occurrence mass of $(S',V')$ at the candidate object is at least a minimum count.
For principal-seed blocks we use $\sigma_{\max}=0.35$ and minimum count $5$; for rare-seed blocks $\sigma_{\max}=0.40$ and minimum count $3$.
We apply the procedure with two parameter settings because the two seed types address frames of different frequency.

The result is a three-way block $(S',V',O')$ over the selected objects.
Running this assembly with every object in turn as the seed object, over all $500$ objects, produces $12{,}782$ candidate sub-tensors.
 
\subsection{Filtering and selection}
\label{app:filtering}

The assembly produces $12{,}782$ candidate sub-tensors. 
The same structure can be recovered from several seed objects, so the output list contains duplicates, and the assembly criteria (MIC hits and the dispersion of
\eqref{eq:norm_lr_std}) do not directly test rank-one quality on the final
block. 
We therefore score each candidate by the metrics of Section~\ref{app:metric-definition}, the weighted rank-one fit $r_1$, the lift, and the NMF recall, and retain those that pass fixed thresholds.

A candidate is retained when $r_1 \geq 0.97$, its mean lift is at least $50$,
its dispersion satisfies $0.001 < \overline{\sigma}_\rho \leq 0.25$, and it
spans at least three subjects and three objects. 
The reason we impose a lower bound on $\overline{\sigma}_\rho$ is to exclude
trivial single-object cores: 
when no object beyond the seed is admitted, the dispersion is taken over the seed alone and is identically zero.
We further reject any block whose subject set contains a function word, 
since such words enter many frames for purely syntactic reasons.
These criteria yield $1{,}889$ sub-tensors drawn from $453$ distinct object words.

For the discussion in Section~\ref{sec:linguistic} we rank the retained sub-tensors by a composite score that increases with lift, rank-one fit, dispersion tightness, and object extent, and select five examples from the top that span distinct semantic domains. 

\subsection{Permutation experiment}
\label{app:permutation}

The null hypothesis is that the recovery is explained by the per-slice
distribution of values alone, independent of how those values are arranged
across $(s,v)$ pairs.

For each object index $o \in \{0, \dots, 499\}$ we take the $S\times V$ slice
$T[:,:,o] \in \mathbb{Z}_{\geq 0}^{500 \times 500}$ and apply a uniformly random
permutation to all $250{,}000$ cells within that slice.
Thus, only the per-slice total $\sum_{s,v} T[s,v,o]$ is preserved.
The row sums, the column sums, and the location of the nonzero entries are all
destroyed.
We apply to the permuted tensor the same pipeline used for the real data.

Table~\ref{tab:permutation} reports the three stages of the pipeline on the real
data and on the permutation null.

\begin{table}[htbp]
\centering
\caption{Pipeline output on the real tensor and on the permutation null
(seed $0$). Destroying the joint association removes $99.0\%$ of the cores at
extraction and all of the candidates after filtering.}
\label{tab:permutation}
\begin{tabular}{lrr}
\toprule
Stage & Real data & Permutation null \\
\midrule
MIC hits           & $410{,}117{,}056$ & $32{,}706{,}603$ \\
Cores (extraction)      & $12{,}782$            & $128$ \\
Final candidates        & $1{,}889$             & $0$ \\
\bottomrule
\end{tabular}
\end{table}

The number of MIC hits does not collapse under permutation, because a vanishing
$2 \times 2$ minor can occur by chance among small shuffled values.
The collapse appears at the extraction stage, where the core count falls from
$12{,}782$ to $128$, and is complete after filtering, where the candidate count
falls from $1{,}889$ to $0$.
The $128$ residual cores are all confined to a single object slice
($O\text{-size} = 1$), all have zero spread in the normalized log-ratio along the
object axis.

\section{Metric definitions}
\label{app:metric-definition}

We provide definitions and supplemental comments on the three metrics
used in Section~\ref{sec:linguistic}.

\vspace{0.3cm}
\textbf{Rank-1 fit} quantifies how well the extracted sub-tensor
$T[S', V', O']$ is approximated by a rank-1 tensor
$\hat{T} = \mathbf{u}\otimes\mathbf{v}\otimes\mathbf{w}$,
defined as:
\[
\mathrm{Rank\text{-}1\ fit} = 1 -
\frac{\|T[S',V',O'] - \hat{T}\|_F^2}{\|T[S',V',O']\|_F^2},
\]
where $\hat{T}$ is obtained by alternating optimization
over the three factor vectors $\mathbf{u}\in\mathbb{R}^{|S'|}$,
$\mathbf{v}\in\mathbb{R}^{|V'|}$, $\mathbf{w}\in\mathbb{R}^{|O'|}$,
first initialized via the leading singular vectors of the
mode-3 unfolding of $T[S',V',O']$, and then refined by
alternately updating each factor vector with the other
two fixed until convergence.
A value of $100\%$ indicates that the MIC condition holds
exactly across all cells in the sub-tensor.

\vspace{0.3cm}
\textbf{Lift} measures the degree to which the object words in a sub-tensor
are \emph{selectively} associated with the frame,
defined as the mean ratio of the within-frame object frequency
to its global frequency across the full corpus:
\[
  \mathrm{Lift} \;=\;
  \frac{1}{|O'|}\sum_{o \in O'}
  \frac{P(o \mid s \in S',\, v \in V')}{P(o)}.
\]
A lift of $1$ indicates no enrichment; values well above $1$
signal that the object words co-occur with the extracted
subject--verb pairs far more often than chance.

\vspace{0.3cm}
\textbf{NMF recall} measures the ability of standard matrix factorization
to recover the same local structures as ASA-D.
We apply non-negative matrix factorization with $K = 100$ components
to the full $500 \times 500$ slice matrix at each object value $o \in O'$.
For each extracted sub-tensor, NMF recall is defined as the
mean over $O'$ of the best-component recall:
\begin{equation}
  \mathrm{NMFrecall} \;=\;
  \frac{1}{|O'|}\sum_{o \in O'}
  \max_{k}\,
  \mathrm{Recall}_{S,k}(o)\times \mathrm{Recall}_{V,k}(o),
\end{equation}
where $\mathrm{Recall}_{S,k}(o)$ is the fraction of $S'$ appearing
in the top-$|S'|$ subjects of NMF component $k$ at slice $o$,
and analogously for $V$.
A value of $100\%$ would mean that some single NMF component
simultaneously recovers all subjects and verbs in the extracted sub-tensor.

\section*{Acknowledgments}
This work was supported by JSPS KAKENHI Grant Numbers JP22K11932, JP24KJ1202, JP26H02527, and JST CREST JPMJCR23P4.

\bibliographystyle{plainnat}
\bibliography{ref}

\end{document}